%% file: example_paper.tex
\documentclass{article}

\usepackage{microtype}
\usepackage{graphicx}
\usepackage{subcaption}
\usepackage{booktabs} 
\usepackage{subcaption}
\usepackage{hyperref}

\newcommand{\varphih}{\widehat{\varphi}}
\newcommand{\Yh}{\widehat{Y}}
\newcommand{\Zh}{\widehat{Z}}

\usepackage[accepted]{icml2026}
\include{Commands/Packages}
\include{Commands/math_commands}

\usepackage{amsmath}
\usepackage{amssymb}
\usepackage{mathtools}
\usepackage{amsthm}
\usepackage{xcolor} 
\usepackage{hyperref}
\usepackage{url}
\usepackage{wrapfig}
\usepackage[export]{adjustbox}
\usepackage{xcolor}
\usepackage{subcaption}
\usepackage{amsthm}
\usepackage{booktabs}
\usepackage{multirow}

\PassOptionsToPackage{capitalize,noabbrev}{cleveref}
\usepackage{cleveref}
\crefname{equation}{Eq.}{Eqs.}
\crefname{figure}{Fig.}{Figs.}
\crefname{section}{Sec.}{Secs.}
\crefname{appendix}{App.}{Apps.}
\crefname{table}{Tab.}{Tabs.}
\crefname{theorem}{Thm.}{Thms.}
\crefname{lemma}{Lem.}{Lems.}
\crefname{assumption}{Assump.}{Assumps.}
\crefname{proposition}{Prop.}{Props.}
\crefname{corollary}{Cor.}{Cors.}
\crefname{definition}{Def.}{Defs.}
\crefname{remark}{Rmk.}{Rmks.}
\crefname{algocf}{Alg.}{Algs.}
\crefname{algorithm}{Alg.}{Algs.}

\theoremstyle{plain}
\newtheorem{theorem}{Theorem}[section]
\newtheorem{proposition}[theorem]{Proposition}
\newtheorem{lemma}[theorem]{Lemma}

\newtheorem{remark}[theorem]{Remark}

\usepackage[textsize=tiny]{todonotes}

\icmltitlerunning{Generalized Schr\"odinger Bridge on Graphs}

\begin{document}

\twocolumn[
  \icmltitle{Generalized Schr\"odinger Bridge on Graphs}



  \icmlsetsymbol{equal}{$\dagger$}

  \begin{icmlauthorlist}
    \icmlauthor{Panagiotis Theodoropoulos}{yyy}
    \icmlauthor{Juno Nam}{comp}
    \icmlauthor{Evangelos Theodorou}{yyy,equal}
    \icmlauthor{Jaemoo Choi}{yyy,equal}
  \end{icmlauthorlist}

  \icmlaffiliation{yyy}{Georgia Institute of Technology}
  \icmlaffiliation{comp}{Massachusetts Institute of Technology}

  \icmlcorrespondingauthor{Jaemoo Choi}{jchoi843@gatech.edu}
  \icmlcorrespondingauthor{Evangelos Theodorou}{evangelos.theodorou@gatech.edu}

  \icmlkeywords{Machine Learning, ICML}

  \vskip 0.3in
]



\printAffiliationsAndNotice{}  

\begin{abstract}
Transportation on graphs is a fundamental challenge across many domains, where decisions must respect topological and operational constraints. Despite the need for actionable policies, existing graph-transport methods lack this expressivity. They rely on restrictive assumptions, fail to generalize across sparse topologies, and scale poorly with graph size and time horizon. To address these issues, we introduce Generalized Schrödinger Bridge on Graphs (GSBoG), a novel scalable data-driven framework for learning executable controlled continuous-time Markov chain (CTMC) policies on arbitrary graphs under state cost augmented dynamics. Notably, GSBoG learns trajectory-level policies, avoiding dense global solvers and thereby enhancing scalability. This is achieved via a likelihood optimization approach, satisfying the endpoint marginals, while simultaneously optimizing intermediate behavior under state-dependent running costs. Extensive experimentation on challenging real-world graph topologies shows that GSBoG reliably learns accurate, topology-respecting policies while optimizing application-specific intermediate state costs, highlighting its broad applicability and paving new avenues for cost-aware dynamical transport on general graphs.
\end{abstract}


\input{Introduction}
\input{Methodology}
\input{Experiments}
\input{Discussion}

\bibliography{example_paper}
\bibliographystyle{icml2026}

\newpage
\begin{appendix}
\onecolumn
\input{Appendix}
\input{RelatedWorks}

\end{appendix}
\end{document}

%% file: Commands/Packages.tex
\usepackage{setspace}

\usepackage[utf8]{inputenc} 
\usepackage[T1]{fontenc}    
\usepackage{url}            
\usepackage{booktabs}       
\usepackage{amsfonts}       
\usepackage{nicefrac}       
\usepackage{microtype}      
\usepackage{amsthm}
\usepackage{caption}
\usepackage{subcaption}
\usepackage{graphics} 
\usepackage{epsfig} 
\usepackage{times} 
\usepackage{amsmath} 
\usepackage{amssymb}  
\usepackage{makeidx}
\usepackage{float}
\usepackage{balance}
\usepackage{booktabs}
\usepackage{bbm}
\usepackage{mathrsfs}
\usepackage{enumerate}
\usepackage{multicol}
\usepackage{algorithmic}
\usepackage{algorithm}
\usepackage{float}
\usepackage{placeins}

\usepackage{mathrsfs}
\usepackage[normalem]{ulem}
\usepackage{xr} 
\usepackage{multicol}
\usepackage{float}
\usepackage[nolist]{acronym}
\usepackage{xcolor}

\hypersetup{
  colorlinks = true,
  allcolors = siaminlinkcolor,
  urlcolor = siamexlinkcolor,
}
\colorlet{siaminlinkcolor}{green!50!black}
\colorlet{siamexlinkcolor}{red!50!black}

\usepackage{lipsum} 

\usepackage{setspace}

\usepackage{cleveref} 

\usepackage{hyperref}
\hypersetup{
    colorlinks,
    linkcolor={blue!50!black},
    citecolor={blue!50!black},
    urlcolor={blue!50!black}
}

\usepackage{cancel}
\usepackage{hyperref}
\usepackage{siunitx}

%% file: Commands/math_commands.tex
\usepackage{amsmath,amsfonts,bm}

\def\eqref#1{equation~\ref{#1}}

\def\1{\bm{1}}

\def\rd{{\textnormal{d}}}

\DeclareMathAlphabet{\mathsfit}{\encodingdefault}{\sfdefault}{m}{sl}
\SetMathAlphabet{\mathsfit}{bold}{\encodingdefault}{\sfdefault}{bx}{n}

\def\gA{{\mathcal{A}}}

\def\gE{{\mathcal{E}}}

\def\gG{{\mathcal{G}}}

\def\gL{{\mathcal{L}}}

\def\gW{{\mathcal{W}}}
\def\gX{{\mathcal{X}}}

\def\sP{{\mathbb{P}}}
\def\sQ{{\mathbb{Q}}}
\def\sR{{\mathbb{R}}}

\newcommand{\E}{\mathbb{E}}

\newcommand{\R}{\mathbb{R}}

\newlength\myindent

%% file: Introduction.tex
\vspace{-.6cm}
\section{Introduction}

\begin{figure}
\vspace{-.5 cm}
    \centering
    \includegraphics[width=\linewidth]{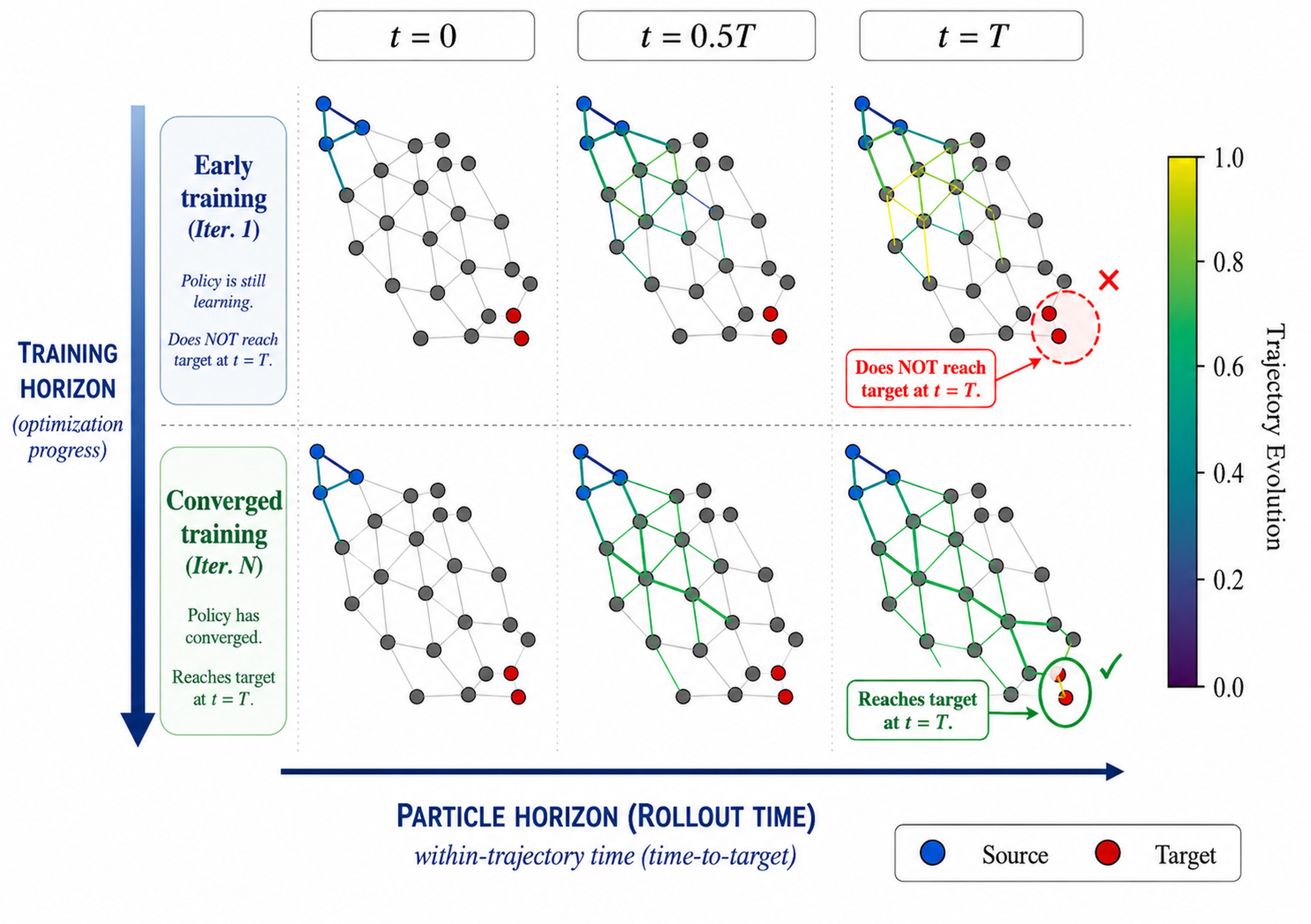}
    \caption{Learning graph routing policies from source (blue) to target (red) nodes, across time $t$ and training iterations. Edge intensity and color follows the trajectory-evolution colormap, illustrating how the learned policy progressively reallocates flow from the source region toward the target, as training converges.}
    \label{fig:time_ev}
\end{figure}

\begin{table*}[t]
\vspace{-.15cm}
\centering
\caption{Comparison of the Generalized Schrödinger Bridge problem in continuous space $\sR^d$, and arbitrary discrete space $\gX$.}
\label{tab:gsb_p}
\resizebox{\textwidth}{!}{%
\begin{tabular}{ccc}
\toprule
\multicolumn{1}{c}{\textbf{}} &
\multicolumn{1}{c}{\textbf{Continuous state space $\mathbb{R}^d$}} &
\multicolumn{1}{c}{\textbf{Discrete State Space $\mathcal{X}$}} \\
\midrule
Reference Dyn. $p^r$ &
$dX_t=r_t(X_t)\,dt+\sigma_t\,dW_t,\; X_0\sim\mu$ &
CTMC $(X_t)$ with transition rate $(r_t),\; X_0\sim\mu$ \\
Controlled Dyn. $p^u$ &
$dX_t=(r_t+\sigma_t u_t)\,dt+\sigma_t\,dW_t,\; X_0\sim\mu$ &
CTMC $(X_t)$ with transition rate $(u_t),\; X_0\sim\mu$ \\
\midrule
\multirow{1}{*}{\centering \shortstack{GSB}} &
\multicolumn{2}{c}{$\underset{X\sim p^u}{\E}\!\left[
        \int_0^1 f_t(X_t, p^u_t)\,\rd t
        \right]
        + \mathrm{KL}(p^u \,\|\, p^r)$} \\
\midrule
Continuity Equation &
$\partial_t p^u_t
    = - \nabla \cdot \bigl( (r_t + \sigma_t u_t)\, p^u_t \bigr)
      + \frac{\sigma_t^2}{2}\, \Delta p^u_t$ &
$\partial_t p_t^u(x)=\sum_{z\in\mathcal{X}} u_t(x,z)\, p_t^u(z)$ \\
\midrule
Iterative Proportional Fitting & \cref{eq:conti_ipf_forward,eq:conti_ipf_backward} & \cref{eq:fwd_ipf,eq:bwd_ipf}\\
Temporal Difference & Eq. (16) \citep{liu2022deepgeneralizedschrodingerbridge} & \cref{eq:fwd_td,eq:bwd_td} \\
\bottomrule
\end{tabular}
}
\end{table*}

Graphs are a ubiquitous abstraction for complex systems \citep{easley2010networks}, providing a common representation via networks for the state space of diverse domains such as supply–demand systems \citep{zhang2003supplychain}, road and traffic infrastructure \citep{chiu2011dta_primer}, communication and social networks \citep{newman2003complex}, power grids \citep{pagani2013powergrid}, and even discrete models of molecular kinetics \citep{husic2018msm}.
In these settings, nodes represent discrete states, such as entities, objects, or data points, while edges encode feasible transitions between states, together with associated costs, often subject to physical constraints including capacity limits, congestion, and throughput \citep{bertsekas1998networkopt}.

Graph transportation problems are naturally posed as probability distributions matching problems over nodes (e.g., supplies and demands). Therefore, optimal transport (OT) on graphs is typically used to offer a principled method of transporting the mass \citep{peyre2017computational}. These frameworks usually output a single static coupling that matches the source and target distributions \citep{essid2018quadratically, peyre2017computational}. While this coupling specifies how much mass should move between nodes in aggregate, it does not describe how mass should move over time, nor does it yield an executable control policy on the graph \citep{chizat2018unbalanced}. Consequently, intermediate trajectories are not optimized or controlled at deployment \citep{oconnor2022otc}.

In real deployments, transportation is inherently time-dependent; hence, it is important to specify how mass moves through the network over time, not just a static plan \citep{chiu2011dta}. 
The core challenge is to construct a feasible, topology-respecting evolution that controls intermediate distributions while minimizing network costs and accommodating operational constraints such as capacities and time variation \citep{peyre2017computational, koch2014timevaryingflows}. 
This motivates dynamical formulations that optimize time-indexed flows and explicitly model distributional evolution on network domains \citep{burger2023dynamic}. 
In this vein, \citet{chow2022dynamical} introduced a dynamic Schr\"odinger bridge method to construct stochastic, time-indexed flows on graphs by solving Hamiltonian flows in the probability simplex over all nodes. 
However, in practice, existing dynamical approaches can be computationally expensive at scale and fragile on large, sparse networks, often needing to trade accuracy for tractability, implementing approximate schemes \citep{arque2022approximate, facca2024efficient}.

In this work, we introduce \textbf{Generalized Schr\"odinger Bridge on Graphs} (GSBoG), an iterative and scalable framework for learning finite-horizon transport policies on general graphs.
Closest to our setting is the dynamical SB on graphs \citep{chow2022dynamical}; however, rather than solving global time expanded flows over the entire graph, our GSBoG adopts a particle-based and data-driven formulation, which alleviates the scalability challenges of existing graph-based approaches.
Specifically, we extend generalized Schr\"odinger bridge (GSB) principles \citep{liu2022deep, chen2021likelihood, de2021diffusion, vargas2021solving} from continuous state spaces to graph-structured domains.
Table \ref{tab:gsb_p} compares side-by-side the GSB between the continuous $\sR^d$ and discrete state $\gX$ space.
Analogous to the continuous-state setting, GSBoG supports general running costs depending on both state and distribution, arbitrary source–target transport, and general graph topologies.
To the best of our knowledge, this work is the first to formulate GSB transport on graphs in a data-driven manner.
This setting is fundamentally different from recent discrete diffusion or SB approaches \citep{kim2024discrete, ksenofontov2025categorical, lou2024discrete}, which primarily focus on graph generation or discrete-state synthesis. 
In contrast, GSBoG addresses distributional transport from a given source to a target distribution on a fixed graph, as illustrated in Figure \ref{fig:time_ev}, while optimizing trajectory-level behavior while also accounting for application-specific cost functionals.

Precisely, we formulate a GSB problem on graphs based on continuous-time Markov chain (CTMC) dynamics. We derive discrete optimality conditions that parallel the continuous-state formulation, providing a principled graph-based analogue of GSBs. Building on this analysis, we develop a data-driven iterative proportional fitting algorithm with an associated temporal-difference objective, enabling transport between arbitrary distributions on graphs while minimizing general cost functionals. Finally, we demonstrate the broad applicability of our proposed framework through representative examples on challenging real-world graphs, enabling cost-aware dynamical transport on general networked systems.
Our contributions are summarized as follows:
\begin{itemize}
\vspace{-.3cm}
    \item We introduce GSBoG, a first data-driven generalized Schr\"odinger bridge on \emph{arbitrary graphs}, achieving simultaneously endpoint marginal distribution matching and optimization of the intermediate trajectory under general running costs.
    \item We formulate graph transport as controlled particle motion under CTMC dynamics and derive tractable objectives via path-space characterization.
    \item Extensive experimentation that GSBoG learns topology-respecting transport policies with accurate endpoint matching and improved intermediate-cost behavior compared to baseline methods.
\end{itemize}

%% file: Methodology.tex
\section{Generalized Schr\"odinger Bridge Problem}
\label{sec:prelim_deepgsb}


\paragraph{Notations.} 
Let $\mathcal{X}$ be a continuous state space (assumed to be a Polish space), and let 
$\mu, \nu \in \mathcal{P}(\mathcal{X})$ denote the source and target probability measures on $\mathcal{X}$, respectively.
Here, $\mathcal{P}(\mathcal{X})$ denotes the set of probability measures on $\mathcal{X}$.
We consider a reference stochastic dynamics given by the stochastic differential equation (SDE)
\begin{align}
\label{eq:ref_dyn}
    \rd X_t = r_t(X_t)\,\rd t + \sigma_t\,\rd W_t, 
    \quad X_0 \sim \mu,
\end{align}
where $r : [0,1]\times\mathcal{X}\to\mathcal{X}$ is a prescribed \emph{base drift}, 
$\sigma : [0,1]\to\mathbb{R}_{>0}$ is a time-dependent noise schedule, 
and $\{W_t\}_{t\in[0,1]}$ is a standard Brownian motion.

\paragraph{Generalized Schr\"odinger Bridge (GSB) Problem}
The problem we consider in this work is a \emph{generalized} setting of Schrödinger Bridge problem (GSB; \citet{chen2014optimalsteeringinertialparticles}).
The transportation problem seeks a control function 
$u : [0,1]\times\mathcal{X}\to\mathcal{X}$ such that the controlled dynamics
\begin{align}
\label{eq:ctrl_dyn}
    \rd X_t 
    = \bigl(r_t(X_t) + \sigma_t u_t(X_t)\bigr)\,\rd t + \sigma_t\,\rd W_t,
    \ \ X_0 \sim \mu,
\end{align}
transports the source distribution $\mu$ at $t=0$ to the target distribution $\nu$ at $t=1$, i.e., $X_1 \sim \nu$, while the dynamics are influenced by a state- and marginal-dependent running cost. 
Let $f : [0,1]\times \mathcal{X}\times\mathcal{P}(\mathcal{X}) \to \mathbb{R}$ denote a cost function that may depend on the time, the state, and the time-$t$ marginal distribution.
The GSB seeks the controlled probability path $p^u$ that solves
\begin{align}
        &\min_{p^u} \; 
        \underset{X\sim p^u}{\E}
        \!\left[
        \int_0^1 f_t(X_t, p^u_t)\,\rd t
        \right]
        + \mathrm{KL}(p^u \,\|\, p^r), \label{eq:gsb_objective} \\
        & \text{s.t.}  \ \  \partial_t p^u_t
    \!=\! - \nabla \!\cdot\! \bigl( (r_t + \sigma_t u_t)\, p^u_t \bigr) \!+\! \frac{\sigma_t^2}{2}\, \Delta p^u_t,  \label{eq:fpe} \\  
      \vspace{-10pt}
      &\qquad p^u_0 \!=\! \mu, \quad  p^u_1=\nu, \nonumber
\end{align}
where the PDE in \cref{eq:fpe} is the Fokker--Planck Equation (FPE) describing the time evolution of the controlled path marginal density $p^u_t$.
Note that the Kullback--Leibler (KL) divergence $\mathrm{KL}(p^u\|p^r)$ regularizes the controlled dynamics toward the reference process, yielding an entropic optimal transport problem over path space. By applying Girsanov's theorem \citep{sarkka2019applied}, the problem admits an equivalent control-theoretic formulation:
\begin{align*}
\begin{split}
    \min_{u} 
    \underset{X\sim p^u}{\E}\!\left[
    \int_0^1
    \!\frac12 \|u_t\|^2
    \!+\! f_t \; \rd t
    \right],\ \text{s.t. } \ (\ref{eq:fpe}), \ X_0\!\sim\! \mu, \ X_1 \!\sim\! \nu . 
\end{split}
\end{align*}
This formulation highlights the trade-off between control effort and task-dependent costs encoded by $f_t$.

\textbf{Hopf--Cole transform and generalized Schr\"odinger system}
From the SB theory, the optimal drift is given by $u_t = \sigma_t \nabla\log\varphi_t(X_t)$, where the functions $(\varphi, \varphih)$ are referred as the Schrödinger potentials and satisfy the following set of coupled PDEs
\begin{align}
\label{eq:gen_schrodinger_system}
    \begin{split}
        &\partial_t \varphi_t = -\nabla\varphi^\intercal_t r_t - \frac{\sigma^2_t}{2} \Delta\varphi_t + f_t \varphi_t , \quad \varphi_0 \varphih_0 = \mu, \\
        &\partial_t \varphih_t = -\nabla\cdot(\varphih_t r_t) + \frac{\sigma^2_t}{2}\Delta\varphih_t - f_t \varphih_t , \quad  \varphi_1 \varphih_1 = \nu.    
    \end{split}
\end{align}
Thus, learning $(\nabla \log\varphi, \nabla\log\varphih)$  suffices to solve the (generalized) SB problem.

\textbf{FBSDE Representation}
Rather than directly solving the coupled PDE system \cref{eq:gen_schrodinger_system},
we can obtain local solutions along high-probability paths through forward--backward stochastic differential equation (FBSDE) representation via It\^o's formula \citep{liu2022deepgeneralizedschrodingerbridge}.
For convenience, let us define along the forward trajectories
$Y_t := \log \varphi_t, \hat{Y}_t := \log \hat{\varphi}_t$, and their gradients $Z_t := \sigma_t \nabla \log \varphi_t,\    \hat{Z}_t := \sigma_t \nabla \log \hat{\varphi}_t$.
Then a representative form of the corresponding FBSDE is given by 
\begin{align}
dX_t &= \big(r_t(X_t) + \sigma_t Z_t\big)\,dt + \sigma_t  dW_t, \quad X_0 \sim \mu, \label{eq:conti_fwd_X} \\ 
dY_t &= \Big(\tfrac12\|Z_t\|^2 + f_t \Big)\,dt + Z_t^\intercal dW_t, \label{eq:conti_forward_Y} \\ 
d\Yh_t&= \Big(\tfrac12\|\Zh_t\|^2 + \nabla\cdot (\sigma \Zh_t - r_t) \nonumber \\ 
& \qquad + \Zh_t^\intercal Z_t - f_t \Big) \, dt + \Zh^\intercal_t dW_t, \label{eq:conti_forward_Yhat} 
\end{align}
Equivalently, we can obtain a similar triplet of FBSDEs for the time reversed dynamics $s:=1-t$, in which case $d\tilde X_s$ is controlled by the $\Zh$ (see \citet{liu2022deepgeneralizedschrodingerbridge}).
Note that these expressions hold under the assumption that the base drift $b_t(x)$ is linear in the state variable $x$.

\textbf{gIPF Objective}
The coupled FBSDEs in \cref{eq:conti_fwd_X,eq:conti_forward_Yhat} provide a stochastic representation 
of the PDEs in \cref{eq:gen_schrodinger_system}.
Consequently, rather than solving the PDEs in the entire function space, it suffices to solve them locally around high probability regions characterized by the FBSDEs. In particular, notice that the endpoint likelihood can be cast as follows
\begin{align}
    \log \mu (X_0) \!=\!  
    \underset{{p^Z_{1|0}}}{\mathbb{E}} [\log \nu(X_1)] \!-\! \underset{{p^Z_{\cdot|0}}}{\mathbb{E}} \left[ \int^1_0  \rd Y_t +  \rd \Yh_t \right] ,
\end{align}
Thus, to maximize the likelihood, we should minimize the terminal term. Hence, expanding \cref{eq:conti_forward_Y,eq:conti_forward_Yhat}, we obtain the gIPF objective updating the backward policy $\Zh_t$:
\begin{align} \label{eq:conti_ipf_forward}
&\gL^Z_{\mathrm{IPF}}(\Zh)
:= \mathbb{E}_{\mu} \mathbb{E}_{p^Z_{\cdot|0}} \left[ \int^1_0  \rd Y_t +  \rd \Yh_t \right] \\
& = \E_{p^Z}\left[ \int_0^1 \tfrac12 \|\Zh_t\|^2 + \Zh_t^\intercal Z_t +\nabla\cdot(\sigma \Zh_t-r_t)\rd t \right].\nonumber
\end{align}
Similarly, the gIPF loss for updating $Z_t$ is:
\begin{align} \label{eq:conti_ipf_backward}
&\gL^{\Zh}_{\mathrm{IPF}}(Z)
:= \mathbb{E}_{\nu} \mathbb{E}_{p^{\Zh}_{\cdot|1}} \left[ \int^1_0  \rd Y_t +  \rd \Yh_t \right] \\
& = \E_{p^{\Zh}}\left[ \int_0^1 \tfrac12 \|Z_t\|^2 + Z_t^\intercal \Zh_t +\nabla\cdot(\sigma Z_t-r_t)\rd t \right].\nonumber
\end{align}
Alternating minimization of \cref{eq:conti_ipf_forward,eq:conti_ipf_backward} can be viewed as successive KL projections between the corresponding paths, converging to the Schr\"odinger bridge \citep{vargas2021machine, de2021diffusion}.

\paragraph{Temporal-Difference Objective}
However, in the generalized setting notice that the additional state cost terms in \cref{eq:conti_ipf_forward,eq:conti_ipf_backward} cancel out in the naive IPF aggregation.
As a result, the IPF loss alone does not explicitly enforce consistency with the state- and distribution-dependent cost.  
To explicitly inject the running cost into learning, the potential dynamics are discretized, and via a temporal difference loss \citep{liu2022deepgeneralizedschrodingerbridge}, deviations between adjacent increments are penalized forcing the learned potentials to be locally consistent with the cost-augmented dynamics.
In practice, the gIPF algorithm alternates between minimizing the IPF objective and the TD objective, yielding an iterative scheme that enforces both marginal constraints and consistency with respect to the running cost.

\section{Methodology}
\label{sec:method}
In this section, we introduce \textbf{Generalized Schr\"odinger Bridge on Graphs} (GSBoG), a framework for optimal transport on a given directed graph $\mathcal{G} = (\mathcal{X}, \mathcal{E})$.
Here, $\mathcal{X} := \{1,\dots,n\}$ denotes a finite set of nodes, and $\mathcal{E} \subset \mathcal{X} \times \mathcal{X}$ denotes a set of directed edges, where $(x,y)\in\mathcal{E}$ represents a directed transition from node $x$ to node $y$. Given a source distribution $\mu$ and a target distribution $\nu$ over the node set $\mathcal{X}$, our goal is to construct a controlled stochastic process on $\mathcal{G}$ that transports $\mu$ to $\nu$ while minimizing a generalized cost functional.
The proposed framework can be viewed as a discrete, graph-structured counterpart of generalized Iterative Proportional Fitting (gIPF) \citep{liu2022deep}, with general state- and distribution-dependent cost functionals extended to continuous-time Markov chains on graphs.

We emphasize that GSBoG addresses a fundamentally different problem from prior data-driven works, such as DDSBM \citep{kim2024discrete}, which focus on generative modeling of graph structures.
In contrast, our setting assumes a fixed graph topology and studies the problem of transporting probability mass over the given node space $\mathcal{X}$. To the best of our knowledge, this perspective has not been systematically investigated in the machine learning literature.

\subsection{GSB Problem on Graphs}
\label{sec:gsbog}

\paragraph{Continuous-Time Markov Chains (CTMCs)}
Let $(X_t)_{t\in[0,1]}$ be a stochastic process taking values in a finite state space $\mathcal{X}$.
The process is characterized by its \textbf{transition rate} matrix
$r = (r_t(y,x))_{x,y\in\mathcal{X},\, t\in[0,1]}$, defined by
\begin{align*}
    r_t(y,x)
    :=
    \lim_{h\to 0}
    \frac{\Pr(X_{t+h}=y \mid X_t=x) - \mathbf{1}_{\{y=x\}}}{h},
\end{align*}
where $\mathbf{1}_{\{\cdot\}}$ denotes the indicator function.
Intuitively, $r_t(y,x)$ specifies the instantaneous rate of transition from state $x$ to state $y$ at time $t$.

We consider CTMCs whose transitions are constrained by a directed graph $\mathcal{G}=(\mathcal{X},\mathcal{E})$.
Accordingly, the transition rate $r_t$ is required to satisfy
\begin{align} \label{eq:tran_rate} \begin{split} \begin{cases} &r_t(y,x) \geq 0, \quad \text{whenever } (x,y)\in \gX\times \gX, \\ &r_t(y,x) = 0, \quad \text{whenever } (x,y) \notin \gE, \\ &r_t(x,x) = -\sum_{y\neq x} r_t(y,x). \end{cases} \end{split} \end{align}
for all $x,y\in\mathcal{X}$ and $t\in[0,1]$.
The second condition enforces the graph structure by prohibiting transitions along non-existent edges, while the last condition ensures mass conservation, i.e., $\sum_{y} r_t(y,x)=0$ for all $x$.

\paragraph{Generalized Schr\"odinger Bridge on Graphs}
We consider the discrete analogue of \cref{eq:gsb_objective} on $\gG$.
Let $u_t$ denote a controlled transition rate satisfying the same structural constraints.
We denote by $p^r$ and $p^u$; the path measures induced by the base rate $r_t$ and the controlled rate $u_t$, respectively.
In the following, we formulate the GSB problem on graphs with respect to the controlled transition rate $u_t$, by explicitly expressing the KL divergence between path measures of CTMC, the resulting optimization problem is written as
\begin{align} 
\begin{split}\label{eq:gsb_obj}
    &\min_{u}\ \ \underbrace{\E_{p^u}\E_t \Bigg[ \sum_{y\neq X_t}\Big(u_t\log\frac{u_t}{r_t}-u_t+r_t\Big) (y, X_t) \Bigg]}_{=:\text{KL}(p^u || p^r)}  \\ 
& \qquad + \E_{p^u}\E_t \left[f_t(X_t,p_t)\right] 
\qquad \text{s.t. }\quad X_1\sim \nu,
\end{split}\\
&\qquad  \partial_t p_t^u(x)
    =
    \sum_{y\in\mathcal{X}} u_t(x,y) p_t^u(y), \quad X_0\sim \mu, \label{eq:CE}
\end{align}
where \cref{eq:CE} is the Continuity Equation (CE) that describes the time evolution of the controlled path $p^u_t$ on $\gG$, serving as the discrete analogue of the FPE in \cref{eq:fpe}.
Note that the continuous-state dynamics constraint \cref{eq:fpe} in \cref{eq:gsb_objective} is replaced with the discrete CE \cref{eq:CE}.
Concurrently, \citet{guo2026discrete} considered the same formulation and provided a similar Schr\"odinger Bridge analysis for CTMCs.

\subsection{Analysis of GSB on Graphs}
\label{sec:analysis}

We characterize the optimal controlled dynamics associated with the continuity equation and show that they admit a representation in terms of a time-dependent potential function $V_t$. This characterization naturally leads to a discrete analogue of the Hopf-Cole transform, which forms the basis of our subsequent developments.
\begin{theorem}[Dual representation]
\label{th:dual_form}
Under mild regularity assumptions, there exists a time-dependent function
$V : [0,1]\times\mathcal{X} \rightarrow \mathbb{R}$ such that the optimal probability path
$p_t^\star$ satisfies
\begin{align} 
\partial_t p_t^\star(x) &\!=\! \sum_{y\in \gX}r_t(x,y)\exp(-V_t(x)+V_t(y))p_t^\star(y), \nonumber \\ 
\partial_t V_t(x) &\!=\! \sum_{y\in \gX}r_t(y,x)\exp(-V_t(y)+V_t(x)) \!-\!f_t(x,p_t^\star),\label{eq:dual_pdes} 
\end{align}
with boundary conditions $p_0^\star=\mu$ and $p_1^\star=\nu$.
In particular, the optimal controlled transition rate $u_t^\star$ is given by
\begin{align}
\label{eq:optimality_V}
    u_t^\star(y,x)
    =
    r_t(y,x)\,
    \exp\bigl(-V_t(y)+V_t(x)\bigr).
\end{align}
\end{theorem}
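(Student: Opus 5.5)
The plan is to treat the problem in \cref{eq:gsb_obj,eq:CE} as a deterministic optimal control problem on the probability simplex. The state is $p_t$ and the control is the rate $u_t$. We first rewrite the objective as an integral of a function of $(p_t,u_t)$:
\begin{align*}
J(p,u)=\int_0^1 \sum_{x}p_t(x)\Big[\sum_{y\neq x}\Big(u_t\log\tfrac{u_t}{r_t}-u_t+r_t\Big)(y,x) + f_t(x,p_t)\Big]\rd t .
\end{align*}
We then form the Lagrangian by adjoining the continuity equation \cref{eq:CE} with a multiplier $V_t(x)$, with the sign chosen so that $V$ plays the role of $-\log\varphi$:
\begin{align*}
\mathcal{L}=J(p,u)+\int_0^1\sum_x V_t(x)\Big(\partial_t p_t(x)-\sum_y u_t(x,y)p_t(y)\Big)\rd t .
\end{align*}
The endpoint constraints $p_0=\mu$ and $p_1=\nu$ stay fixed, so there are no free boundary terms for $V$. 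The regularity assumptions we will state are: $r_t$ and $f_t$ continuous in $t$; $f$ differentiable in $p$ (or independent of $p$); and existence of an optimizer with $p^\star_t(x)>0$ on the interior. The last assumption lets us divide by $p_t$.

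\textbf{Step 1: optimality in $u$.} Since $u_t(x,x)=-\sum_{y\neq x}u_t(y,x)$, the multiplier term involving $u$ is
\begin{align*}
-\sum_{x}p_t(x)\sum_{y\neq x}u_t(y,x)\big(V_t(y)-V_t(x)\big).
\end{align*}
Differentiating pointwise in $u_t(y,x)$ for $(x,y)\in\gE$ gives
\begin{align*}
p_t(x)\Big[\log\tfrac{u_t(y,x)}{r_t(y,x)}-V_t(y)+V_t(x)\Big]=0 .
\end{align*}
On $p_t(x)>0$ this yields \cref{eq:optimality_V}. The integrand is strictly convex in $u$, so this stationary point is the minimizer. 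Off $\gE$ the formula holds trivially because $r_t=0$ there. Substituting \cref{eq:optimality_V} into \cref{eq:CE} gives the first equation of \cref{eq:dual_pdes}, after using $r_t(x,x)=-\sum_{y\neq x}r_t(y,x)$ to handle the diagonal term.

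\textbf{Step 2: optimality in $p$.} Integrate $\int V\partial_t p$ by parts in time to get $-\int p\,\partial_t V$; the boundary terms are constant because the endpoints are fixed. Then take the first variation in $p_t(x)$. We collect the following contributions:
\begin{itemize}
\item the entropy term at the optimal $u$;
\item $f_t(x,p_t)$, plus the mean-field correction $\sum_z p_t(z)\,\partial f_t(z,p_t)/\partial p_t(x)$, which we either absorb into the definition of $f$ or drop when $f$ depends only on the state;
\item $-\partial_t V_t(x)$;
\item $-\sum_y u_t(y,x)\,V_t(y)$, where the sum includes the diagonal.
\end{itemize}
Plugging in $u^\star=r e^{V_t(x)-V_t(y)}$, the identity $u\log(u/r)=u\,(V_t(x)-V_t(y))$ cancels exactly against the $V$-weighted terms. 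What remains is
\begin{align*}
\sum_{y\neq x}\big(r_t(y,x)-u^\star_t(y,x)\big)+f_t-\partial_t V_t(x)=0 .
\end{align*}
Using the row-sum convention, $\sum_{y\neq x}(r-u^\star)=\sum_{y}u^\star(y,x)-\sum_{y}r(y,x)\cdot 1$, with the diagonal terms matching. This rearranges to the second equation of \cref{eq:dual_pdes}, up to the sign convention for the diagonal. We will check that $\sum_y r_t(y,x)e^{V_t(x)-V_t(y)}$ with the diagonal term $r_t(x,x)$ reproduces exactly $-\sum_{y\neq x}(r-u^\star)$.

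\textbf{Main obstacle.} The formal Lagrangian computation is routine. The hard part is justifying that the multiplier $V$ exists, i.e.\ a Pontryagin / KKT statement for a problem with two-point boundary constraints, together with positivity of $p^\star$. We plan to handle this under the ``mild regularity'' hypothesis in one of two ways. The first is to invoke convexity: in the flux variables $m_t(y,x)=u_t(y,x)p_t(x)$ the problem is jointly convex when $f$ is convex or linear in $p$, so a Fenchel--Rockafellar dual with $V$ as the dual variable exists. The second, when $f$ is state-only, is to cite the discrete Hopf--Cole construction of \citet{chow2022dynamical} and \citet{guo2026discrete}. That construction sets $V=-\log\varphi$, with $\varphi$ solving the linear backward equation $\partial_t\varphi_t=-\sum_y r_t(y,\cdot)\varphi_t(y)+f\varphi_t$. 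Substituting this $\varphi$ turns the second equation into a direct identity, which makes the result hold by verification.
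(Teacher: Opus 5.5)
Your strategy is the paper's. You adjoin \cref{eq:CE} with a multiplier $V$, integrate $\int V\,\partial_t p$ by parts, use $u_t(x,x)=-\sum_{y\neq x}u_t(y,x)$ to rewrite the coupling term, minimize pointwise in $u$, and force the coefficient of $p_t(x)$ to vanish. As written, however, the signs do not close.

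\textbf{Step 1.} With the multiplier entering as $+\int\sum_x V_t(x)\bigl(\partial_t p_t(x)-\sum_y u_t(x,y)p_t(y)\bigr)dt$, your own stationarity condition $\log(u/r)-V_t(y)+V_t(x)=0$ gives $u^\star_t(y,x)=r_t(y,x)e^{V_t(y)-V_t(x)}$. That is \cref{eq:optimality_V} with $V$ replaced by $-V$. So this convention makes $V$ play the role of $+\log\varphi$, not $-\log\varphi$.

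\textbf{Step 2.} Here you substitute the statement's $u^\star=r\,e^{V_t(x)-V_t(y)}$. With that choice, $u\log(u/r)=u\,(V_t(x)-V_t(y))$ and the multiplier term $-(V_t(y)-V_t(x))u$ add up to $2u\,(V_t(x)-V_t(y))$ rather than cancel. If you stay consistent with Step 1, the cancellation does happen. But what remains is $\partial_t V_t(x)=\sum_{y\neq x}(r_t-u^\star_t)(y,x)+f_t=-\sum_y r_t(y,x)e^{V_t(y)-V_t(x)}+f_t$. That is the equation satisfied by $\log\varphi$, i.e.\ the negative of the target. Your proposed row-sum identity is also off by a sign: with the statement's $u^\star$ and $r_t(x,x)=-\sum_{y\neq x}r_t(y,x)$, one has $\sum_{y\neq x}(r_t-u^\star_t)(y,x)=-\sum_y r_t(y,x)e^{V_t(x)-V_t(y)}$.

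\textbf{Repair.} The fix is one line, and harmless because the theorem only asserts that some $V$ exists: adjoin the constraint with $-V$, as the paper does (equivalently, rename $V\mapsto-V$ at the end). The $u$-dependent integrand becomes $u\log(u/r)-u+(V_t(y)-V_t(x))u$, and stationarity gives \cref{eq:optimality_V}. After substitution, the coefficient of $p_t(x)$ is $\partial_t V_t(x)+\sum_{y\neq x}(r_t-u^\star_t)(y,x)+f_t$. Its vanishing is exactly $\partial_t V_t(x)=\sum_y r_t(y,x)e^{V_t(x)-V_t(y)}-f_t$. Your Hopf--Cole verification route already uses the correct convention $V=-\log\varphi$.

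\textbf{Comparison with the paper.} Once the sign is fixed, your argument is more careful than the paper's. The paper simply invokes strong duality, implicitly divides by $p_t(x)$, and requires the bracket to vanish as if the Lagrangian were linear in $p$. In doing so it silently drops the term $\sum_z p_t(z)\,\partial f_t(z,p_t)/\partial p_t(x)$. Three of your additions are what make the ``mild regularity'' step legitimate:
\begin{itemize}
\item the positivity assumption on $p^\star$;
\item the remark that the stated equation for $V$ is exact only for state-only $f$, or with $f$ read as a first variation;
\item the flux-variable convexity argument via the perspective function.
\end{itemize}
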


\begin{remark}
The result follows by considering the Lagrangian dual of the GSB objective
\cref{eq:gsb_obj}, where the continuity equation \cref{eq:CE} is enforced via a state-time-dependent adjoint variable $V_t$.
Optimizing the resulting dual functional yields the coupled system
\cref{eq:dual_pdes}. Alternatively, $V_t$ can be interpreted as the solution to a discrete Hamilton--Jacobi--Bellman (HJB) equation associated with a stochastic optimal control problem on the graph.
Further discussion is deferred to Appendix~\ref{sec:soc-app}.
\end{remark}

\begin{remark}
    Importantly, it is underlined that \cref{eq:optimality_V} implies for any $(x,y)\in\mathcal{X}\times\mathcal{X}$,
    if $r_t(y,x)=0$, then $u_t^\star(y,x)=0$.
    Consequently, $u_t^\star$ satisfies the same graph constraints as $r_t$.
\end{remark}


\begin{proposition}
\label{prop:hopf-cole}
We define the Hopf-Cole transformation on $\gG$ analogous to the dynamic counterpart as
\vspace{-5pt}
\begin{align}
\varphi(t,x) \coloneqq e^{-V(t,x)},
\qquad
\hat{\varphi}(t,x) \coloneqq \frac{p_t^\star(x)}{\varphi(t,x)} .
\label{eq:hopf-cole-discrete}
\end{align}
then the Schrödinger potentials $(\varphi_t,\hat{\varphi}_t)$ satisfy
\begin{align}
\label{eq:logphi}
\begin{split}
    \partial_t \varphi_t(x)
    &= - \sum_{y} r_t(y,x)\,\varphi_t(y)
       + f_t(x,p_t^\star)\,\varphi_t(x), \\
    \partial_t \hat{\varphi}_t(x)
    &= \sum_{y} r_t(x,y)\,\hat{\varphi}_t(y)
       - f_t(x,p_t^\star)\,\hat{\varphi}_t(x),
\end{split}
\end{align}
with $p_t^\star = \varphi_t \hat{\varphi}_t, \ \forall t\in[0,1]$, and in particular
$\nu = \varphi_1 \hat{\varphi}_1$.
\end{proposition}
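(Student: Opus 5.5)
The plan is to verify both equations in \cref{eq:logphi} by direct differentiation, using the coupled system \cref{eq:dual_pdes} and the optimal rate \cref{eq:optimality_V} from \cref{th:dual_form}. The factorization $p_t^\star=\varphi_t\hat\varphi_t$ holds by the definition \cref{eq:hopf-cole-discrete}. Evaluating it at $t=1$ and using the boundary condition $p_1^\star=\nu$ gives $\nu=\varphi_1\hat\varphi_1$; similarly $\mu=\varphi_0\hat\varphi_0$ at $t=0$. So the content of the proposition is the pair of evolution equations.

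\textbf{Step 1 (equation for $\varphi$).} Since $\varphi_t=e^{-V_t}$, the chain rule gives $\partial_t\varphi_t(x)=-\varphi_t(x)\,\partial_tV_t(x)$. Substituting the second line of \cref{eq:dual_pdes}, each summand $r_t(y,x)e^{V_t(x)-V_t(y)}\varphi_t(x)$ collapses to $r_t(y,x)\varphi_t(y)$. The $f$ term becomes $+f_t(x,p_t^\star)\varphi_t(x)$. The diagonal term $y=x$ contributes $r_t(x,x)\varphi_t(x)$, which is exactly the diagonal term required in $-\sum_y r_t(y,x)\varphi_t(y)$. This reproduces the first line of \cref{eq:logphi}, a discrete backward Kolmogorov equation with killing rate $f$.

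\textbf{Step 2 (equation for $\hat\varphi$).} Write $\hat\varphi_t=p_t^\star/\varphi_t$, so that
\begin{align*}
\partial_t\hat\varphi_t(x)=\frac{\partial_tp_t^\star(x)}{\varphi_t(x)}-\hat\varphi_t(x)\,\frac{\partial_t\varphi_t(x)}{\varphi_t(x)} .
\end{align*}
I will use the continuity equation \cref{eq:CE} with the optimal rate $u^\star$. Splitting into off-diagonal and diagonal parts gives the following.
\begin{itemize}
\item The off-diagonal part $\sum_{y\neq x}u_t^\star(x,y)p_t^\star(y)/\varphi_t(x)$ equals $\sum_{y\neq x}r_t(x,y)\hat\varphi_t(y)$, because $u_t^\star(x,y)=r_t(x,y)\varphi_t(x)/\varphi_t(y)$ off the diagonal.
\item The diagonal part is $u_t^\star(x,x)\hat\varphi_t(x)$.
\end{itemize}
The second term is expanded with Step 1. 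It gives $\hat\varphi_t(x)\sum_{y\neq x}r_t(y,x)\varphi_t(y)/\varphi_t(x)+r_t(x,x)\hat\varphi_t(x)-f_t\hat\varphi_t(x)$.

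\textbf{Main obstacle: the diagonal.} Read literally at $y=x$, \cref{eq:optimality_V} would give $u_t^\star(x,x)=r_t(x,x)$. That value leaves a spurious term $\hat\varphi_t(x)\sum_{y\neq x}r_t(y,x)\varphi_t(y)/\varphi_t(x)$. The correct reading is that \cref{eq:optimality_V} specifies off-diagonal rates only. The diagonal is then fixed by the structural constraint \cref{eq:tran_rate}:
\begin{align*}
u_t^\star(x,x)=-\sum_{z\neq x}r_t(z,x)\frac{\varphi_t(z)}{\varphi_t(x)} .
\end{align*}
With this, the diagonal contribution from $\partial_tp^\star_t$ cancels exactly the off-diagonal sum coming from $-\hat\varphi\,\partial_t\varphi/\varphi$. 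What remains is
\begin{align*}
\partial_t\hat\varphi_t(x)=\sum_{y\neq x}r_t(x,y)\hat\varphi_t(y)+r_t(x,x)\hat\varphi_t(x)-f_t(x,p_t^\star)\hat\varphi_t(x),
\end{align*}
which is the second line of \cref{eq:logphi}, a forward Kolmogorov equation with killing. I would state this diagonal convention explicitly in the proof, noting that it is forced by mass conservation of $u^\star$. Everything else is routine algebra.
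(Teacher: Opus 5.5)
Your proposal is correct and takes essentially the same route as the paper. The paper also applies the chain rule to $\varphi_t=e^{-V_t}$ using the $V$-equation of \cref{eq:dual_pdes}, then the quotient rule to $\hat\varphi_t=p_t^\star/\varphi_t$ with the continuity equation, splitting off the diagonal via $u_t^\star(x,x)=-\sum_{y\neq x}u_t^\star(y,x)$ to get exactly the cancellation you describe. Your explicit remark that \cref{eq:optimality_V} fixes only the off-diagonal rates states a convention the paper uses implicitly; its intermediate line even writes $u_t(x,x)\,p_t(x)/\varphi_t(x)$ where $r_t(x,x)\,\hat\varphi_t(x)$ is meant.
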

Moreover, the optimal controlled transition rate can be expressed as
\begin{equation}
\label{eq:opt_con_phi}
u_t^\star(y,x)
=
r_t(y,x)\,
\frac{\varphi_t(y)}{\varphi_t(x)} .
\end{equation}
Thus, recovering the optimal dynamics reduces to estimating the local ratios
$\varphi_t(y)/\varphi_t(x)$ along edges $(x,y)\in\mathcal{E}$.

\subsection{Generalized IPF for Graph}
\label{sec:gIPF_graph}
\paragraph{Generator and Dynkin's formula} 
A direct approach to solve the high-dimensional Hopf-Cole equations in \cref{eq:logphi} is computationally prohibitive on large graphs.
Instead, we adopt a particle-based perspective, discrete analogues of the gIPF and temporal-difference (TD) objectives.
Unlike the continuous setting, where these objectives via representations built on It\^o calculus, discrete dynamics require different dynamical expressions. For this reason, we consider the Dynkin's formula \citep{dynkin1965markov, norris1997markov}
\begin{align}
\label{eq:dynkin}
    \mathbb{E}_{p^u_{t\mid s}}\!\left[ h_t(X_t) \right]
    =
    h_s(X_s)
    +
    \underset{{p^u_{\cdot\mid s}}}{\mathbb{E}}
    \!\left[
        \int_s^t \mathcal{A}^u_\tau h (X_\tau )\,\rd \tau
    \right],
\end{align}
for any test function $h : [0,1]\times\mathcal{X} \to \mathbb{R}$, where $\mathcal{A}^u_t$ denotes the infinitesimal generator associated with the controlled transition rate $u_t$, defined by
\begin{align*}
    \mathcal{A}^u_t h (x)
    :=
    \lim_{\Delta t \to 0}
    \frac{
        \mathbb{E}\!\left[h_{t+\Delta t}(X_{t+\Delta t}) \mid X_t = x\right]
        - h_t(x)
    }{\Delta t}.
\end{align*}


The definition of the generator above enables us to introduce, in the next Theorem below, a pair of controlled CTMCs on $\gG$ whose generators are supported on on edges $(x,y)\in\gE$.
\begin{proposition}
\label{prop:gen}
Let $(X_t)_{t\in[0,1]}$ be a CTMC process on $\gX$, which evolves under the controlled transition rate $u_t(y,x)= e^{Z_t (y,x)}r_t (y,x)$.
For convenience, we define 
\begin{align*}
    \begin{split}
        &Y_t(X_t) \coloneqq \log\varphi(t,X_t), \quad Z_t(y,X_t) = Y_t(y)-Y_t(X_t), \\
        &\Yh_t(X_t) \coloneqq \log\varphih(t,X_t), \quad  \Zh_t(y,X_t) =
\Yh_t(y)-\Yh_t(X_t).
    \end{split}
\end{align*}
Then, the pair $(Y_t,\Yh_t)$ is expressed through the generator $\gA^u_t$ as follows
\begin{align}\label{eq:AY}
\gA^u_t Y (x)&\!=\!\!  \sum_y r_te^{Z_t}\left(Z_t-1\right) + f_t(x,p^u_t), \\
\gA^u_t \Yh  (x)&\!=\!\! \sum_y r_t(x,y)e^{\Zh_t} + \Zh_t r_te^{Z_t} -f_t(x,p^u_t), \label{eq:AYh}
\end{align}
where we omit the function arguments when they are $(y,x)$.
Moreover, given the backward transition rate $\hat{u}_s (x,y) = e^{\Zh_s(y,x)} r_s(x,y)$, where $s:=1-t$ 
\begin{align}\label{eq:AYh2}
\gA^{\hat{u}}_s \Yh (x)&=\sum_y r_s(x,y)e^{\Zh_s}\left(\Zh_s-1\right)+ f_s(x,p^u_s), \\
\gA^{\hat{u}}_s Y (x) &= \sum_y r_s e^{Z_s}+ Z_s r_s(x,y)e^{\Zh_s} \!-f_s(x,p^u_s). \label{eq:AY2}
\end{align}
\end{proposition}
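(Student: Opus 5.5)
The plan is to compute each generator directly from the definition. For a time-dependent test function $h$ and a CTMC with rate $u_t$, the definition of $\gA^u_t$ gives
\begin{align*}
\gA^u_t h(x)=\partial_t h_t(x)+\sum_{y\neq x} u_t(y,x)\bigl(h_t(y)-h_t(x)\bigr).
\end{align*}
Into this I will substitute the time derivatives of $Y=\log\varphi$ and $\Yh=\log\varphih$, which follow from the Schr\"odinger system in \cref{prop:hopf-cole}. I take $p^u$ to be the optimal path $p^\star$; this is the only case in which the identities can hold, because \cref{eq:logphi} involves $f_t(\cdot,p^\star_t)$.

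\textbf{Forward generator of $Y$.} Dividing the first line of \cref{eq:logphi} by $\varphi_t(x)$ gives
\begin{align*}
\partial_t Y_t(x)=-\sum_y r_t(y,x)\,\frac{\varphi_t(y)}{\varphi_t(x)}+f_t=-\sum_y r_t(y,x)e^{Z_t(y,x)}+f_t .
\end{align*}
Because $Z_t(x,x)=0$, the diagonal term is $r_t(x,x)$. The jump part is $\sum_{y\neq x} r_te^{Z_t}Z_t$, and the $y=x$ term can be added to it for free since $Z_t(x,x)=0$. Adding the two pieces gives $\sum_y r_te^{Z_t}(Z_t-1)+f_t$, which is \cref{eq:AY}.

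\textbf{Forward generator of $\Yh$.} The second line of \cref{eq:logphi} divided by $\varphih_t(x)$ gives
\begin{align*}
\partial_t\Yh_t(x)=\sum_y r_t(x,y)e^{\Zh_t(y,x)}-f_t .
\end{align*}
Adding the jump term $\sum_y r_te^{Z_t}\Zh_t$, with the diagonal again contributing zero, yields \cref{eq:AYh}.

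\textbf{Backward generators.} These follow the same pattern in reversed time $s=1-t$, where $\partial_s=-\partial_t$. The reversed process jumps $x\to y$ at rate $\hat u_s(y,x)$, which is the time reversal of $u^\star$ with respect to $p^\star=\varphi\varphih$. From \cref{eq:opt_con_phi} and Nelson's relation,
\begin{align*}
\hat u(y,x)=u^\star(x,y)\,\frac{p^\star(y)}{p^\star(x)}=r(x,y)\,\frac{\varphih(y)}{\varphih(x)}=r(x,y)e^{\Zh(y,x)},
\end{align*}
which is consistent with the stated form. Then $\gA^{\hat u}_s\Yh=-\partial_t\Yh+\sum_y r(x,y)e^{\Zh}\Zh$. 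Using the expression for $\partial_t\Yh$ and absorbing the diagonal term gives \cref{eq:AYh2}. Likewise $\gA^{\hat u}_s Y=-\partial_tY+\sum_y r(x,y)e^{\Zh}Z$, which gives \cref{eq:AY2}.

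\textbf{Main obstacle.} The difficulty is bookkeeping, not analysis. I must handle the diagonal entries consistently, using $r(x,x)=-\sum_{y\neq x}r(y,x)$ for $r$ but not for the tilted rates, and keep the index order $(y,x)$ versus $(x,y)$ straight under reversal. I must also make sure the sign flip from $\partial_s=-\partial_t$ lands on $f$ correctly. Here I will state explicitly that the time-$s$ objects are evaluated at $t=1-s$, and that $f_s(x,p_s)$ in the statement means $f$ at that reversed time.
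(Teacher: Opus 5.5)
Your proposal is correct and follows essentially the same route as the paper's proof. Both write the time-inhomogeneous generator as $\partial_t h+\sum_{y}u_t(y,x)\bigl(h_t(y)-h_t(x)\bigr)$ and substitute $\partial_t\log\varphi_t$ and $\partial_t\log\widehat{\varphi}_t$ from the Hopf--Cole system of \cref{prop:hopf-cole}; for the backward identities both use the time-reversed rate $\hat u(y,x)=r(x,y)\,\widehat{\varphi}(y)/\widehat{\varphi}(x)$ together with the sign flip of the time derivative. Your explicit bookkeeping ($\partial_s=-\partial_t$, evaluating time-$s$ objects at $t=1-s$, and using the reference diagonal $r(x,x)$ rather than a tilted one) is actually cleaner than the paper's appendix, which writes $-\partial_s h$ and then substitutes the forward-time derivative.
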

\paragraph{gIPF Objective}
It is underlined that \cref{eq:AYh,eq:AY} are the discrete analogue of the diffusion SB-FBSDE in \cref{eq:conti_forward_Y,eq:conti_forward_Yhat}. 
They characterize the GSB solution by describing the evolution of the $(\log\varphi_t(X_t),\,\log\widehat{\varphi}_t(X_t))$ along the trajectories, thus avoiding time-expanded global solves over all nodes and time indices.
We leverage these equations to derive a \textit{data-driven log-likelihood maximization objective}, by expanding the SB potentials along CTMC trajectories:
\begin{align}
    &-\! \log \mu (X_0)\! \underset{(\ref{eq:dynkin})}{=}\! \mathbb{E}_t \mathbb{E}_{p^Z_{\cdot|0}}\!\!\left[ \mathcal{A}^u_t \log p^Z_t (X_t)  \right]\! + C \nonumber \\
    &\qquad=\mathbb{E}_t \mathbb{E}_{p^Z_{\cdot|0}}\!\!\left[ \mathcal{A}^u_t Y_t (X_t) + \mathcal{A}^u_t \Yh_t (X_t)  \right] + C, \label{eq:mle}
\end{align}
where $C = - \mathbb{E}_{p^Z_{1|0}}\!\left[ \log \nu (X_1)  \right]$, and $p^Z := p^u$ (resp. $p^{\Zh}:= p^{\hat{u}}$).
This yields the discrete gIPF loss.

\begin{proposition}\label{prop:ipf}
Assume the controlled process $(X_t)_{t\in[0,1]}$ on $\gX$, sampled with $u_t$ in Eq. \ref{eq:opt_con_phi}.
Expansion of the generator terms in \cref{eq:mle} gives the forward IPF objective.
\small{
\begin{align}\label{eq:fwd_ipf}
&\gL^Z_{\mathrm{IPF}}(\Zh)
:=
\underset{p^Z}{\E}\Bigg[\int_0^1\sum_{y\in\gX}\Big(
r_t(X_t,y)e^{\Zh_t(y,X_t)}
+\;
\nonumber\\
&
r_t(y,X_t)e^{Z_t(y,X_t)}
(-1+Z_t(y,X_t)+\Zh_t(y,X_t))
\Big)\,\rd t\Bigg].
\end{align}}
\normalsize Similarly for the time reversed dynamics, the corresponding backward IPF objective admits a similar expansion
\small{
\begin{align}\label{eq:bwd_ipf}
&\gL^{\Zh}_{\mathrm{IPF}}(Z)
=
\E_{p^{\Zh}}\!\Bigg[
\int_0^1\sum_{y\in\gX}
\Bigl(
r_{s}(y,\tilde X_t)e^{Z_{s}(y,\tilde X_s)}
+\;
\nonumber\\
&r_{s}(\tilde X_t,y)e^{\Zh_{s}(y,\tilde X_s)}
\big(-1+Z_{s}(y,\tilde X_t)+\Zh_{s}(y,\tilde X_s)\big)
\Bigr)\,\rd s
\Bigg].
\end{align}
}
\end{proposition}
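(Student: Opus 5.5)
The plan is to start from the likelihood identity \cref{eq:mle}, which already reduces $-\log\mu(X_0)$, up to the constant $C$, to the expected time integral of $\gA^u_t Y_t(X_t)+\gA^u_t\Yh_t(X_t)$ along forward trajectories. The proof then substitutes the two generator expressions from Proposition~\ref{prop:gen}, namely \cref{eq:AY} and \cref{eq:AYh}, and simplifies.

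The derivation of \cref{eq:mle} goes as follows. First use $p^\star_t=\varphi_t\hat\varphi_t$ from Proposition~\ref{prop:hopf-cole}, so that $\log p_t = Y_t+\Yh_t$. Then apply Dynkin's formula \cref{eq:dynkin} with the test function $h_t=\log p_t$ between $s=0$ and $t=1$. This gives $\log\mu(X_0)=\E[\log\nu(X_1)]-\E\int_0^1\gA^u_\tau(Y+\Yh)\,\rd\tau$, which is exactly \cref{eq:mle}. It uses the boundary conditions $p_0=\mu$ and $p_1=\nu$. Writing the time integral as $\E_t$ over a uniform $t\in[0,1]$ is harmless.

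Adding \cref{eq:AY} and \cref{eq:AYh} at $x=X_t$, the running-cost terms $+f_t$ and $-f_t$ cancel. This is the cancellation already noted in the continuous setting. The remaining terms are
\[
\sum_y r_t(X_t,y)e^{\Zh_t(y,X_t)}+r_t(y,X_t)e^{Z_t(y,X_t)}\bigl(Z_t(y,X_t)-1+\Zh_t(y,X_t)\bigr).
\]
This is the integrand of \cref{eq:fwd_ipf}. Taking $\E_\mu$ over $X_0$ turns $\E_{p^Z_{\cdot|0}}$ into $\E_{p^Z}$, and $C$ does not depend on $\Zh$, since it depends only on the forward process driven by $Z$. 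Hence minimizing the expected negative log-likelihood over $\Zh$ is equivalent to minimizing $\gL^Z_{\mathrm{IPF}}(\Zh)$.

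For the backward objective \cref{eq:bwd_ipf}, repeat the same argument for the reversed process $\tilde X_s$, $s=1-t$, with rate $\hat u_s$ and $\tilde X_0\sim\nu$. Apply Dynkin's formula to $\log p_{1-s}=Y+\Yh$, which expresses $\log\nu(\tilde X_0)$ through $\log\mu(\tilde X_1)$. Then add \cref{eq:AYh2} and \cref{eq:AY2}. The $f$ terms cancel again, and what remains is the stated integrand with the roles of $(Z,r_s(y,\cdot))$ and $(\Zh,r_s(\cdot,y))$ swapped.

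The main obstacle is rigor in applying Dynkin's formula. The test function $h_t=\log p_t$ depends on time, so the generator must include the term $\partial_t h$. This term is already absorbed in \cref{eq:AY}--\cref{eq:AYh}, through the potential equations \cref{eq:logphi}, so I will take Proposition~\ref{prop:gen} as given and not re-derive it. I also need $p_t>0$ along the trajectories so that the logarithms are finite. I would state this as the ``mild regularity'' assumption, or restrict attention to states the process actually visits.

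A further subtlety is that $Y,\Yh$ are evaluated at the optimal potentials, while sampling uses the current $Z$. The objective should be read with $\Zh$ as a free variable, and I would note that the identity becomes exact at optimality.
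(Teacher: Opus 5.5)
Your proposal is correct and follows the paper's proof essentially step for step. Both apply Dynkin's formula to $\log p_t = Y_t+\Yh_t$, substitute the generator identities \cref{eq:AY,eq:AYh} so that the $\pm f_t$ terms cancel, and repeat the argument for the reversed process using \cref{eq:AYh2,eq:AY2}. Your caveat that $\Zh$ must be read as a free variable, with exactness only at the optimum, matches the paper's equally informal ``upper bound, tight at the optimum'' step, so the two arguments coincide there as well.
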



\paragraph{Temporal-difference (TD) objective}

A key observation is in the IPF objective that the state cost term appears with opposite signs: in $\gA^{u}Y(t,x)$ its contribution is $+f_t$, whereas in $\gA^{u}\widehat{Y}(t,x)$ it enters as $-f_t$, thus, it cancels in the sum $\gA_t^{u}Y(x)+\gA^{u}\widehat{Y}_t(x)$.
Consequently, solely minimizing the IPF, in the case of nontrivial running costs, is {insufficient}, as it can enforce the endpoint marginals, but does not explicitly enforce consistency with the {state-/distribution-dependent} running cost.

To recover the correct cost-aware dynamics, we additionally enforce \emph{local generator consistency}, through temporal-difference (TD) loss, penalizing violations of \cref{eq:AY,eq:AYh} {along the trajectories}.
The increments of $(Y_t,\Yh_t)$ over a short interval must agree--in expectation-- with the action of the controlled generator, which is precisely where the running cost appears.
More specifically, for the forward CTMC $(X_t)_{t\in[0,1]}$ sampled via $u_t(y,x)= e^{Z_t (y,x)}r_t (y,x)$, application of Dynkin's formula \cref{eq:dynkin}, on \([t,t+\Delta t]\) for $\Yh_t$ 
enforces local generator consistency, as the one-step increment
\(
\delta\widehat Y_t := \widehat Y_{t+\Delta t}(X_{t+\Delta t})-\widehat Y_t(X_t)
\)
should match the generator-predicted increment \(\gA^{u^\theta}_t\widehat Y(X_t)\Delta t\).
Similarly, for the time-reversed process $\tilde X_{s}$, we apply \cref{eq:dynkin} on \([s-\Delta s, s]\) for $Y_t$ and enforce the same consistency condition. 
Consequently, we obtain the following forward-backward pair of TD losses as the squared residual of this condition:
\begin{align}
             \mathcal{L}^Z_{\text{TD}} (\Yh) &= \E_{y\sim p^Z_{t+\Delta t|t}(\cdot|x)}\Bigg[\| \delta \Yh_t - \gA^u_t \Yh \Delta t \|^2 \Bigg] \label{eq:fwd_td}
            \\
            \mathcal{L}^{\Zh}_{\text{TD}} (Y) &= \E_{y\sim p^{\Zh}_{s-\Delta s|s}(\cdot|x)}\Bigg[\| \delta Y_{s} - \gA^u_s Y \Delta s \|^2 \Bigg], \label{eq:bwd_td}
\end{align}
with $\Yh_0 = \log p_0 - Y_0$ for the forward process, and $Y_T = \log p_1 - \Yh_1$ for the backward.
In practice, our training alternates between minimizing the IPF objectives -- to match endpoint marginals --
and minimizing the TD objectives -- to enforce local generator consistency with the running cost--producing a scheme that simultaneously satisfies boundary constraints and cost-consistent path-space optimality.


\textbf{Training Objective and Parameterization}
Algorithm \ref{alg:graph_sbg} summarizes our training scheme.
In practice, we parameterize the log-potentials $(\theta,\phi)$:
$Y^\theta_t(X_t)\approx \log\varphi_t(X_t), \  \Yh^\phi_t(X_t)\approx \log\varphih_t(X_t)$, from which the controlled transition rates are given by 
\begin{align}
u^\theta_t(y,x)=r_t(y,x)\exp\!\big(Z^\theta_t(y,x)\big), \label{eq:fwd_param} \\
\hat u^\phi_t(y,x)=r_t(y,x)\exp\!\big(\Zh^\phi_t(y,x)\big)\label{eq:bwd_param}
\end{align}.
Finally, the loss that we use in practice is as follows
\begin{equation*}
\label{eq:total_loss_method}
\mathcal{L}
=
\gL^Z_{\mathrm{IPF}}(\Zh^\phi) + \gL^{\Zh}_{\mathrm{IPF}}(Z^\theta)
+
\lambda_{\mathrm{TD}}\Big(\mathcal{L}^Z_{\text{TD}} (\Yh^\phi)+\mathcal{L}^{\Zh}_{\text{TD}} (Y^\theta)\Big),
\end{equation*}
where $\lambda_{\mathrm{TD}}\ge 0$ weighting the TD regularization relative to the IPF terms.

\begin{algorithm}[t]
\caption{Generalized Schrödinger Bridge on Graphs}
\label{alg:graph_sbg}
\begin{algorithmic}[1]
\STATE \textbf{Input:} graph $\gG$, reference rates $r_t$, endpoint marginals $p_0,p_1$, cost $f_t(x,p_t)$
\STATE Initialize parameters $\theta,\phi$; choose time grid $\{t_k\}_{k=0}^K$, $\Delta t = 1/K$
\FOR{iterations $m=1,2,\dots$}
  \STATE \textbf{Forward rollouts:} sample $X_0^{(i)}\sim p_0$; simulate CTMC under $u^\theta_t$ in \cref{eq:fwd_param}
  \STATE \textbf{Update $\phi$:} minimize $\gL^Z_{\mathrm{IPF}}(\Zh^\phi) +\lambda_{\mathrm{TD}}\mathcal{L}^Z_{\text{TD}} (\Yh^\phi)$ from \cref{eq:fwd_ipf,eq:fwd_td} using forward-sampled trajectories
  \STATE \textbf{Backward rollouts:} sample $\tilde X_0^{(i)}\sim p_1$; simulate time-reversed dynamics under $\hat u^\phi$ in \cref{eq:bwd_param}
  \STATE \textbf{Update $\theta$:} minimize $\gL^{\Zh}_{\mathrm{IPF}}(Z^\theta)
    + \lambda_{\mathrm{TD}}\mathcal{L}^{\Zh}_{\text{TD}} (Y^\theta)$ from \cref{eq:bwd_ipf,eq:bwd_td} using backward trajectories
\ENDFOR
\end{algorithmic}
\end{algorithm}

%% file: Experiments.tex
\section{Experiments}\label{sec:exp}
\subsection{Supply Chain}\label{sec:sc}

\begin{wrapfigure}[9]{r}{.4\columnwidth}
\vspace{-1.cm}
  \begin{center}
    \includegraphics[trim = 0cm 0cm 0cm 0cm, clip, width=\linewidth]{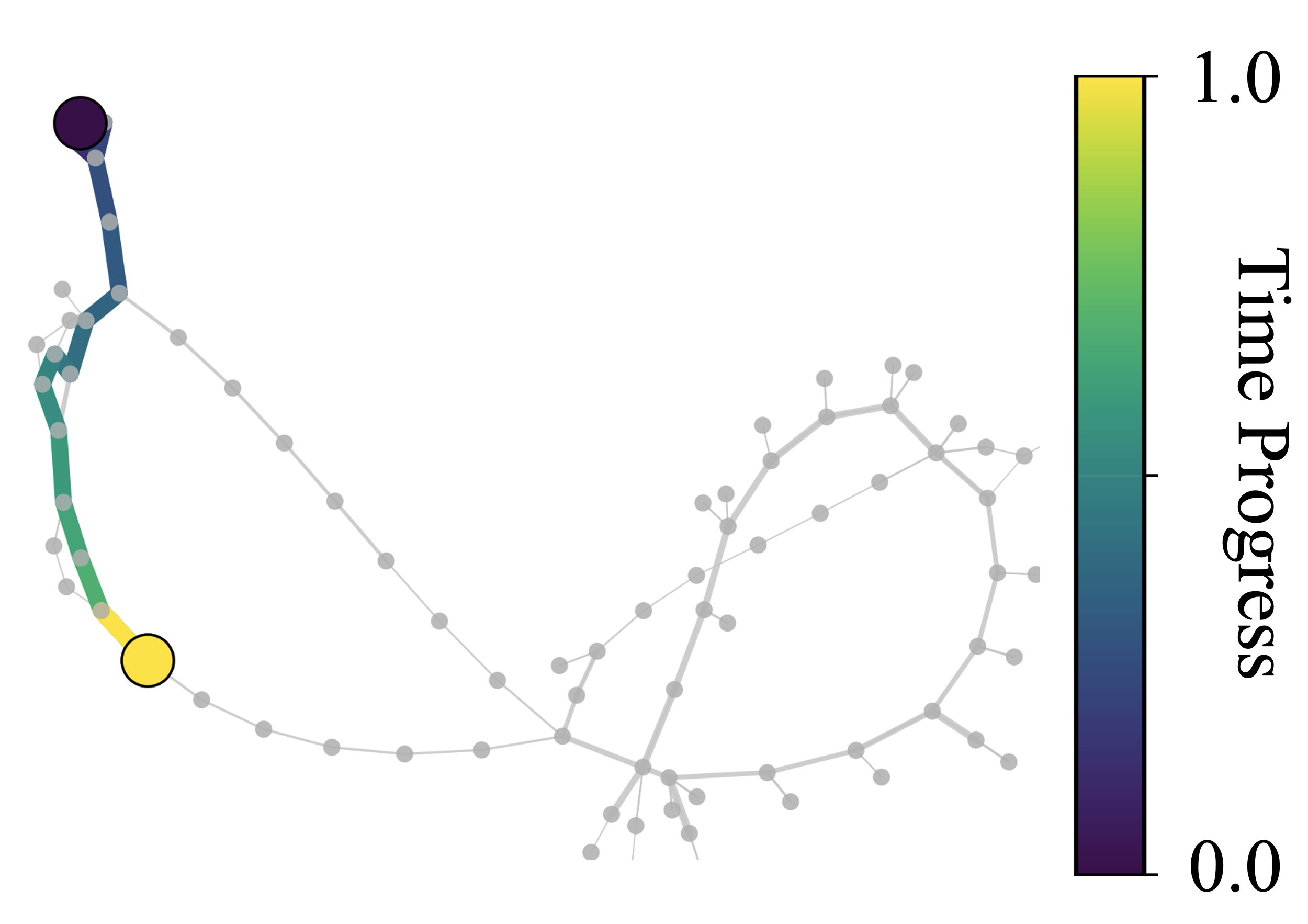}
  \end{center}
  \vspace{-3pt}
  \captionof{figure}{Visualization of a $65$-node subgraph of the supply chain setup.}
  \label{fig:sc_sub}
\end{wrapfigure}

We study the capacity of our model in navigating complex and sparse dynamics on a graph. We model a real-world supply-demand network \citep{kovacs_mcf_benchmarks}, as a directed weighted graph $\mathcal{G}$ with $N=9559$ nodes, where each node represents a facility/location (e.g., supplier, depot, port, retailer), and each directed edge represents an admissible shipment arc, as illustrated in Figure \ref{fig:sc_sub}.
The sparse CTMC reference generator encodes the network topology and a nominal routing bias via route costs and capacities. Given boundary marginals \((p_0,p_1)\) (e.g., representing supply and demand), we solve a finite-horizon transport problem, augmenting the objective with a running cost \(f_t(x,p_t)\) that penalizes congestion, thus discouraging high-occupancy nodes while still matching \((p_0,p_1)\). Additional details are left for Appendix~\ref{sec:sc-app}.

We compare GSBoG against the dynamic graph Schr\"odinger Bridge (GrSB; \citep{chow2022dynamical}) and two graph-native dynamic flow baselines:
(i) an {Attraction-Flow} that biases transitions toward the target, and
(ii) a dynamic $\gW_1$ minimum-cost flow baseline on the graph ($\gW1_{\text{flow}}$).
Since flow methods yield fluxes rather than a stepwise routing policy, we employ a {Markovian embedding} to convert them into a stochastic transition kernel and simulate trajectories.
All metrics reported below are therefore \emph{rollout-based}, using the same horizon and rollout budget for all methods.
We report: (i) terminal Total Variation, (ii) mean congestion of the 100 most occupied nodes, (iii) peak node occupancy, and
(iv) the maximum ratio of {induced} edge-flow to capacity.

Table~\ref{tab:supply_chain_results} shows that terminal matching is not the main difficulty; the challenge is achieving it {without} inducing infeasible, highly crowded intermediate behavior.
GrSB did not complete on this graph due to memory exhaustion issues, even after reducing the horizon and rollout count, highlighting scalability limitations at this scale.
Among the flow-based baselines, the induced rollouts exhibit substantially higher congestion and large {realized} capacity overload.
In particular, $\gW1_{\text{flow}}$ attains near-perfect terminal matching but routes most mass through a small set of bottleneck edges, resulting in severe intermediate crowding.
In contrast, GSBoG maintains near-optimal terminal matching while dramatically suppressing intermediate congestion.
Importantly, GSBoG is the only method in our study whose induced trajectories remain within edge capacities at this scale.
Although capacity satisfaction is not enforced as a hard constraint, it is achieved implicitly, through the congestion penalty which incentivizes policies that spread mass through less crowded routes, avoiding congestion.

\begin{table}[t]
\centering
\small
\setlength{\tabcolsep}{6pt}
\caption{Supply-chain comparison between our GSBoG and $\gW1_{\text{flow}}$, GrSB and Attr. Flow in Total Variation (TV), mean congestion of the 100 most occupied nodes, peak node occupancy, and maximum edge flow over capacity. (Lower is better)} 
\label{tab:supply_chain_results}
\resizebox{\columnwidth}{!}{%
\begin{tabular}{lcccc}
\toprule
Method & TV $\downarrow$ 
& \shortstack{Mean \\ congestion} $\downarrow$
& \shortstack{Peak\\ occupancy} $\downarrow$
&\shortstack{Max \\ flow/cap} $\downarrow$ 
\\
\midrule
GrSB                   & --- & --- & --- & --- \\
Attr. Flow             & 0.27 & 84.09 & 1033.72 & 1.59 \\
$\gW1_{\text{flow}}$ & \textbf{0.03} & 178.19 & 1807.66 & 1.41 \\
\midrule
GSBoG     & \textbf{0.03} & \textbf{21.13} &\textbf{271.49} & \textbf{0.64} \\
\bottomrule
\end{tabular}
}
\end{table}

\begin{figure}[t] 
\centering 
\begin{subfigure}[t]{0.55\linewidth}
\centering 
\includegraphics[width=\linewidth]{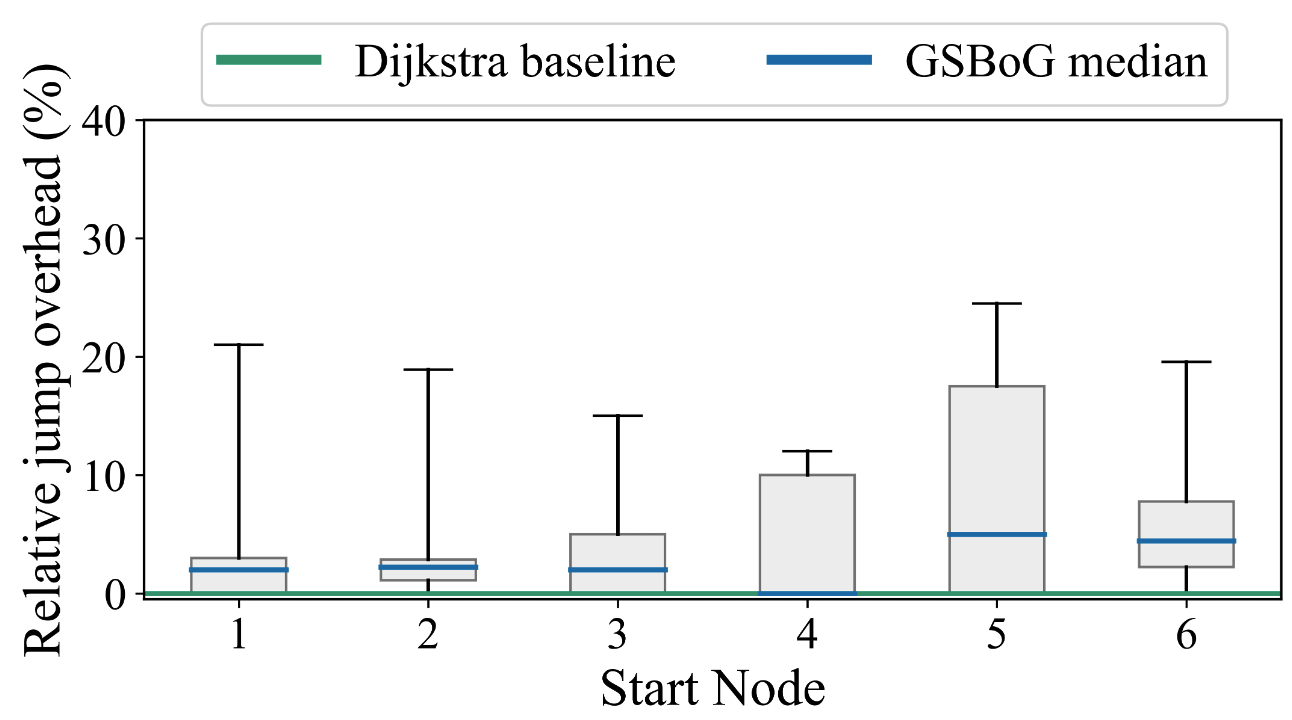} 
\caption{}
\label{fig:vsd} 
\end{subfigure}
\hfill 
\begin{subfigure}[t]{0.42\linewidth} 
\centering 
\includegraphics[width=\linewidth]{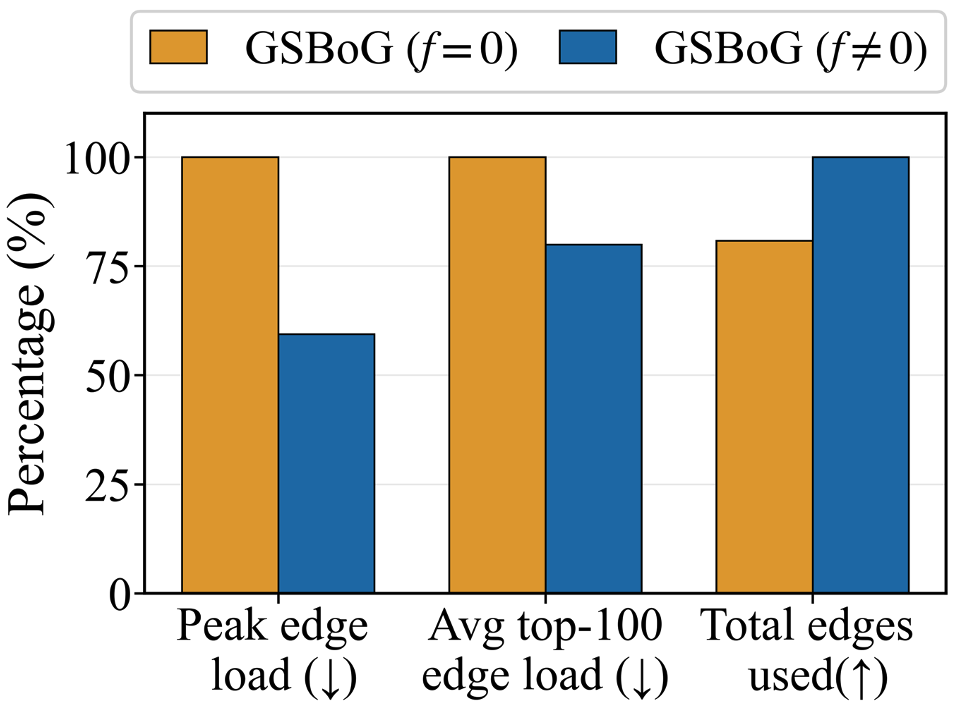} 
\caption{}
\label{fig:cong_comp} 
\end{subfigure} 
\caption{\textbf{Left:} Distribution of relative path length overhead of GSBoG policy compared to Dijkstra. \textbf{Right:} Edge-utilization summary between GSBoG without ($f=0$) and with ($f\neq 0$) state-cost: Peak edge load, average load of the top 100 most used edges by total number of particle traversals and total number of edges used.} 
\end{figure}

Importantly, Figure~\ref{fig:cong_comp} highlights the strong impact of incorporating a non-trivial application-specific state cost.
In particular, including a congestion cost ($f\neq 0$) further suppresses crowding, as it leads to policies which spread mass more evenly, rather than collapsing onto a few bottleneck nodes, while preserving the endpoint matching. 
Notably, the congestion-aware policy nearly halves the peak occupancy relative to the $f=0$ variant.
This mitigates congestion, rendering the congestion-aware solutions more robust to disruptions. Lastly, Figure~\ref{fig:vsd} shows that GSBoG remains near shortest-path behavior, incurring only a modest jump overhead on average across all sources.

\subsection{Assignment Task}
We test GSBoG on a balanced assignment task with \(n\) supply nodes \(A_1,\dots,A_n\) and \(n\) demand nodes
\(B_1,\dots,B_n\), with \(n\in\{6,8,10,20\}\). The ground-truth plan \(\pi^\star\) is the minimum-cost
assignment under \(C\in\mathbb{R}^{n\times n}\) that matches marginals \(p_0,p_1\).
To encode pairwise costs using state penalties, we build a directed graph with intermediate nodes \(E_{xy}\) for each \((x,y)\) and edges \(A_x\to E_{xy}\to B_y\), assigning \(f(E_{xy})=C_{xy}\) and \(f(A_x)=f(B_y)=0\). Running GSBoG on this graph yields controlled dynamics whose trajectories define the learned assignment \(\hat\pi\) from the expected flux on edges \(A_x\to E_{xy}\) and compare it with \(\pi^\star\).
More details are left for Appendix \ref{sec:ass-app}.

\begin{table}[t]
\centering
\caption{Assignment task metrics}
\label{tab:assignment_metrics}
\resizebox{\columnwidth}{!}{%
\begin{tabular}{cccccc}
\toprule
$n$ & \shortstack{Optimal\\Cost} & \shortstack{Assigned\\Cost} & \shortstack{Mass on \\ opt. selection} & \shortstack{Row \\ Entropy} & \shortstack{Accuracy} \\
\midrule
6  & 160 & 160 & 97.3\% & 0.07 & 100\% \\
8  & 205 & 205 &  94.2\% & 0.13 & 100\% \\
10 & 239 & 239 & 87.6\% & 0.16 & 100\% \\
20 & 538 & 546 & 85.0\% & 0.17 & 90\% \\
\bottomrule
\end{tabular}
}
\end{table}

\begin{figure}[t]
\vspace{-.5cm}
\begin{minipage}{.53\columnwidth}
\begin{figure}[H]
    \centering
    \includegraphics[width=\linewidth]{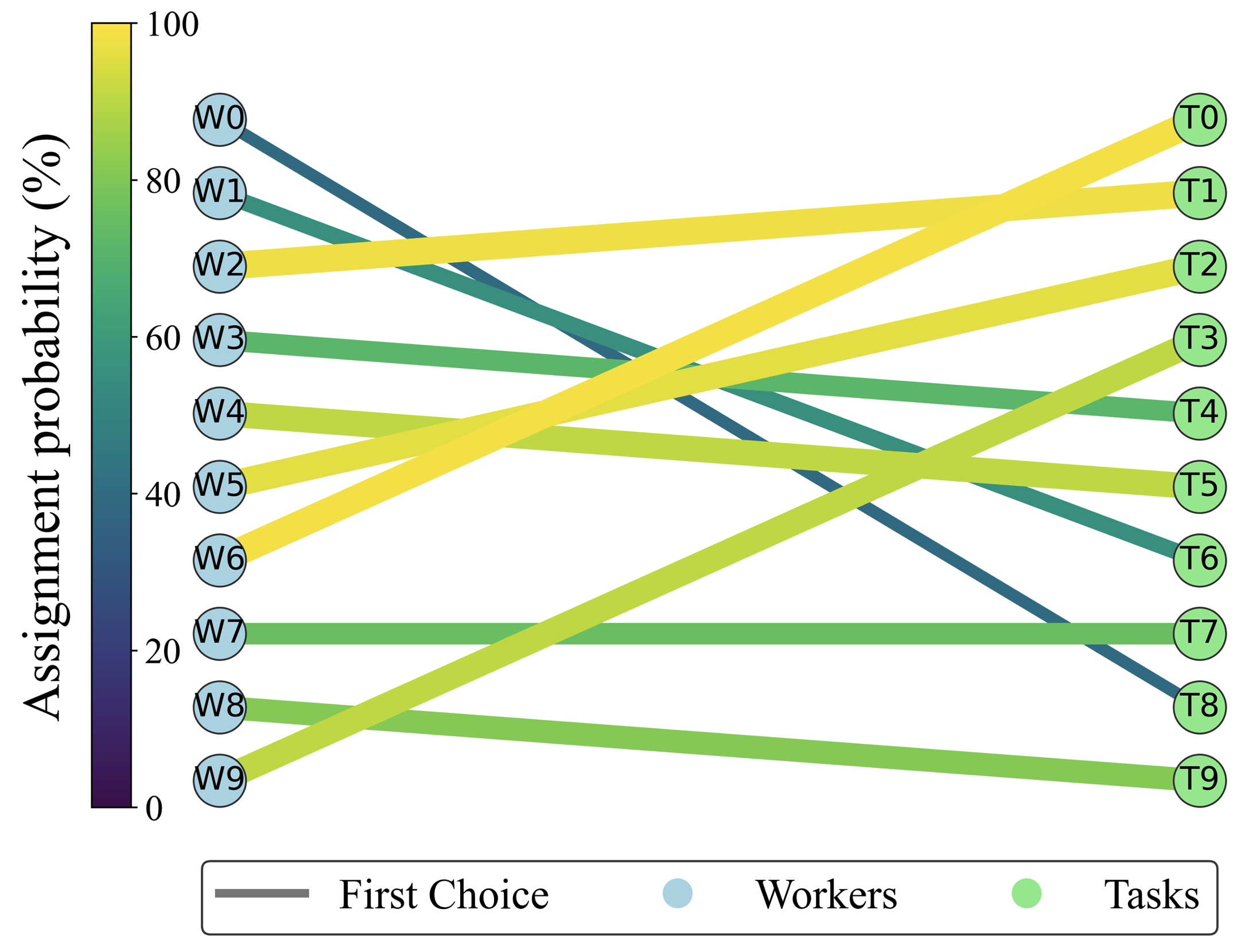}
\label{fig:assignment_bipartite}
    \label{fig:ass_bip}
\end{figure}
\end{minipage}
\hfill
\begin{minipage}[h]{.45\columnwidth}
\begin{figure}[H]
    \centering
    \includegraphics[width=\linewidth]{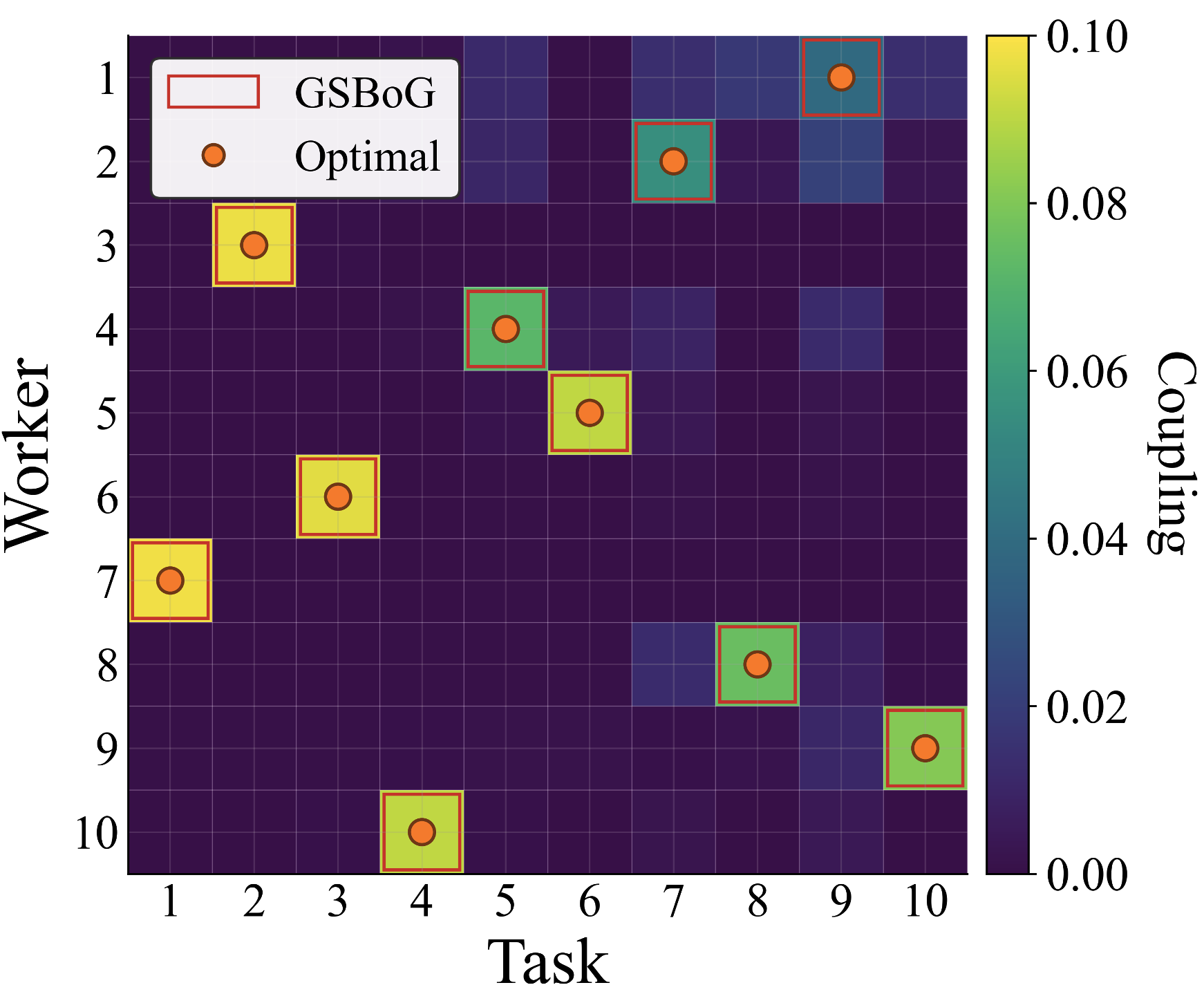}
    \label{fig:ass_heat}
\end{figure}
\end{minipage}
\caption{\textbf{Left:} Assignment probabilities on the worker--task bipartite graph. Edge thickness and color encode the probability of assigning each worker $W_i$ to task $T_j$ under the learned policy. \textbf{Right:} Heatmap shows the learned coupling (GSBoG) between workers (rows) and tasks (columns). Orange markers denote the optimal assignment.
}
\label{fig:ass}
\end{figure}

Figure~\ref{fig:ass} shows the bipartite assignment graph, along with the heatmap of the soft plan $\hat\pi$ together with the corresponding
hard assignment (red boxes), overlaid with the oracle solution $\pi^\star$ obtained by solving the static OT (orange markers). Quantitatively, we see that for $n\le 10$ the hard assignment matches the OT solution exactly (100\% overlap) while remaining fairly confident with high percentage of mass on the optimal edges.
For $n=20$, the learned policy still places $85\%$ mass on the correct edges, and the hard overlap drops to $90\%$, indicating that only one assignment pair is misplaced.
Importantly, notice that Table~\ref{tab:assignment_metrics} suggests that this misplacement increased the cost only marginally, implying that the selected alternative is near-optimal and that GSBoG remains close to the ground truth optimal solution.

\begin{table}[t]
\small
\centering
\caption{Chignolin folding comparison between our GSBoG and Uncontrolled CTMC, Committor-Guided bias, GrSB and Attr. Flow in folding rate and energy barrier}
\label{tab:chignolin_metrics}
\setlength{\tabcolsep}{4pt}            
\renewcommand{\arraystretch}{1.1}      
\begin{tabular}{lcc}
\toprule
Method & \shortstack{Energy barrier \\ ($k_\text{B}T_\text{sim}$)}  & Fold rate (\%) \\
\midrule
Uncontr. CTMC & 6.94  & 0.55 \\
Committor-guided bias & 6.59  & 71.69 \\
GrSB & 5.43 & 25.11 \\
Attr. Flow & 7.32 & 62.08 \\
\midrule
GSBoG $(f=0)$ & 1.80 & \textbf{99.70} \\
GSBoG $(f\neq 0)$ & \textbf{1.39} & \textbf{99.36} \\
\bottomrule
\end{tabular}
\end{table}
\begin{figure}[t]
    \centering
    \includegraphics[width=\linewidth]{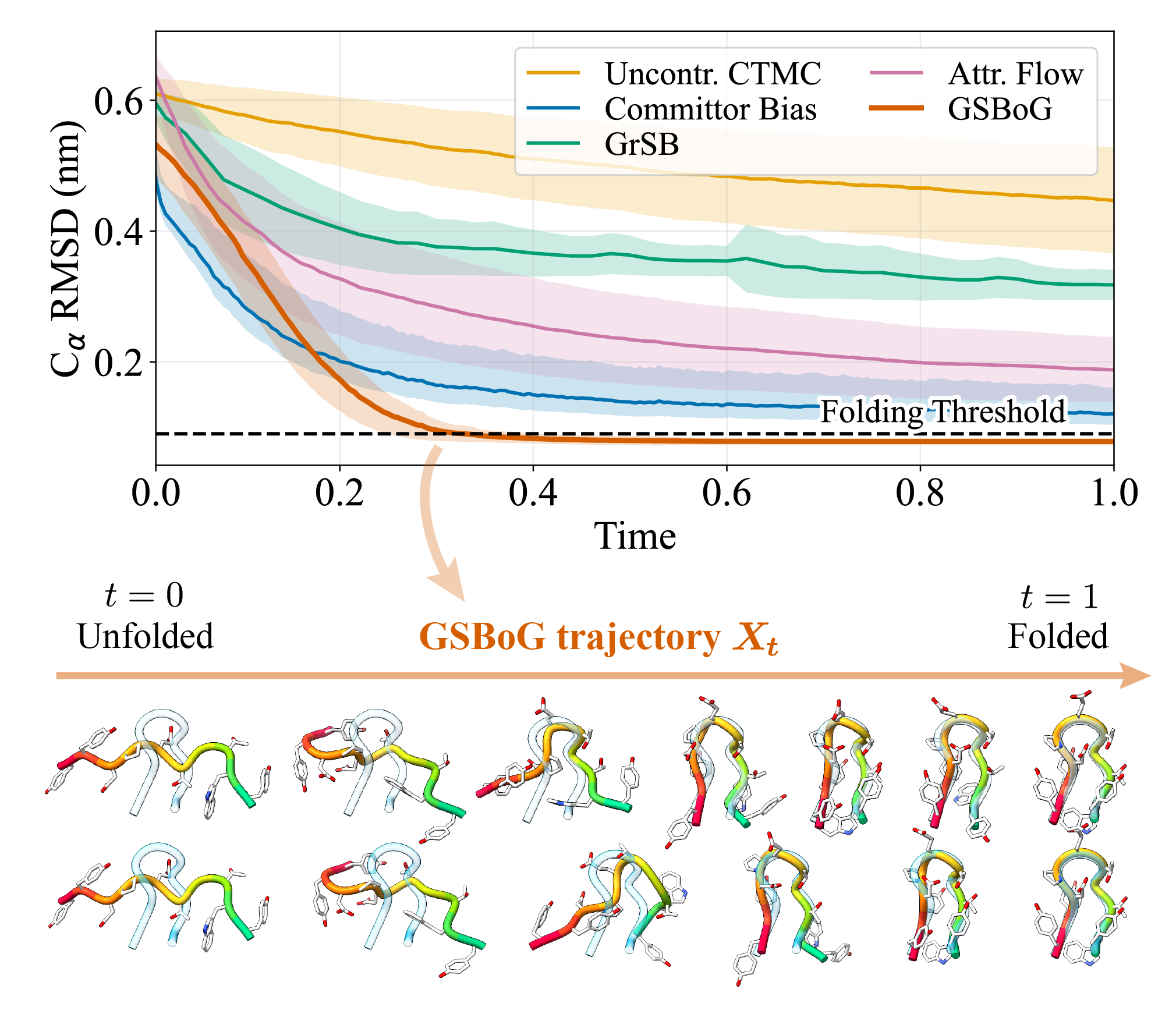}
    \caption{\textbf{Top:} Mean chignolin C$_\alpha$ RMSD across trajectories from different methods, with shaded areas indicating standard deviation. \textbf{Bottom:} Trajectory samples generated by GSBoG (colored) overlaid on the native chignolin structure (transparent). GSBoG correctly \textit{folds} the structure over time.}
    \label{fig:chignolin}
\end{figure}

\subsection{Discretized Molecular Dynamics}
Finally, we evaluate GSBoG for steering rare-event molecular kinetics in a discretized configurational space represented by a Markov state model (MSM; \citet{husic2018msm}).
In this task, the goal is twofold: increase the probability of observing a kinetically suppressed transition within a fixed time horizon, while avoiding thermodynamically implausible regions of the state space.
As a test system, we consider the folding of chignolin in water, a 10-residue miniprotein that undergoes rare but reversible transitions between extended (unfolded) and $\beta$-hairpin (folded) structures, which is a standard model system for protein folding dynamics \citep{lindorff2011fast}.

An MSM here is a Markov chain on a finite set of $|\mathcal{X}| = 500$ configurational microstates obtained by clustering equilibrated molecular dynamics trajectories.
From these trajectories, the microstate populations $\pi_i$, the corresponding free energies of each microstate $F_i/k_\text{B}T_\text{sim} = - \log \pi_i$ at simulation temperature $T_\text{sim}$ and a reference rate generator are estimated.
Although MSMs are a standard tool for analyzing molecular dynamics, identifying dominant transition pathways between metastable basins remains difficult due to the extreme rarity of transitions: an uncontrolled CTMC initialized in the unfolded basin distribution $\mu$ reaches the folded basin distribution $\nu$ within horizon $T = 200$ with probability below 1\% (Table~\ref{tab:chignolin_metrics}).
We formulate this as a GSB problem on the MSM graph, with controlled transport from $\mu$ to $\nu$ under reference dynamics.
Furthermore, we introduce a running cost $f$ using the free energy of each microstate, which biases the controlled dynamics toward thermodynamically stable regions of the MSM and thus encourages physically plausible folding pathways.
Additional experiment details are provided in Appendix \ref{sec:md-app}.

In Table~\ref{tab:chignolin_metrics}, all guided baselines increase the folding probability, but there is a gap between simply reaching the endpoint and doing so via low-barrier intermediates.
Committor-guided bias and Attraction Flow raise the fold rate to 60--70\%, respectively, yet still traverse high barriers, indicating that successful trajectories often pass through high energy regions.
GrSB improves over the uncontrolled prior (folding 25\% of the time and reducing the barrier to $5.43 \, k_\text{B}T_\text{sim}$) but still falls short of reliably inducing folding at this horizon.
In contrast, GSBoG attains both high folding probability and substantially reduced barriers: the variant with $f = 0$ reaches a 99.7\% fold rate with a $1.80 \, k_\text{B}T_\text{sim}$ barrier, and incorporating the free energy running cost further lowers the barrier to $1.39 \, k_\text{B}T_\text{sim}$ while maintaining a high fold rate.
In addition, by measuring the geometric deviation (RMSD, \cref{eq:rmsd}) from the native folded structure along trajectories using cluster representative structures (Fig.~\ref{fig:chignolin}), we confirm that GSBoG trajectories follow smooth folding pathways and terminate within the folded state RMSD threshold.
Overall, this shows that GSBoG is effective for rare-event steering: endpoint constraints ensure folding, and the free energy cost $f$ further biases trajectories toward thermodynamically plausible pathways.

\subsection{Scalability of GSBoG}
\label{subsec:scalability}

A central advantage of GSBoG is that it avoids global time-expanded graph solvers. 
Rather than optimizing over all node marginals or time-indexed flux variables, GSBoG learns trajectory-level CTMC policies whose controlled rates are evaluated only on local graph neighborhoods. Since \(u_t^\star(y,x)=r_t(y,x)\phi_t(y)/\phi_t(x)\) for \((x,y)\in E\), and \(u_t^\star(y,x)=0\) otherwise, each particle at node \(x\) only needs to consider \(y\in\mathcal{N}(x)\). 
Thus, the per-step computation is governed by the local branching factor \(\Delta\),
rather than the total number of nodes \(n\).

This locality is particularly beneficial on sparse graphs. GSBoG stores and
updates quantities on the sparse edge set and sampled trajectories, whereas
global dynamic OT and graph SB solvers operate over all nodes, edges, and time
layers. In the supply-chain graph, for example, \(n=9559\) while the maximum
degree is only \(6\). Hence each GSBoG particle interacts with at most six
neighbors per step, explaining its empirical scalability and its ability to
remain feasible when global solvers become memory-limited or fail to complete.

\paragraph{Runtime and memory.}
Figure~\ref{fig:scalability} reports memory usage (top) and wall-clock time
(bottom) for Dynamic OT and GSBoG across four scaling axes: number of nodes,
edge density, time steps, and rollout budget. Entries marked ``N/A'' denote
settings where Dynamic OT did not complete. Across all regimes where both
methods run, GSBoG is substantially more resource efficient. At
\(5\times 10^4\) nodes, for instance, GSBoG uses only \(1.4\) GB and finishes
in roughly \(1\) hour, compared with \(22.0\) GB and about \(4.6\) hours for
Dynamic OT. Similar reductions appear when varying edge density, horizon
length, and rollout budget: GSBoG consistently uses less memory and less time,
while Dynamic OT either incurs much larger costs or becomes infeasible.

\paragraph{Scaling behavior.}
The four panels highlight the difference between local trajectory-based
learning and global time-expanded optimization. As the graph grows from
\(5\times10^4\) to \(10^6\) nodes, Dynamic OT becomes infeasible after the
smallest instance, whereas GSBoG remains executable up to \(10^6\) nodes, with
memory increasing gradually. Increasing the number of edges shows the same
trend: Dynamic OT fails at the densest \(5\times10^6\)-edge setting, while
GSBoG still completes. Along the time dimension, Dynamic OT scales poorly with
the number of steps and fails at \(400\) steps, whereas GSBoG maintains low
memory and moderate runtime. 
Finally, the rollout-budget ablation shows that GSBoG can trade computation for
a more accurate particle approximation. Increasing the budget from
\(10^3\) to \(4\times10^3\) increases runtime, but memory remains below
\(1\) GB. Overall, these results support the main computational picture: global
dynamic OT and graph SB methods operate over the full time-expanded graph,
whereas GSBoG operates on sampled local trajectories. This makes GSBoG
particularly suitable for large sparse graphs, where each node has only a small
number of feasible neighbors.

\begin{figure}[t]
\vspace{-.2cm}
    \centering
\includegraphics[trim={0 0 11.22cm 0},clip,width=\linewidth]{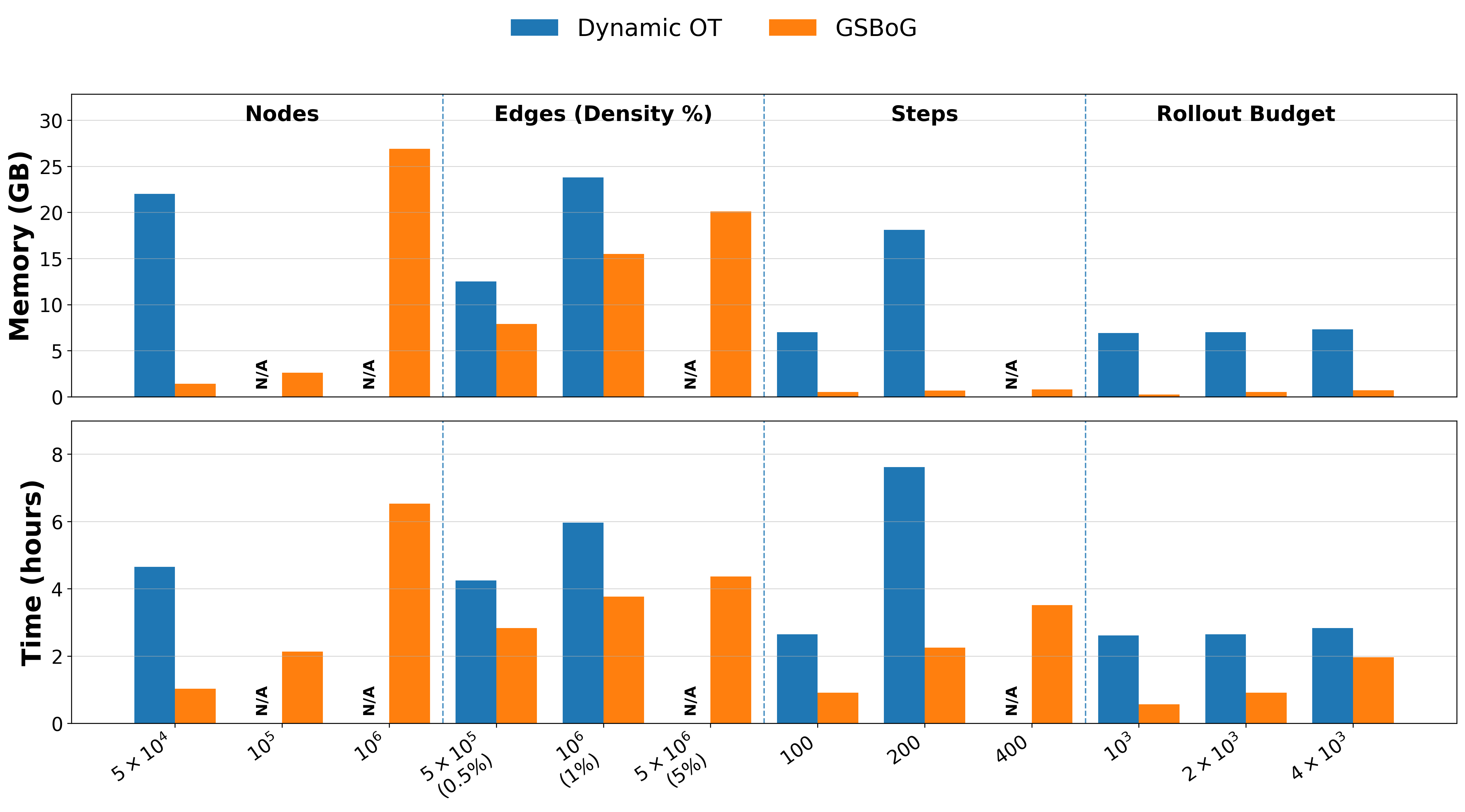}            \caption{Scalability of Dynamic OT and GSBoG across varying numbers of nodes, edge densities, time steps, and rollout budgets. GSBoG consistently uses less memory and runtime than Dynamic OT, and remains feasible in settings where Dynamic OT becomes infeasible (marked N/A).}
                \label{fig:scalability}
                \end{figure}

\subsection{Limitations}

GSBoG currently assumes that the underlying graph topology is fixed and fully known, since both the reference CTMC generator and the learned controlled rates are defined on the admissible edge set. While extensions to time-varying graphs may be possible through time-dependent generators and supports, handling partially observed, uncertain, or dynamically evolving connectivity remains a nontrivial direction, as rollout feasibility and loss evaluation require access to the valid transitions. We therefore leave the extension of GSBoG to uncertain, partially observed, and time-evolving graph structures as important future work.


%% file: Discussion.tex
\vspace{-.06cm}
\section{Discussion}
In this work, we present a first graph-native Schrödinger-bridge framework for optimal stochastic policies that transport mass between prescribed marginals while accounting for network structure and operational constraints such as congestion and capacity.
This work takes a step towards data-driven, principled approaches for transport on large networks, paving new avenues for integrating modern learning-based dynamics with optimal-transport-inspired control in complex networked systems.

\section*{Acknowledgments}
We thank Guan-Horng Liu for his assistance with the theoretical analysis, as well as for his insightful discussions and comments on the manuscript.
This research is partially supported by the DARPA AIQ program through the DARPA CMO contract number HR00112520010. We would like to thank Dr. Pat Shafto, AIQ Program Manager, for useful technical discussions.

\section*{Impact Statement}
This paper presents work whose goal is to advance the field of machine learning. There are many potential societal consequences of our work, none of which we feel must be specifically highlighted here.

%% file: Appendix.tex
\section{Proofs}\label{sec:proofs}
\subsection{Proof of Theorem \ref{th:dual_form}}

\begin{theorem}
    Under mild regularity assumptions, there exists a time-dependent function
$V : [0,1]\times\mathcal{X} \rightarrow \mathbb{R}$ such that the optimal probability path
$p_t^\star$ satisfies
\begin{align} 
\partial_t p_t^\star(x) &= \sum_{y\in \gX}r_t(x,y)\exp(-V_t(x)+V_t(y))p_t^\star(y), \nonumber \\ 
\partial_t V_t(x) &= \sum_{y\in \gX}r_t(y,x)\exp(-V_t(y)+V_t(x)) - f_t(x,p_t^\star),\label{eq:app_dual_pdes} 
\end{align}
with boundary conditions $p_0^\star=\mu$ and $p_1^\star=\nu$.
In particular, the optimal controlled transition rate $u_t^\star$ is given by
\begin{align}
\label{eq:app_optimality_V}
    u_t^\star(y,x)
    =
    r_t(y,x)\,
    \exp\bigl(-V_t(y)+V_t(x)\bigr).
\end{align}
\end{theorem}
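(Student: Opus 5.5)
The plan is to treat \cref{eq:gsb_obj} as a constrained optimization over pairs $(p_t,u_t)$ and to enforce the continuity equation \cref{eq:CE} with a state--time adjoint $V_t(x)$. The optimal rate will come from pointwise minimization in $u$. The equation for $V$ will come from stationarity in $p$. I use the sign convention that produces \cref{eq:app_optimality_V} directly, namely the Lagrangian
\begin{align*}
\mathcal{J}(p,u,V)=\int_0^1\sum_{x}p_t(x)\Big[\sum_{y\neq x}\Big(u_t\log\tfrac{u_t}{r_t}-u_t+r_t\Big)(y,x)+f_t(x,p_t)\Big]\rd t
+\int_0^1\sum_x V_t(x)\Big(\sum_y u_t(x,y)p_t(y)-\partial_t p_t(x)\Big)\rd t .
\end{align*}

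First I integrate the term $-\int V_t\,\partial_t p_t$ by parts. This gives $\int\sum_x \partial_t V_t(x)\,p_t(x)\,\rd t$ plus boundary terms at $t=0,1$, which are constants because $p_0=\mu$ and $p_1=\nu$ are fixed. Next I reindex the generator term using the column-sum condition $u_t(x,x)=-\sum_{y\neq x}u_t(y,x)$ from \cref{eq:tran_rate}:
\begin{align*}
\sum_x V_t(x)\sum_y u_t(x,y)p_t(y)=\sum_x p_t(x)\sum_{y\neq x}u_t(y,x)\big(V_t(y)-V_t(x)\big).
\end{align*}
After this step, $\mathcal{J}$ is an integral of $p_t(x)$ times a function of the off-diagonal rates $u_t(\cdot,x)$ at the single node $x$. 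Minimization in $u$ therefore decouples over edges $(x,y)\in\gE$. For each edge I minimize the strictly convex scalar function $u\mapsto u\log(u/r)-u+r+u(V(y)-V(x))$. Setting its derivative to zero gives $\log(u/r)=V(x)-V(y)$, which is exactly \cref{eq:app_optimality_V}. When $r_t(y,x)=0$, the convention $0\log 0=0$ together with the KL term forces $u=0$, so the graph constraint is preserved. Substituting back, the minimal value of each edge term collapses to $r_t(y,x)-u^\star_t(y,x)$.

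Stationarity in $p_t(x)$ of the reduced functional then gives
\begin{align*}
\partial_t V_t(x)+\sum_{y\neq x}\big(r_t(y,x)-u_t^\star(y,x)\big)+f_t(x,p_t)=0 .
\end{align*}
The diagonal convention $r_t(x,x)=-\sum_{y\neq x}r_t(y,x)$, combined with $e^{V_t(x)-V_t(x)}=1$, absorbs the $-\sum_{y\neq x}r_t$ term into the full sum. This yields the second line of \cref{eq:app_dual_pdes}. The first line is \cref{eq:CE} with $u^\star$ substituted, evaluated along the optimal path. The boundary conditions are inherited from the constraints.

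The main obstacle is making the formal Lagrangian argument rigorous, and this is what the ``mild regularity assumptions'' must cover. Three issues arise. First, one needs existence of a minimizer with $p^\star_t>0$ wherever stationarity in $p$ is used, together with absence of a duality gap, so that the first-order conditions actually characterize the optimum. Second, when $f$ depends on $p_t$, stationarity in $p$ produces an extra mean-field term $\sum_z p_t(z)\,\frac{\delta f_t(z,p)}{\delta p(x)}$. I would either assume this term vanishes, as it does for state-only $f$, or absorb it into an effective cost $f_t(x,p^\star_t)$ evaluated at the optimum. I will state explicitly that the displayed system should be read in this sense. For the existence and duality step, my first attempt would be a convexity argument in the flux variables $m_t(y,x)=u_t(y,x)p_t(x)$. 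In these variables the KL integrand is jointly convex and the continuity equation is linear, so standard convex duality on a finite graph applies when $f$ is convex in $p$.
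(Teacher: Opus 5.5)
Your proposal is correct and follows essentially the same route as the paper's proof. Both use the same Lagrangian with adjoint $V$, integrate by parts in time, reindex the generator term via the column-sum condition, and minimize pointwise in $u$ to get $u^\star_t(y,x)=r_t(y,x)e^{V_t(x)-V_t(y)}$ with minimal value $r_t-u^\star_t$; the paper then sets the coefficient of $p_t(x)$ to zero by requiring $\inf_p$ to be finite, where you use stationarity in $p$. Your caveat is a legitimate point the paper's proof glosses over by treating $f_t(x,p_t)$ as frozen: for genuinely $p$-dependent $f$, stationarity in $p$ produces an extra first-variation term, so the displayed system holds as written only for state-only $f$ or with $f$ read as an effective cost.
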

\begin{proof}
Let $\gX$ be a finite (or countable) state space. We use the convention that
$u_t(y,x)$ is the jump rate \emph{from} $x$ \emph{to} $y$ (so $x$ is the column index).
Thus the controlled generator satisfies
\[
u_t(x,x) \;=\; -\sum_{y\in\gX,\;y\neq x} u_t(y,x),
\qquad
r_t(x,x) \;=\; -\sum_{y\in\gX,\;y\neq x} r_t(y,x),
\]
where $r_t$ is the reference generator.
We consider the Generalized Schrödinger Bridge problem on $\gX$, where we have substituted for the $KL$
\begin{align}
\inf_{u}\;\; &\int_{0}^{1}\sum_{x\in\gX}\sum_{\substack{y\in\gX\\ y\neq x}}
\Bigl[
-u_t(y,x) + r_t(y,x) + u_t(y,x)\log\!\Bigl(\frac{u_t(y,x)}{r_t(y,x)}\Bigr)
+ f_t(x,p_t)
\Bigr]\;p_t(x)\,dt
\label{eq:app_primal_obj}
\\
\text{s.t.}\;\;&
\partial_t p_t(x) \;=\; \sum_{y\in\gX} u_t(x,y)\,p_t(y)
\qquad 
\label{eq:app_fwd_eq}
\\
& p_0=\mu,\qquad p_1=\nu.
\label{eq:app_bc}
\end{align}

We introduce a Lagrangian multiplier $V:[0,1]\times\gX\to\mathbb{R}$, which recasts the constrained optimization problem in \cref{eq:app_primal_obj,eq:app_fwd_eq} as follows: 
\begin{align}
\mathcal{L}(u,p,V)
=&\int_{0}^{1}\sum_{x\in\gX}\sum_{\substack{y\in\gX\\ y\neq x}}
\Bigl[
-u_t(y,x) + r_t(y,x) + u_t(y,x)\log\!\Bigl(\frac{u_t(y,x)}{r_t(y,x)}\Bigr)
+ f_t(x,p_t)
\Bigr]\,p_t(x)\,dt
\nonumber\\
&-\int_{0}^{1}\sum_{x\in\gX}V_t(x)\Bigl[\partial_t p_t(x)-\sum_{y\in\gX}u_t(x,y)p_t(y)\Bigr]\,dt.
\label{eq:app_lagrangian_def}
\end{align}
where $L$ is the Lagrangian of our problem in \cref{eq:app_primal_obj,eq:app_fwd_eq}.

Subsequently, we perform integration by parts to $\sum_{x\in\gX} V_t(x)\,\partial_t p_t(x)$, and rewrite $\sum_{x\in\gX}\sum_{y\in\gX} V_t(x)\,u_t(x,y)\,p_t(y)$, as follows:

\begin{itemize}
\item $\sum_{x\in\gX} V_t(x)\,\partial_t p_t(x)$
\begin{align}
\int_0^1 \sum_{x\in\gX} V_t(x)\,\partial_t p_t(x)\,dt
&= -\int_0^1 \sum_{x\in\gX} (\partial_t V_t(x))\,p_t(x)\,dt
+ \sum_{x\in\gX}\Bigl[V_1(x)\,p_1(x)-V_0(x)\,p_0(x)\Bigr]
\nonumber\\
&= -\int_0^1 \sum_{x\in\gX} (\partial_t V_t(x))\,p_t(x)\,dt
+ \sum_{x\in\gX}\Bigl[V_1(x)\,\nu(x)-V_0(x)\,\mu(x)\Bigr].
\label{eq:app_ibp}
\end{align}

\item $\sum_{x\in\gX}\sum_{y\in\gX} V_t(x)\,u_t(x,y)\,p_t(y)$
\begin{align}\label{eq:app_vup}
\sum_{x\in\gX}\sum_{y\in\gX} V_t(x)\,u_t(x,y)\,p_t(y)
&= \sum_{y\in\gX}\sum_{x\in\gX} V_t(x)\,u_t(x,y)\,p_t(y)
\nonumber\\
&= \sum_{x\in\gX}\sum_{\substack{y\in\gX\\ y\neq x}} V_t(y)\,u_t(y,x)\,p_t(x)
\;+\;\sum_{x\in\gX}V_t(x)\,u_t(x,x)\,p_t(x)
\nonumber\\ 
&= \sum_{x\in\gX}\sum_{\substack{y\in\gX\\ y\neq x}} 
\Bigl(V_t(y)-V_t(x)\Bigr)\,u_t(y,x)\,p_t(x)
\end{align}
where we used $u_t(x,x)=-\sum_{y\neq x}u_t(y,x)$.
\end{itemize}

Plugging \cref{eq:app_ibp,eq:app_vup} into \cref{eq:app_lagrangian_def} yields
\begin{align}
\mathcal{L}(u,p,V)
=&\int_{0}^{1}\sum_{x\in\gX}\sum_{\substack{y\in\gX\\ y\neq x}}
\Bigl[
-u_t(y,x) + r_t(y,x) + u_t(y,x)\log\!\Bigl(\frac{u_t(y,x)}{r_t(y,x)}\Bigr)
+ f_t(x,p_t)
\Bigr]\,p_t(x)\,dt
\nonumber\\
&+\int_0^1\sum_{x\in\gX}(\partial_t V_t(x))\,p_t(x)\,dt
-\sum_{x\in\gX}\Bigl[V_1(x)\nu(x)-V_0(x)\mu(x)\Bigr]
\nonumber\\
&+\int_0^1\sum_{x\in\gX}\sum_{\substack{y\in\gX\\ y\neq x}}
\Bigl(V_t(y)-V_t(x)\Bigr)\,u_t(y,x)\,p_t(x)\,dt.
\label{eq:app_L_after_rewrite}
\end{align}

We invoke strong duality 
\[
\sup_V\;\inf_{(u,p)} \mathcal{L}(u,p,V)
\]
and focus on the terms depending on $u_t(y,x)$ inside \cref{eq:app_L_after_rewrite}:
\[
-u + u\log\!\Bigl(\frac{u}{r}\Bigr) + (V_t(y)-V_t(x))u.
\]
Taking the derivative with respect to $u_t$ yields
\begin{align}
0&=\frac{\partial}{\partial u}\Bigl[-u + u\log\!\Bigl(\frac{u}{r}\Bigr) + (V_t(y)-V_t(x))u\Bigr]
= \log\!\Bigl(\frac{u}{r}\Bigr) + V_t(y)-V_t(x),
\nonumber\\
\Longrightarrow\quad
u_t^\star(y,x)
&= r_t(y,x)\,\exp\!\bigl(-V_t(y)+V_t(x)\bigr).
\label{eq:app_u_star_V}
\end{align}
Therefore, the optimal control of the GSB problem on $\gX$ is given by 
\begin{equation}
    u_t^\star(y,x) = r_t(y,x)\,\exp\!\bigl(-V_t(y)+V_t(x)\bigr).
\end{equation}

Finally, plugging $u^\star$ back into the Lagrangian and substituting \cref{eq:app_u_star_V} into \cref{eq:app_L_after_rewrite} yields
\begin{align}
\mathcal{L}(u^\star,p,V)
=&\int_0^1 \sum_{x\in\gX}\sum_{\substack{y\in\gX\\ y\neq x}}
\Bigl[
\bigl(1-e^{-V_t(y)+V_t(x)}\bigr)\,r_t(y,x)
+ f_t(x,p_t)
\Bigr]\,p_t(x)\,dt
\nonumber\\
&+\int_0^1\sum_{x\in\gX}(\partial_t V_t(x))\,p_t(x)\,dt
-\sum_{x\in\gX}\Bigl[V_1(x)\nu(x)-V_0(x)\mu(x)\Bigr]
\nonumber\\
=&\int_0^1\sum_{x\in\gX}
\Biggl[
\partial_t V_t(x)
+\sum_{\substack{y\in\gX\\ y\neq x}}\bigl(1-e^{-V_t(y)+V_t(x)}\bigr)\,r_t(y,x)
+ f_t(x,p_t)
\Biggr]\,p_t(x)\,dt
\nonumber\\
&+\sum_{x\in\gX}\Bigl[V_0(x)\mu(x)-V_1(x)\nu(x)\Bigr].
\label{eq:app_L_u_star}
\end{align}

For $\inf_p \mathcal{L}(u^\star,p,V)$ to be nontrivial (finite), the bracketed term
in \cref{eq:app_L_u_star} must vanish (pointwise in $(t,x)$), i.e.
\begin{align}
0
&=\partial_t V_t(x)
+\sum_{\substack{y\in\gX\\ y\neq x}} r_t(y,x)
-\sum_{\substack{y\in\gX\\ y\neq x}} e^{V_t(x)-V_t(y)}\,r_t(y,x)
+ f_t(x,p_t).
\label{eq:app_HJB_V}
\end{align}
Using $r_t(x,x)=-\sum_{y\neq x}r_t(y,x)$, equivalently
\begin{equation}
\partial_t V_t(x)
\;=\;\sum_{y\in\gX} e^{V_t(x)-V_t(y)}\,r_t(y,x)\;-\;f_t(x,p_t).
\label{eq:app_V_dot_exp_form}
\end{equation}

Meanwhile, the continuity equation under the optimal rates is
\begin{equation}
\partial_t p_t^\star(x)
=\sum_{y\in\gX} u_t^\star(x,y)\,p_t(y)
=\sum_{y\in\gX} e^{-V_t(x)+V_t(y)}\,r_t(x,y)\,p_t^\star(y),
\qquad p_0=\mu,\;\;p_1=\nu.
\label{eq:app_p_dot_u_star}
\end{equation}

Yielding the coupled ``continuity--dual'' system is \cref{eq:app_p_dot_u_star,eq:app_V_dot_exp_form}.
\end{proof}

\subsection{Proof of Proposition \ref{prop:hopf-cole}}
\begin{proposition}
\label{prop:hopf-cole-app}
We define the Hopf-Cole transformation on $\gG$ analogous to the dynamic counterpart as
\vspace{-5pt}
\begin{align}
\varphi(t,x) \coloneqq e^{-V(t,x)},
\qquad
\hat{\varphi}(t,x) \coloneqq \frac{p_t^\star(x)}{\varphi(t,x)} .
\label{eq:app_hopf-cole-discrete}
\end{align}
then the Schrödinger potentials $(\varphi_t,\hat{\varphi}_t)$ satisfy
\begin{align}
\label{eq:app_logphi}
\begin{split}
    \partial_t \varphi_t(x)
    &= - \sum_{y} r_t(y,x)\,\varphi_t(y)
       + f_t(x,p_t^\star)\,\varphi_t(x), \\
    \partial_t \varphih_t(x)
    &= \sum_{y} r_t(x,y)\,\varphih_t(y)
       - f_t(x,p_t^\star)\,\varphih_t(x),
\end{split}
\end{align}
with $p_t^\star = \varphi_t \varphih_t$ for all $t\in[0,1]$, and in particular
$\nu = \varphi_1 \varphih_1$.
\end{proposition}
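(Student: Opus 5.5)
The plan is a direct computation from the coupled system of \cref{th:dual_form}, with $\varphi_t=e^{-V_t}$ and $\hat\varphi_t=p_t^\star/\varphi_t$. The identities $p_t^\star=\varphi_t\hat\varphi_t$ for all $t$, and in particular $\nu=p_1^\star=\varphi_1\hat\varphi_1$, then hold by definition, using the boundary condition $p_1^\star=\nu$. Throughout I use the relation
\[
e^{V_t(x)-V_t(y)}=\frac{\varphi_t(y)}{\varphi_t(x)}
\]
to turn exponentials of potential differences into ratios of $\varphi$.

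\textbf{Step 1: the equation for $\varphi$.} Differentiate to get $\partial_t\varphi_t(x)=-\varphi_t(x)\,\partial_tV_t(x)$. Substitute the $V$-equation in \cref{eq:app_dual_pdes}, which in the proof of \cref{th:dual_form} is written with the sum over all $y$, the diagonal term being $r_t(x,x)e^{0}$. Each summand becomes $\varphi_t(x)\,r_t(y,x)\,\varphi_t(y)/\varphi_t(x)=r_t(y,x)\varphi_t(y)$. This gives
\[
\partial_t\varphi_t(x)=-\sum_y r_t(y,x)\varphi_t(y)+f_t(x,p_t^\star)\varphi_t(x),
\]
which is the first line of \cref{eq:app_logphi}.

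\textbf{Step 2: the equation for $\hat\varphi$.} By the quotient rule,
\[
\partial_t\hat\varphi_t=\frac{\partial_t p_t^\star}{\varphi_t}-\hat\varphi_t\,\frac{\partial_t\varphi_t}{\varphi_t}.
\]
For the first piece I take the continuity equation with the optimal rates $u^\star$. Its off-diagonal part is
\[
\sum_{y\neq x} r_t(x,y)\,\frac{\varphi_t(x)}{\varphi_t(y)}\,p_t^\star(y)=\varphi_t(x)\sum_{y\neq x}r_t(x,y)\hat\varphi_t(y).
\]
Its diagonal part is $u_t^\star(x,x)p_t^\star(x)$, with $u_t^\star(x,x)=-\sum_{y\neq x}u_t^\star(y,x)$. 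For the second piece I substitute Step 1, which gives
\[
\hat\varphi_t(x)\Big[\sum_{y\neq x}u_t^\star(y,x)+r_t(x,x)-f_t\Big].
\]

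\textbf{Main obstacle: the diagonal bookkeeping.} The formula $u^\star=r\,e^{V(x)-V(y)}$ from \cref{eq:app_optimality_V} is valid only off the diagonal. The diagonal of $u^\star$ must be fixed by mass conservation, not by that formula, since in general $u^\star_t(x,x)\neq r_t(x,x)$. With that convention, the $u^\star_t(x,x)\hat\varphi_t(x)$ term from the continuity equation cancels exactly against the $+\sum_{y\neq x}u_t^\star(y,x)\hat\varphi_t(x)$ term from $\partial_t\varphi$. What survives is $r_t(x,x)\hat\varphi_t(x)$, which completes the off-diagonal sum to $\sum_y r_t(x,y)\hat\varphi_t(y)$. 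The result is
\[
\partial_t\hat\varphi_t(x)=\sum_y r_t(x,y)\hat\varphi_t(y)-f_t(x,p_t^\star)\hat\varphi_t(x),
\]
the second line of \cref{eq:app_logphi}.

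Getting this cancellation right is the only delicate point. I would state the diagonal convention explicitly before carrying out the computation, and check that the $V$-equation's all-$y$ sum is the form derived in \cref{eq:app_V_dot_exp_form}. Finally, \cref{eq:opt_con_phi} follows immediately by rewriting $e^{V_t(x)-V_t(y)}=\varphi_t(y)/\varphi_t(x)$ in \cref{eq:app_optimality_V}.
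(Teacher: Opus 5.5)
Your proposal is correct and follows essentially the same route as the paper's proof. For $\varphi$ you use the chain rule plus the $V$-equation. For $\hat\varphi$ you use the quotient rule with the continuity equation under $u^\star$, fix the diagonal by mass conservation $u^\star_t(x,x)=-\sum_{y\neq x}u^\star_t(y,x)$, and let the resulting terms cancel. Your explicit diagonal bookkeeping is actually cleaner than the paper's: its penultimate line writes the surviving term as $u_t(x,x)\,p_t(x)/\varphi_t(x)$ where it should be $r_t(x,x)\hat\varphi_t(x)$, exactly the term you identify.
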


\begin{proof}
Define the Hopf-Cole transformation on the Schrödinger potentials
\[
\varphi_t(x) = e^{-V_t(x)},\qquad 
\varphih_t(x) = \frac{p_t(x)}{\varphi_t(x)}
\quad\Longleftrightarrow\quad
p_t(x)=\varphi_t(x)\varphih_t(x).
\]

\paragraph{Equation for $\varphi$.}
By the chain rule, $\partial_t \varphi_t(x)=-\varphi_t(x)\,\partial_t V_t(x)$, we can obtain the following differential equations pertaining to the Schrödinger potentials $\varphi$.
\begin{align}
    \begin{split}\label{eq:app_logphi2}
        \partial_t \varphi(t, x) &= -\partial_t V_t(x) e^{-V(t,x)} \\
        &=\Bigr[\sum_{y\neq x}r_t(y,x)(1-e^{V_t(x)-V_t(y)})+f_t(x_t,p_t)\Bigr]\varphi(t,x)\\
        &=-\sum_{y\neq x}r_t(y,x)(e^{V_t(x)-V_t(y)}-1)\varphi(t,x) + f_t(x_t,p_t)\varphi(t,x)\\
        &=-\sum_{y\neq x}r_t(y,x)(\frac{\varphi(t,y)}{\varphi(t,x)}-1)\varphi(t,x) + f_t(x_t,p_t)\varphi(t,x)\\
        &=-(\sum_{y\neq x}r_t(y,x) \varphi(t,y) - \sum_{y\neq x}r_t(y,x)\varphi(t,x)) + f_t(x_t,p_t)\varphi(t,x)\\
        &= - \sum_y \varphi(t,y)r_t(y,x) + f_t(x_t,p_t)\varphi(t,x)
    \end{split}
\end{align}
where we use the standard convention $r_t(x,x)=-\sum_{y\neq x} r_t(y,x)$ to absorb diagonal terms into the full sum over $y\in\gX$.

\paragraph{Equation for $\varphih$.}
Similarly, for the $\varphih(t,x)$, after differentiation $\varphih_t(x)=p_t(x)/\varphi_t(x)$:
\[
\partial_t \varphih_t(x)
=\frac{\partial_t p_t(x)}{\varphi_t(x)}-\frac{p_t(x)}{\varphi_t(x)^2}\,\partial_t\varphi_t(x).
\]
Using the forward equation \cref{eq:app_p_dot_u_star} for $p_t$ and \cref{eq:app_logphi2} for $\varphi_t$, and recalling $p_t=\varphi_t\varphih_t$, a direct cancellation gives
\begin{align}\label{eq:app_loghatphi2}
\begin{split}
\partial_t \varphih_t(x) 
&= \partial_t \left( \frac{p_t^*(x)}{\varphi_t(x)} \right) 
= \frac{1}{\varphi_t(x)} \sum_y u_t^*(x,y) p_t(y)
- \frac{p_t^*(x)}{\varphi_t^2(x)} \partial_t \varphi_t(x) \\
&=\frac{1}{\varphi_t(x)}\Bigr[\sum_{y\neq x} u_t(x,y) p_t(y) + u_t(x,x)p_t(x)\Bigr] - \frac{p_t(x)}{\varphi_t^2(x)}\Bigr[- \sum_y \varphi_t(y)r_t(y,x) + f_t(x_t,p_t)\varphi_t(x) \Bigr] \\
&= \frac{1}{\varphi_t(x)}\Bigr[\sum_{y\neq x} u_t(x,y) p_t(y) -\sum_{y\neq x}u_t(y,x)p_t(x)\Bigr] + \frac{p_t(x)}{\varphi_t^2(x)}\sum_y \varphi_t(y)r_t(y,x) - f_t(x_t,p_t)\varphi_t(x) \frac{p_t(x)}{\varphi_t^2(x)} \\
&= \frac{1}{\varphi_t(x)}\Bigr[\sum_{y\neq x} r_t(x,y)\frac{\varphi_t(x)}{\varphi_t(y)} p_t(y) -\sum_{y\neq x}r_t(y,x)\frac{\varphi_t(y)}{\varphi_t(x)}p_t(x)\Bigr] + \frac{p_t(x)}{\varphi_t^2(x)}\sum_y \varphi(t,y)r_t(y,x) - f_t(x_t,p_t)\varphih_t(x)\\
&= \sum_{y\neq x} r_t(x,y) \varphih_t(y) + u_t(x,x)\frac{p_t(x)}{\varphi_t(x)} - f_t(x_t,p_t)\varphih_t(x) \\
&= \sum_{y} r_t(x,y) \varphih_t(y) - f_t(x_t,p_t)\varphih_t(x).
\end{split}
\end{align}
\cref{eq:app_logphi2,eq:app_loghatphi2} yield the desired Hopf-Cole system of coupled differential equations, that characterize the solution of the Generalized Schrödinger Bridge in \cref{eq:app_primal_obj}.
\begin{remark}
Thus, given the Hopf-Cole transformation, the optimal control can be rewritten as $u_t(y,x)=r_t(y,x)\frac{\varphi(y)}{\varphi(x)}$, and the backward control is given by 
\begin{align}
\begin{split}\label{eq:app_back_contr}
    \hat u_s(y,x) &:= u_{1-t}(x,y)\frac{p_{1-t}(y)}{p_{1-t}(x)}    \\
                  &= r_{1-t}(x,y)\frac{\varphi_{1-t}(x)}{\varphi_{1-t}(y)}\frac{(\varphi\varphih)_{1-t}(y)}{(\varphi\varphih)_{1-t}(x)} \\
                  &= r_{1-t}(x,y)\frac{\varphih_{1-t}(y)}{\varphih_{1-t}(x)}
\end{split}
\end{align}
Thus concretely, we have $\hat u_s(y,x) = r_{1-t}(x,y)\frac{\varphih_{1-t}(y)}{\varphih_{1-t}(x)}$ with the time-reversed time $s:=1-t$.
\end{remark}

\end{proof}

\subsection{Proof of Proposition \ref{prop:gen}}
\begin{proposition}
\label{prop:gen_app}
Let $(X_t)_{t\in[0,1]}$ be a CTMC process on $\gX$, which evolves under the controlled transition rate $u_t(y,x)= e^{Z_t (y,x)}r_t (y,x)$.
For convenience, we define 
\begin{align*}
    \begin{split}
        &Y_t(X_t) \coloneqq \log\varphi(t,X_t), \quad Z_t(y,X_t) = Y_t(y)-Y_t(X_t), \\
        &\Yh_t(X_t) \coloneqq \log\varphih(t,X_t), \quad  \Zh_t(y,X_t) =
\Yh_t(y)-\Yh_t(X_t).
    \end{split}
\end{align*}
Then, the pair $(Y_t,\Yh_t)$ is expressed through the generator $\gA^u_t$ as follows
\begin{align}\label{eq:app_AY_app}
\gA^u_t Y (x)&\!=\!\!  \sum_y \Big(r_te^{Z_t}\left(Z_t-1\right) \!\Big)(y,x) + f_t(x,p^u_t) \\
\gA^u_t \Yh  (x)&\!=\!\! \sum_y r_t(x,y)e^{\Zh_t(y,x)} + \Big(\Zh_t r_te^{Z_t}\!\Big)(y,x)-f_t(x,p^u_t) \label{eq:app_AYh_app}
\end{align}
Moreover, given the backward transition rate $\hat{u}_s (x,y) = e^{\Zh_s(y,x)} r_s(x,y)$, with $s=1-t$
\begin{align}\label{eq:app_AYh2}
\gA^{\hat{u}}_s \Yh (x)&=\sum_y r_s(x,y)e^{\Zh_s(y,x)}\left(\Zh_s(y,x)-1\right)+ f_s(x,p^u_s) \\
\gA^{\hat{u}}_s Y (x) &= \sum_y r_s(x,y)e^{Z_s(y,x)}\nonumber+ Z_s(y,x) r_s(x,y)e^{\Zh_s(x,y)} -f_s(x,p^u_s)\label{eq:app_AY2}
\end{align}
\end{proposition}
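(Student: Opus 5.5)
The approach is direct computation. Since $Y_t,\Yh_t$ depend explicitly on time, the generator acting on a time-dependent test function $h$ is
\[
\gA^u_t h(x)=\partial_t h_t(x)+\sum_{y} u_t(y,x)\,h_t(y)=\partial_t h_t(x)+\sum_{y\neq x}u_t(y,x)\bigl(h_t(y)-h_t(x)\bigr).
\]
I will substitute the time derivatives from the Hopf--Cole system of Proposition~\ref{prop:hopf-cole} (written in log form), together with $u_t(y,x)=r_t(y,x)e^{Z_t(y,x)}$. The diagonal convention $r_t(x,x)=-\sum_{y\neq x}r_t(y,x)$ lets me move freely between sums over $y\neq x$ and sums over all $y$. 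On the diagonal, $Z_t(x,x)=\Zh_t(x,x)=0$, so the diagonal terms contribute exactly the corrections $\pm r_t(x,x)$ that make the statements hold with $\sum_y$.

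\textbf{Step 1: the equation for $\gA^u_tY$.} From \cref{eq:logphi}, dividing by $\varphi_t(x)$ gives
\[
\partial_tY_t(x)=-\sum_y r_t(y,x)e^{Z_t(y,x)}+f_t(x,p_t).
\]
Adding the jump part $\sum_{y}r_te^{Z_t}Z_t$ (the diagonal term vanishes since $Z_t(x,x)=0$) yields $\sum_y r_te^{Z_t}(Z_t-1)+f_t$, which is \cref{eq:app_AY_app}.

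\textbf{Step 2: the equation for $\gA^u_t\Yh$.} Dividing the second equation of \cref{eq:logphi} by $\varphih_t(x)$ gives
\[
\partial_t\Yh_t(x)=\sum_y r_t(x,y)e^{\Zh_t(y,x)}-f_t.
\]
The jump part is $\sum_y u_t(y,x)\Zh_t(y,x)=\sum_y \Zh_t r_te^{Z_t}$, and summing the two produces \cref{eq:app_AYh_app}. The one thing to check here is the index convention: $\varphih_t(y)/\varphih_t(x)=e^{\Zh_t(y,x)}$ multiplies $r_t(x,y)$, i.e.\ the reference rate in the transposed direction. This comes out correctly.

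\textbf{Step 3: the backward pair.} I will use the reversed-time process with $s=1-t$ and rate $\hat u_s$ from the remark after Proposition~\ref{prop:hopf-cole-app}, namely $\hat u_s(y,x)=r(x,y)\varphih(y)/\varphih(x)$. In reversed time $\partial_s=-\partial_t$, so the signs of the time derivatives from Steps 1--2 flip. Then:
\begin{itemize}
\item For $\gA^{\hat u}_s\Yh$: $-\partial_t\Yh+\sum_y\hat u\,\Zh$ gives $-\sum r(x,y)e^{\Zh}+f+\sum r(x,y)e^{\Zh}\Zh$, which matches \cref{eq:app_AYh2}.
\item For $\gA^{\hat u}_sY$: $-\partial_tY+\sum_y\hat u\,Z$ gives $\sum r e^{Z}-f+\sum Z\,r(x,y)e^{\Zh}$, which matches the last display.
\end{itemize}

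\textbf{Main obstacle.} The real difficulty is bookkeeping rather than analysis. I have to keep the column/row index convention of $r_t(y,x)$ consistent with its transpose $r_t(x,y)$ inside the reversed rate. I also have to track how $f_s(x,p_s)$ and $r_s$ are evaluated at physical time $1-s$, and handle the diagonal terms so that sums over $y\neq x$ can be rewritten as full sums (here $e^{0}=1$ and $0\cdot r=0$). I would first verify each identity on the diagonal separately, then assemble the full sums.
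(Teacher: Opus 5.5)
Your proposal is correct and follows the paper's proof essentially step for step. Both use the same time-inhomogeneous generator $\partial_t h_t(x)+\sum_y\bigl(h_t(y)-h_t(x)\bigr)u_t(y,x)$ and substitute the log-form Hopf--Cole derivatives directly; for the reversed process, both flip the sign of the time derivative and use $\hat u_s(y,x)=r(x,y)\varphih(y)/\varphih(x)$. One caveat: your Step 3 actually gives $\gA^{\hat u}_sY(x)=\sum_y r_s(y,x)e^{Z_s(y,x)}+Z_s(y,x)\,r_s(x,y)\,e^{\Zh_s(y,x)}-f_s(x,p_s)$, which agrees with the main-text \cref{eq:AY2} and with the paper's own computation but not literally with the appendix display (there the first term carries $r_s(x,y)$ and the second $e^{\Zh_s(x,y)}$, which look like index typos), so state which version you prove instead of writing index-free sums and asserting a match.
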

\begin{proof}
From Proposition \ref{prop:hopf-cole}, the log-potentials are defined as
\[
Y_t(x)\coloneqq \log \varphi_t(x),\qquad 
\hat Y_t(x)\coloneqq \log \varphih_t(x),
\]
so that $\log p_t(x)=Y_t(x)+\hat Y_t(x)$.
\begin{lemma}\label{lem:dynk}
The (time-inhomogeneous) generator of the controlled CTMC, acting on a test function $h(t,\cdot)$, is
\[
\gA^u_t h(x)=\partial_t h(t,x)+\sum_{y\in\gX}\bigl(h(t,y)-h(t,x)\bigr)\,u_t(y,x).
\]
\begin{proof}
Given controlled transition rate $u_t$, note that $X_{t+dt} \sim (I+ u_t dt)(\cdot, X_t)$, i.e.
\begin{align}
    p^u_{t+dt | t } (y | x) = \begin{cases}
        u_t (y, x) dt & \text{when } y\neq x, \\
        1 - \sum_{z\neq x} u_t (z, x) dt & \text{when } y = x.
    \end{cases}
\end{align}

Incorporating probability density, we obtain the following generator for a stochastic function $h(t,\cdot)$, with underlying stochastic process governed by $X_{t+dt} \sim (I+ u_t dt)(\cdot, X_t)$:
\begin{align}\label{eq:app_generator}
    \mathcal{A}_t^u h :&= \lim_{\Delta t \rightarrow 0+} \frac{\E_{y\sim p^u_{t+\Delta t|t}(\cdot| x)} \Bigr[h(t+\Delta t, y)\Bigr] - h(t,x)}{\Delta t} \\ 
    &= \frac{1}{dt} \sum_y h(t+dt, y ) p^u_{t+dt|t} (y|x) - h(t, X_t) \\
    &= \sum_{y\neq X_t} h(t+dt, y) u_t (y, X_t ) - \frac{1}{dt} \sum_{y\neq X_t}  u_t (y, X_t) h(t+dt, X_t) + \frac{h(t+dt, X_t) - h(t, X_t)}{dt} \\
    &= \sum_{y\neq X_t} \left( h(t, y) + \frac{\partial h(t, y)}{\partial t} \right) u_t (y, x)  - \frac{1}{dt} \sum_{y\neq X_t} \left( h(t, x) + \frac{\partial h(t, x)}{\partial t} dt \right) u_t (y, x) dt + \frac{\partial h}{\partial t} (t, X_t) \\
    &= \sum_{y\neq X_t} \left( h(t, y) - h(t, X_t) \right) u_t (y, x) + \frac{\partial h}{\partial t} (t, X_t) \\
    &= \sum_{y} \left( h(t, y) - h(t, X_t) \right) u_t (y, x) + \frac{\partial h}{\partial t} (t, X_t)
    \end{align}
\end{proof}
\end{lemma}

\begin{remark}\label{rem:bwd_gen}
Similarly, we can prove that the generator for the backward process obeys:
\begin{align}
\begin{split}\label{eq:app_back_gen}
A^u_s h &= \sum_{y \neq x} \left[ h(s-\Delta s, y) - h(s-\Delta s, x) \right] u_{s}(y|x) - \frac{\partial h}{\partial s}(s, x)
\end{split}
\end{align}
for test function $h$, where recall the backward time coordinate is $s=1-t$.
\end{remark} 

Recall that from Theorem \ref{th:dual_form}, the forward controlled jump rates are
\[
u_t(y,x)=r_t(y,x)\,e^{Y_t(y)-Y_t(x)}.
\]

\paragraph{Generator of $\hat Y$.}
From \cref{eq:app_logphi} and the identity $\partial_t \hat Y_t(x)=\varphih_t(x)^{-1}\partial_t \varphih_t(x)$,
\begin{align}
\partial_t \hat Y_t(x)
&=\frac{1}{\varphih_t(x)}\Bigl(\sum_{y\in\gX} r_t(x,y)\varphih_t(y)-f_t(x,p_t)\varphih_t(x)\Bigr) \nonumber\\
&=\sum_{y\in\gX} r_t(x,y)\exp\!\bigl(\hat Y_t(y)-\hat Y_t(x)\bigr)-f_t(x,p_t).
\label{eq:app_dt_hatY_clean}
\end{align}
Therefore,
\begin{equation}
\gA^u_t \hat Y(x)
=\sum_{y\in\gX} r_t(x,y)e^{\hat Y_t(y)-\hat Y_t(x)}-f_t(x,p_t)
+\sum_{y\in\gX}\bigl(\hat Y_t(y)-\hat Y_t(x)\bigr)\,r_t(y,x)e^{Y_t(y)-Y_t(x)}.
\label{eq:app_AYhat_clean}
\end{equation}

\paragraph{Generator of $Y$.}
Similarly, from \cref{eq:app_loghatphi2} and $\partial_t Y_t(x)=\varphi_t(x)^{-1}\partial_t \varphi_t(x)$,
\begin{align}
\partial_t Y_t(x)
&=\frac{1}{\varphi_t(x)}\Bigl(-\sum_{y\in\gX} r_t(y,x)\varphi_t(y)+f_t(x,p_t)\varphi_t(x)\Bigr) \nonumber\\
&=-\sum_{y\in\gX} r_t(y,x)\exp\!\bigl(Y_t(y)-Y_t(x)\bigr)+f_t(x,p_t).
\label{eq:app_dt_Y_clean}
\end{align}
Plugging \cref{eq:app_dt_Y_clean} into the generator formula gives the compact form
\begin{align}
\gA^u_t Y(x)
&=\partial_t Y_t(x)+\sum_{y\in\gX}\bigl(Y_t(y)-Y_t(x)\bigr)\,r_t(y,x)e^{Y_t(y)-Y_t(x)} \nonumber\\
&=f_t(x,p_t)+\sum_{y\in\gX} r_t(y,x)e^{Y_t(y)-Y_t(x)}\Bigl(\,Y_t(y)-Y_t(x)-1\,\Bigr).
\label{eq:app_AY_clean}
\end{align}

Equations \cref{eq:app_AYhat_clean} and \cref{eq:app_AY_clean} are the claimed generator identities.
\paragraph{Time-reversed Generators}
Similarly, from the expression for the generator of the time-reversed dynamics from Remark \ref{rem:bwd_gen} and \cref{eq:app_back_gen}, along with the optimal control for the backward in time process in \cref{eq:app_back_contr}, we obtain:
\begin{align}
    \gA^u_{s}Y(x) = \sum_y \left( Y(s, y) - Y(s, x) \right) r_{s}(x, y) e^{\left( \hat{Y}_{s}(y) - \hat{Y}_{s}(x) \right)} - \frac{\partial {Y}}{\partial s}(s, x)
\end{align}
where 
\begin{align}
\begin{split}
\frac{\partial Y}{\partial s} (s,x)&= \frac{1}{\varphi_{s}(x)}\Bigr(-\sum_y r_{s}(y,x)\varphi_{s}(y) + f_{s}(x,p_t)\varphi_{s}(x)\Bigr)
\end{split}
\end{align}

Similarly, we get for $\Yh_t$:
\begin{align}
\begin{split}
    \gA^u_{s} \hat{Y} = \sum \left( \hat{Y}(s, y) - \hat{Y}(s, x) \right) r_{s}(x, y) e^{\left( \hat{Y}^\varphi_{s}(y) - \hat{Y}^\varphi_{s}(x) \right)} - \frac{\partial \hat{Y}}{\partial s}(s,x)
\end{split}
\end{align}
where 
\begin{align}
    \frac{\partial \hat{Y}}{\partial s}(s,x) = \frac{1}{\varphih_{s}(x)}\Bigr(\sum_y r_{s}(x,y)\varphih_{s}(y) - f_{s}(x,p_t)\varphi_{s}(x)\Bigr)
\end{align}

\end{proof}

\subsection{Proof of Proposition \ref{prop:ipf}}
\begin{proposition}
Assume the controlled process $(X_t)_{t\in[0,1]}$ on $\gX$, sampled with $u_t$ in Eq. \ref{eq:opt_con_phi}.
Expansion of the generator terms in \cref{eq:mle} gives the forward IPF objective.
\begin{align}\label{eq:app_fwd_ipf}
\gL^Z_{\mathrm{IPF}}(\Zh)
:=
\underset{p^Z}{\E}\Bigg[\int_0^1\sum_{y\in\gX}\Big(
r_t(X_t,y)e^{\Zh_t(y,X_t)}
+
r_t(y,X_t)e^{Z_t(y,X_t)}
(-1+Z_t(y,X_t)+\Zh_t(y,X_t))
\Big)\,dt\Bigg].
\end{align}
Similarly for the time reversed dynamics, the corresponding backward IPF objective admits a similar expansion
\begin{align}
\label{eq:app_bwd_ipf}
\gL^{\Zh}_{\mathrm{IPF}}(Z)
=
\E_{p^{\Zh}}\!\left[
\int_0^1\sum_{y\in\gX}
\Bigr(
r_{t}(y,\tilde X_t)e^{Z_{t}(y,\tilde X_t)}
+r_{t}(\tilde X_t,y)e^{\Zh_{t}(y,\tilde X_t)}
\big(-1+Z_{t}(y,\tilde X_t)+\Zh_{t}(y,\tilde X_t)\big)
\Bigr)\,\rd t
\right].
\end{align}
\end{proposition}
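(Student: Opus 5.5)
We obtain the forward objective by substituting the generator identities of Proposition~\ref{prop:gen} into the likelihood expansion \cref{eq:mle}. Then we drop every term that does not depend on the parameterized backward potential $\Yh$ (equivalently $\Zh$).

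\textbf{Step 1: the likelihood identity.} Apply Dynkin's formula \cref{eq:dynkin} on $[0,1]$ to $h=\log p^Z_t = Y_t+\Yh_t$ along forward trajectories started at $X_0$. This gives $\log\nu(X_1)$ in expectation minus $\log\mu(X_0)$. Hence, up to the constant $C=-\E_{p^Z_{1|0}}[\log\nu(X_1)]$, $-\log\mu(X_0)$ equals $\E_{p^Z_{\cdot|0}}\int_0^1(\gA^u_tY+\gA^u_t\Yh)(X_t)\,\rd t$. Averaging over $X_0\sim\mu$ yields the quantity to be minimized in $\Zh$.

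\textbf{Step 2: substitute the generators.} Use \cref{eq:AY} and \cref{eq:AYh}, with $u_t(y,x)=r_t(y,x)e^{Z_t(y,x)}$:
\begin{align*}
\gA^u_tY(x)+\gA^u_t\Yh(x)=\sum_y\Big(r_t(x,y)e^{\Zh_t(y,x)}+r_t(y,x)e^{Z_t(y,x)}\big(Z_t(y,x)-1+\Zh_t(y,x)\big)\Big).
\end{align*}
The $\pm f_t(x,p^u_t)$ contributions cancel exactly, which is the cancellation noted before the TD discussion. Evaluating at $x=X_t$ and integrating against $p^Z$ gives \cref{eq:app_fwd_ipf}.

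We keep the $Z$-only terms $r_te^{Z_t}(Z_t-1)$ even though they are constant in $\Zh$, so that the expression matches the stated form exactly. The diagonal $y=X_t$ terms are harmless: $Z_t(x,x)=\Zh_t(x,x)=0$, so the summand there is $r_t(x,x)-r_t(x,x)=0$. It can therefore be included in the sum over all $y\in\gX$.

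\textbf{Step 3: the backward objective.} By symmetry, run the time-reversed process $\tilde X_s$ started from $\nu$ with rate $\hat u_s(y,x)=r_s(x,y)e^{\Zh_s(y,x)}$ (the Remark after Proposition~\ref{prop:hopf-cole}). Apply the backward Dynkin formula of Remark~\ref{rem:bwd_gen} to $\log p = Y+\Yh$, now expanding $\log\nu(\tilde X_0)$ in terms of $\log\mu$ at the terminal time. Then substitute \cref{eq:AYh2} and \cref{eq:AY2}. The $f$ terms again cancel, and the sum has the same structure with the roles of $(Z,r_t(y,x))$ and $(\Zh,r_t(x,y))$ swapped, which gives \cref{eq:app_bwd_ipf}.

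\textbf{Main obstacle.} The delicate step is the bookkeeping of index orientation in the backward case. The transposed reference rate $r_s(x,y)$, the sign of $\partial_s$ in the reversed generator, and the argument order in $e^{\Zh_s(\cdot,\cdot)}$ all have to line up. In particular, the term $Z_s r_s(x,y)e^{\Zh_s}$ must come out in the form $r(\tilde X,y)e^{\Zh}Z$ rather than with mismatched arguments.

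I would verify this by working with the forward generator evaluated at $t=1-s$. Since $\hat u$ is the exact time reversal of $u$, the reversed expectation of $\int(\gA Y+\gA\Yh)$ equals the forward one up to boundary terms. I would check term by term that each reversed summand matches its forward counterpart, under the relabelling $t=1-s$ and the change of jump direction.
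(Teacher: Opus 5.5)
Your proposal is correct and follows essentially the same route as the paper: Dynkin's formula applied to $\log p_t = Y_t+\Yh_t$, substitution of the generator identities of Proposition~\ref{prop:gen} with the $\pm f_t$ cancellation, and the same argument repeated for the time-reversed chain via \cref{eq:AYh2,eq:AY2}. Your check that the diagonal $y=X_t$ summand vanishes is a useful detail the paper leaves implicit. The term-by-term ``matching'' in your last paragraph is not needed, since Step 3 follows directly from those identities, and it would not literally hold: the forward and reversed integrands differ, and under the optimal marginals their expectations are negatives of each other, mirroring the sign flip of the boundary terms $\E_\nu\log\nu-\E_\mu\log\mu$.
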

\begin{proof}
From Lemma \ref{lem:dynk}, recall that for control function $h_t(x)\coloneqq \log p_t(x)=Y_t(x)+\hat Y_t(x)$, the Dynkin's formula for the controlled process started from $X_0=x_0$ reads
\[
\mathbb E\big[g_1(X_1)\,\big|\,X_0=x_0\big]
=
g_0(x_0)+\mathbb E\Big[\int_0^1 (\gA^u g_t)(X_t)\,dt\;\Big|\;X_0=x_0\Big].
\]
Rearranging and multiplying by $-1$ gives
\begin{equation}
-\log p_0(x_0)
=
-\mathbb E\big[\log p_1(X_1)\,|\,X_0=x_0\big]
+\mathbb E\Big[\int_0^1 (\gA^u \log p_t)(X_t)\,dt\;\Big|\;X_0=x_0\Big].
\label{eq:app_dynkin_logp}
\end{equation}

In the IPF-style training objective, we replace the true log-potentials $(Y,\hat Y)$ by their parameterized surrogates $(Y, \Yh)$, yielding an upper bound (tight at the optimum) of the form
\begin{equation}
-\log p_0(x_0)\;\;\lesssim\;\;
\mathbb E\Big[\int_0^1 \bigl(\gA^u Y_t + \gA^u \Yh \bigr)(X_t)\,dt\;\Big|\;X_0=x_0\Big]:=\gL^Z_{\mathrm{IPF}}(\Zh)
\label{eq:app_ipf_bound_forward_clean}
\end{equation}

To make \cref{eq:app_ipf_bound_forward_clean} explicit, define the edge-increments
\[
Z_t(y,x)\coloneqq Y_t(y)-Y_t(x),
\qquad
\hat Z_t(y,x)\coloneqq \Yh_t(y)-\Yh_t(x),
\]
and recall $u_t(y,x)=r_t(y,x)e^{Z_t(y,x)}$. Using \cref{eq:app_AYhat_clean}--\cref{eq:app_AY_clean}, we obtain
\begin{align}
\gL^Z_{\mathrm{IPF}}(\Zh)
&=
\mathbb E\Bigg[
\int_0^1
\sum_{y\in\gX}
\Big(
r_t(y,x)\,e^{Z_t(y,x)}\bigl(-1+Z_t(y,x)+\hat Z_t(y,x)\bigr)
\;+\;
r_t(x,y)\,e^{\hat Z_t(y,x)}
\Big)
\,dt
\;\Bigg|\;X_0=x_0
\Bigg].
\label{eq:app_Lx0_explicit_clean}
\end{align}
This is the forward-pass IPF objective.

\paragraph{Backward-pass objective.}

Applying the same argument to the time-reversed (backward) sampling dynamics started from $X_1=x_1$, given  the expression for the generator of the backward process and the optimal control $\hat u_{s}$, yields the analogous bound
\[
-\log p_1(x_1)\;\;\lesssim\;\;
\mathbb E\Big[\int_0^1 \bigl(\gA^u_{s} Y+\gA^u_{s}\Yh \bigr)(\tilde X_{s})\,ds\;\Big|\;X_1=x_1\Big]:=\gL^{\Zh}_{\mathrm{IPF}}(Z),
\]

Thus, expanding the generator gives
\begin{align}
\gL^{\Zh}_{\mathrm{IPF}}(Z)
&=
\mathbb E\Bigg[
\int_0^1
\sum_{y\in\gX}
\Big(
r_{s}(x,y)\,e^{\hat Z_{s}(y,x)}\bigl(-1+Z_{s}(y,x)+\hat Z_{s}(y,x)\bigr)
\;+\;
r_{s}(y,x)\,e^{Z_{s}(y,x)}
\Big)\,ds
\;\Bigg|\;X_1=x_1
\Bigg].
\label{eq:app_Lx1_explicit_clean}
\end{align}
Minimizing $\gL^Z_{\mathrm{IPF}}(\Zh)$ and $\gL^{\Zh}_{\mathrm{IPF}}(Z)$ alternately gives the IPF training scheme.
\end{proof}

\section{Connection with Stochastic Optimal Control}\label{sec:soc-app}
Let $p^u$ denote the Markov path measure of a time-inhomogeneous CTMC on $\gX$ with controlled jump rates $u_t(\cdot,\cdot)$, and let $p^r$ be the corresponding path measure under the reference rates $r_t(\cdot,\cdot)$. We consider the KL-regularized finite-horizon control problem
\begin{equation}\label{eq:app_soft-rl_rewrite}
\min_{u}\;
\mathbb E_{X\sim p^u}\Bigg[
\int_{0}^{1} f\bigl(t,X_t,p_t\bigr)\,dt
\;+\; g(X_1)
\;+\; D_{\mathrm{KL}}\!\left(p^u\,\|\,p^r\right)
\Bigg].
\end{equation}
This objective is the continuous-time analogue of ``soft'' control: it trades off running cost and terminal cost against a path-space KL penalty that keeps the controlled dynamics close to the reference CTMC.

\paragraph{Dynamic programming and one-step expansion.}
Define the value function
\[
V(t,x)\;\coloneqq\;\inf_{u}\;
\mathbb E^{u}\!\left[
\int_t^{1} f(\tau,X_\tau,p_\tau)\,d\tau + g(X_1)
+ D_{\mathrm{KL}}\!\left(p^u_{[t,1]}\,\|\,p^r_{[t,1]}\right)
\;\Big|\;X_t=x\right].
\]
Applying the dynamic programming principle over a short interval $[t,t+\Delta t]$ yields
\begin{align}\label{eq:app_value_function_rewrite}
V(t,x)
&=\inf_{u_t(\cdot,x)}\Bigg\{
\mathbb E^{u}\!\bigl[V(t+\Delta t,X_{t+\Delta t})\mid X_t=x\bigr]
+ f(t,x,p_t)\,\Delta t
+ D_{\mathrm{KL}}\!\left(p^u_{[t,t+\Delta t]}\,\|\,p^r_{[t,t+\Delta t]}\right)
\Bigg\}.
\end{align}
For a CTMC with rates $u_t(\cdot,x)$, the one-step transition over $\Delta t$ satisfies
\[
\mathbb P(X_{t+\Delta t}=x\mid X_t=x)=1-\sum_{y\neq x}u_t(y,x)\Delta t+o(\Delta t),
\qquad
\mathbb P(X_{t+\Delta t}=y\mid X_t=x)=u_t(y,x)\Delta t+o(\Delta t),
\]
and hence
\begin{align}\label{eq:app_first_term_approx_rewrite}
\mathbb E^{u}\!\bigl[V(t+\Delta t,X_{t+\Delta t})\mid X_t=x\bigr]
&=V(t+\Delta t,x)\Bigl(1-\sum_{y\neq x}u_t(y,x)\Delta t\Bigr)
+\sum_{y\neq x}u_t(y,x)\Delta t\,V(t+\Delta t,y)
+o(\Delta t).
\end{align}

Moreover, the KL divergence between the infinitesimal jump laws induced by $u_t(\cdot,x)$ and $r_t(\cdot,x)$ over $[t,t+\Delta t]$ expands as (up to $o(\Delta t)$)
\begin{equation}\label{eq:app_local_KL}
D_{\mathrm{KL}}\!\left(p^u_{[t,t+\Delta t]}\,\|\,p^r_{[t,t+\Delta t]}\right)
=
\sum_{y\neq x}\Bigl[
u_t(y,x)\log\frac{u_t(y,x)}{r_t(y,x)} - u_t(y,x) + r_t(y,x)
\Bigr]\Delta t
+o(\Delta t).
\end{equation}

\paragraph{Optimal control and the HJB equation.}
Substituting \cref{eq:app_first_term_approx_rewrite}--\cref{eq:app_local_KL} into \cref{eq:app_value_function_rewrite}, cancelling $V(t+\Delta t,x)$, dividing by $\Delta t$, and letting $\Delta t\to 0$ gives
\begin{align}
0
&=\inf_{u_t(\cdot,x)}\Bigg\{
\partial_t V(t,x)
+\sum_{y\neq x}u_t(y,x)\bigl(V(t,y)-V(t,x)\bigr)
+ f(t,x,p_t) \nonumber\\
&\hspace{3.2cm}
+\sum_{y\neq x}\Bigl[u_t(y,x)\log\frac{u_t(y,x)}{r_t(y,x)} - u_t(y,x) + r_t(y,x)\Bigr]
\Bigg\}.
\label{eq:app_HJB_premin}
\end{align}
The minimization in \cref{eq:app_HJB_premin} decouples over $y\neq x$. Differentiating the objective with respect to $u_t(y,x)$ yields the optimal rates
\begin{equation}\label{eq:app_opt_u_rewrite}
u_t^\star(y,x)
=
r_t(y,x)\exp\!\bigl(V(t,x)-V(t,y)\bigr),
\qquad y\neq x.
\end{equation}
Plugging \cref{eq:app_opt_u_rewrite} back into \cref{eq:app_HJB_premin} and simplifying gives the HJB equation
\begin{equation}\label{eq:app_HJB_rewrite}
-\partial_t V(t,x)
=
\sum_{y\neq x} r_t(y,x)\Bigl(1-\exp\!\bigl(V(t,x)-V(t,y)\bigr)\Bigr)
+ f(t,x,p_t).
\end{equation}

\paragraph{Relation to generalized Schr\"odinger bridges.}
Notice that \cref{eq:app_HJB_rewrite} matches the HJB obtained in Theorem~3.1 of our GSB-on-graphs formulation. The main conceptual difference is in the terminal condition: in the Schr\"odinger bridge, the endpoint distribution is enforced via hard marginal constraints, whereas in the SOC formulation above the terminal objective is imposed softly through the terminal cost $g(X_1)$ (and, if desired, can be strengthened to approximate hard constraints by choosing $g$ appropriately).

\section{Experimental Details}

Across all tasks, we evaluate each method over the \emph{same} finite horizon $T$ and with the \emph{same} rollout budget $B$ particles, using identical endpoint marginals $(p_0,p_1)$ and the same evaluation code for metrics.
For baselines, we allocate an equal tuning budget (grid over $\leq M$ settings) and report the best configuration on a held-out validation objective.

\subsection{Assignment}\label{sec:ass-app}
\paragraph{Balanced assignment as graph transport.}
We demonstrate the effectiveness of GSBoG on balanced assignment problems between
$n$ supply nodes $\{A_i\}_{i=1}^n$ and $n$ demand nodes $\{B_j\}_{j=1}^n$, with
\emph{source} and \emph{target} marginals
$p_0,p_1\in\Delta^n$ (with $n\in\{6,8,10,20\}$), where $p_0(A_i)=(p_0)_i$ and $p_1(B_j)=(p_1)_j$.

Given costs $C\in\R^{n\times n}$, the \emph{oracle} transport plan solves the static linear program
\begin{align}\label{eq:app_assignment_ot}
\pi^\star \in \arg\min_{\pi\ge 0}\;\langle C,\pi\rangle
\quad \text{s.t.}\quad
\pi\mathbf{1}=p_0,\;\; \pi^\top\mathbf{1}=p_1 .
\end{align}

\paragraph{Graph encoding of pairwise costs (using $x$ as current node and $y$ as next node).}
To express pairwise assignment costs via \emph{state costs} on a graph, we introduce intermediate nodes
$E_{ij}$ for every pair $(i,j)$ and build a directed graph $\gG=(V,E)$ with
\[
V=\{A_i\}_{i=1}^n \;\cup\; \{E_{ij}\}_{i,j=1}^n \;\cup\; \{B_j\}_{j=1}^n .
\]
Edges are
\[
(x,y)\in E \;\; \Leftrightarrow \;\;
\bigl(x=A_i,\; y=E_{ij}\bigr)\ \text{or}\ \bigl(x=E_{ij},\; y=B_j\bigr),\qquad \forall i,j,
\]
(and optionally self-loops $(x,x)$ for numerical stability).
We define the running \emph{state cost} $f:V\to\R_+$ by
\begin{align}
f(x) :=
\begin{cases}
C_{ij}, & x=E_{ij},\\
0, & x\in\{A_i\}_{i=1}^n \cup \{B_j\}_{j=1}^n .
\end{cases}
\label{eq:app_assignment_state_cost}
\end{align}
The endpoint distributions over $V$ are supported on the supply/demand partitions:
\begin{align}
p_0(x)=
\begin{cases}
(p_0)_i, & x=A_i,\\
0, & \text{otherwise},
\end{cases}
\qquad
p_1(x)=
\begin{cases}
(p_1)_j, & x=B_j,\\
0, & \text{otherwise}.
\end{cases}
\label{eq:app_assignment_endpoints}
\end{align}
Throughout, we use $x$ to denote the \emph{current} node and $y$ the \emph{next} node along a transition
(e.g., in discrete time $X_{t+1}=y$ given $X_t=x$, or in continuous time a rate $u_t(y,x)$ for jumps $x\to y$).

\paragraph{Decoding a soft assignment from learned dynamics.}
Let $(X_t)_{t=0}^T$ denote a rollout under the learned dynamics. We decode a soft plan
$\hat\pi\in\R_+^{n\times n}$ from the expected edge flux on the supply-to-intermediate edges:
\begin{align}
\hat\pi_{ij} \;\propto\; \E\!\left[\sum_{t=0}^{T-1}\mathbf{1}\{X_t=A_i,\;X_{t+1}=E_{ij}\}\right],
\label{eq:app_assignment_decode_flux}
\end{align}
followed by a normalization chosen so that $\hat\pi\mathbf{1}=p_0$ and $\hat\pi^\top\mathbf{1}=p_1$
(up to numerical tolerance). We then compare $\hat\pi$ against the oracle plan $\pi^\star$.

\paragraph{Instance generation.}
For each $n$, we generate synthetic instances by sampling (i) marginals from a Dirichlet distribution
(e.g., $p_0\sim\mathrm{Dir}(\alpha\mathbf{1})$, $p_1\sim\mathrm{Dir}(\alpha\mathbf{1})$ with $\alpha=1$), and
(ii) a structured cost matrix via 2D embeddings: sample points $u_i,v_j\in[0,1]^2$ uniformly and set
\begin{align}
C_{ij} := \|u_i-v_j\|_2 .
\label{eq:app_assignment_cost}
\end{align}
In practice, any other deterministic cost construction can be used instead of \eqref{eq:app_assignment_cost}.

\paragraph{Baselines.}
We compare against (i) the exact min-cost flow / OT solution of \cref{eq:app_assignment_ot},
used as ground truth $\pi^\star$.

\paragraph{Metrics.}
For each method returning a soft plan $\tilde\pi$, we report
\begin{align}
&\textbf{Cost:}\;\;\langle C,\tilde\pi\rangle,
\qquad
\textbf{Cost gap:}\;\;\frac{\langle C,\tilde\pi\rangle-\langle C,\pi^\star\rangle}{\langle C,\pi^\star\rangle},\\
&\textbf{Marginal error (TV):}\;\;
\mathrm{TV}(\tilde\pi\mathbf{1},p_0)+\mathrm{TV}(\tilde\pi^\top\mathbf{1},p_1),
\quad
\mathrm{TV}(p,q):=\tfrac12\|p-q\|_1,\\
&\textbf{Mean row entropy:}\;\;
\frac{1}{n}\sum_{i=1}^n H(\tilde\pi_{i:}),
\quad
H(r):=-\sum_{j=1}^n r_j\log(r_j+\epsilon),\\
&\textbf{Mass on optimal edges:}\;\;
\sum_{(i,j)\in \mathrm{supp}(\pi^\star)} \tilde\pi_{ij},
\qquad
\textbf{Edge overlap (Accuracy):}\;\;
\frac{|\{(i,\hat\sigma(i))\}\cap \mathrm{supp}(\pi^\star)|}{n}.
\label{eq:app_assignment_metrics}
\end{align}

\begin{table}[t]
\centering
\caption{Assignment task metrics}
\label{tab:app_assignment_metrics}

\begin{tabular}{cccccc}
\toprule
$n$ & \shortstack{Optimal\\Cost} & \shortstack{Assigned\\Cost} & \shortstack{Mass on \\ opt. selection} & \shortstack{Row \\ Entropy} & \shortstack{Accuracy} \\
\midrule
6  & 160 & 160 & 97.3\% & 0.07 & 100\% \\
8  & 205 & 205 &  94.2\% & 0.13 & 100\% \\
10 & 239 & 239 & 87.6\% & 0.16 & 100\% \\
20 & 538 & 546 & 85.0\% & 0.22 & 90\% \\
\bottomrule
\end{tabular}

\end{table}

\subsection{Supply Chain}\label{sec:sc-app}
To study the capacity of our model in navigating complex and sparse dynamics on a graph, we model a real-world supply-chain network \citep{kovacs_mcf_benchmarks}, as a directed  graph $\mathcal{G}=(\mathcal{X},\mathcal{E})$, where each node $x\in\mathcal{X}$ represents a facility/location (e.g., supplier, depot, port, retailer), and each directed edge $(x,y)\in\mathcal{E}$ represents an admissible shipment arc. 
As the scale of modern decision-making increases, there is an emerging need for effective  architectures to handle high-dimensional problems \citep{saravanos2025deep}.
The graph is illustrated in Figure \ref{fig:graph}.
We define the sparse CTMC reference generator $r_{\mathrm{ref}}\in\mathbb{R}^{|\mathcal{X}|\times |\mathcal{X}|}$ supported on $\mathcal{E}$: for $(x,y)\in\mathcal{E}$, $r(y,x):=(r_{\mathrm{ref}})_{yx}\ge 0$ denotes the base jump rate, and the diagonal is set by $(r_{\mathrm{ref}})_{xx}=-\sum_{y\neq x} r(y,x)$ so that columns sum to zero. This reference process captures the \emph{topology}, i.e., which moves are possible and a nominal routing bias (e.g., proportional to edge capacities or inverse travel time), without enforcing the boundary marginals. Table \ref{tab:roadflow_stats} summarizes the properties of the considered graph.

The boundary marginals $p_0,p_1\in\Delta^{|\mathcal{X}|-1}$, where $\Delta^{|\gX|-1}$ is the probability simplex. Mass on source nodes $\mathcal{S}$ and demand nodes $\mathcal{D}$, uniform or volume-weighted, we solve a finite-horizon transport problem over $T=100$ steps with uniform discretization, learning a topology-respecting stochastic routing policy whose rollouts start from $p_0$ and reach $p_1$, while optimizing operational objectives \emph{during transit}.

To reflect operational constraints beyond shortest-path routing, we include a state and distribution dependent running cost $f_t(x, p_t)$ that penalizes undesirable locations, such as congested hubs, encouraging trajectories to route around high-cost/high-occupancy nodes while still matching $(p_0,p_1)$ within the prescribed horizon. 
 Concretely, we penalize interaction-driven congestion by defining node occupancy \[ \mathrm{occ}_t(x) \;=\; \sum_{b=1}^B \mathbf{1}\{X_t^{(b)}=x\}, \quad t=1, \dots, T, x\in\gX\] excluding the source and target endpoint nodes, and charge each particle the exposure. 
This setting highlights where GSBoG is strongest: it (i) learns an \emph{executable policy over paths} rather than a static coupling, (ii) operates directly on the given sparse topology so every transition is feasible, and (iii) natively supports state-/path-dependent costs such as congestion induced by collective occupancy. By contrast, conventional graph OT and network-flow baselines typically output a static plan, which is not optimized for intermediate behavior and can concentrate traffic through hubs.  

\begin{table}[t]
\centering
\caption{Road-flow graph statistics (DIMACS min-cost flow instance).}
\label{tab:roadflow_stats}
\begin{tabular}{l r}
\toprule
Statistic & Value \\
\midrule
Nodes $N$ & 9559 \\
Sparsity & 99.97 $\%$ \\
Density (unique edges) & $3\times 10^{-4}$ \\
Mean out-degree / in-degree & 3.105 / 3.105 \\
Out-degree min / median / max & 1 / 3 / 6 \\
In-degree min / median / max & 1 / 3 / 6 \\
Arc cost min / median / mean / max & 1.0 / 250 / 293.3 / 4537 \\
Arc capacity min / median / mean / max & 120 / 480 / 459.6 / 480 \\
\bottomrule
\end{tabular}
\end{table}

\begin{table}[t]
\centering
\caption{Supply and demand nodes (net imbalances) for the road-flow instance.}
\label{tab:roadflow_supdem}
\begin{tabular}{l p{0.78\linewidth}}
\toprule
Type & Node ids (amount) \\
\midrule
Supply (+) & 6923 (+1920), 5780 (+1440), 8243 (+1080), 5250 (+960), 7093 (+480), 5570 (+480) \\
Demand (--) & 4455 (--1920), 7404 (--1440), 2490 (--1440), 1889 (--600), 5218 (--480), 2105 (--480) \\
\bottomrule
\end{tabular}
\end{table}

\begin{table}[H]
\centering
\caption{Capacity distribution over directed arcs.}
\label{tab:roadflow_caps}
\begin{tabular}{r r r}
\toprule
Capacity & \# arcs & Share \\
\midrule
120 & 476 & 1.60\% \\
240 & 674 & 2.26\% \\
360 & 2288 & 7.69\% \\
480 & 26330 & 88.45\% \\
\bottomrule
\end{tabular}
\end{table}

\begin{figure}[t]
    \centering
    \includegraphics[width=.5\linewidth]{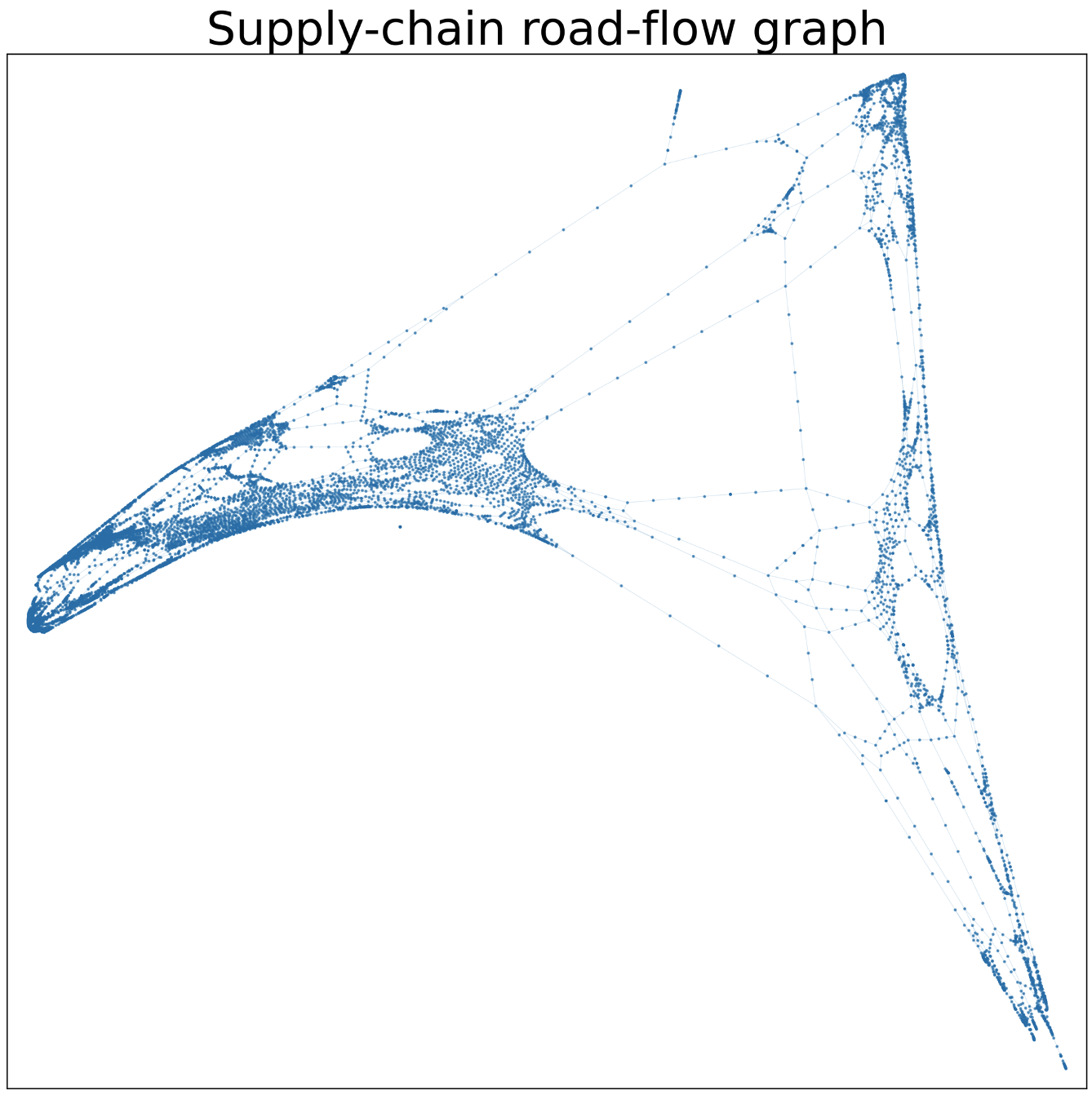}
    \caption{Graph of our supply demand setting.}
    \label{fig:graph}
\end{figure}

\paragraph{Baselines}\label{sec:sc_baselines}
We present the baselines used for the supply chain network experimental setup. All methods are evaluated using the same horizon $T=100$ with uniform discretization, and rollout budget $B=5,000$.
\\
\textbf{Schrödinger Bridge on graphs}
Let $\mathcal{G}=(V,E)$ be a weighted graph and let
$\rho_0,\rho_1$ be endpoint distributions.
\citet{chow2022dynamical} formulate the \emph{dynamical Schr\"odinger bridge on graphs} as
\begin{equation}
\label{eq:chow_graph_sbp}
\min_{\rho(\cdot),\,b(\cdot)}
\int_0^1 \frac12 \,\langle b(t), b(t)\rangle_{\rho(t)}\,dt
\quad\text{s.t.}\quad
\begin{cases}
\dot \rho(t) + \mathrm{div}_{\mathcal{G}}\!\Big(\rho(t)\big(b(t)-\beta\nabla_{\mathcal{G}}\log \rho(t)\big)\Big)=0,\\
\rho(0)=\rho_0,\qquad \rho(1)=\rho_1,
\end{cases}
\end{equation}
where $\beta>0$ controls the entropic/noise level and $\nabla_{\mathcal{G}}$,
$\mathrm{div}_{\mathcal{G}}$ are graph gradient/divergence operators; the inner product
$\langle\cdot,\cdot\rangle_{p}$ encodes a discrete $W_2$-type geometry on $\Delta^{N-1}$.

\emph{Hamiltonian flow on the simplex.}
Introducing a node potential (co-state) $\phi(t)\in\mathbb{R}^N$, the first-order
optimality conditions of~\cref{eq:chow_graph_sbp} can be written as a two-point
boundary-value {Hamiltonian system}:
\begin{equation}
\label{eq:ham_flow_simplex}
\dot \rho(t)=\nabla_{\phi}\,\mathcal{H}\!\big(\rho(t),\phi(t)\big),
\qquad
\dot\phi(t)=-\nabla_{p}\,\mathcal{H}\!\big(\rho(t),\phi(t)\big),
\qquad
\rho(0)=\rho_0,\ \rho(1)=\rho_1,
\end{equation}
for a Hamiltonian $\mathcal{H}(p,\phi)$ induced by the discrete kinetic energy and the
entropic/noise term. Thus, the bridge is obtained by solving a {symplectic BVP} whose
marginals $\rho(t)$ evolve as a Hamiltonian flow on the probability simplex $\Delta^{N-1}$.

\emph{Computation.}
Numerically,~\cref{eq:ham_flow_simplex} is a coupled ODE system in $\mathbb{R}^{2N}$ with
one pair $(\rho_i(t),\phi_i(t))$ per node, coupled through the graph operators, yielding a
large boundary-value problem whose dimension scales with $N$.

\textbf{Attraction flow.}
We include a graph-native \emph{attraction-flow} baseline that iteratively transports mass
toward the target marginal using a Laplacian potential.
This baseline constructs a $T$-hop transport process by repeatedly pushing mass along edges in the direction of a target-attracting potential.
At each hop $t=0,\dots,T-1$, a node potential $\phi_t$ is computed from the discrepancy
between the current marginal $\rho_t$ and the target $\rho_1$ (via a Laplacian/Poisson solve on the graph).
Mass is then routed from a node $x$ to a neighbor $y$ in proportion to the downhill potential drop
$\phi_t(x)-\phi_t(y)$, producing a directed flow $\rho_t(x\!\to\!y)$.
To obtain an \emph{actionable} policy for rollout-based evaluation, we apply a \emph{Markovian embedding} by normalizing the outgoing flow,
$P_t(y\mid x)\propto \rho_t(x\!\to\!y)$ (with optional self-loop/fallback mass when needed), and simulate trajectories via $x_{t+1}\sim P_t(\cdot\mid x_t)$.
Embedding parameters are selected to minimize terminal mismatch (TV) under the same horizon and rollout budget used for GSBoG, ensuring a fair comparison.

\paragraph{Time-expanded dynamic $\gW_1$ (min-cost flow / discrete Benamou--Brenier).}
We include an \emph{exact} dynamic $\gW_1$ baseline obtained by solving a single \emph{time-expanded} min-cost flow problem over a horizon of $T$ steps.
Given a directed graph $\mathcal{G}=(\gX,\gE)$ with per-edge capacities $\mathrm{cap}(x\!\to\!y)$ and costs $c(x\!\to\!y)$, we build a layered network with nodes $(t,x)$ for $t=0,\dots,T$ and $x\in \gX$.
For each original edge $(x\!\to\!y)\in \gE$, we add a transport arc $(t,x)\to(t+1,y)$ for every $t=0,\dots,T-1$, inheriting the same capacity and cost.
To allow \emph{waiting} (so the per-step policy is always well-defined), we also add holdover arcs $(t,x)\to(t+1,x)$ with cost $0$ and capacity $M$.

Let $\rho_0,\rho_1\in\Delta^{|\gX|-1}$  denote endpoint marginals, and let $M$ be the total transported mass (used to scale probabilities into flow units).
We solve for nonnegative per-step arc flows $q_t(x\!\to\!y)$ and holdover flows $q_t(x\!\to\!x)$:
\begin{align}
\label{eq:time_expanded_w1}
\min_{q\ge 0}\;
\sum_{t=0}^{T-1}\Bigg(\sum_{(x\to y)\in \gE} c(x\!\to\!y)\, q_t(x\!\to\!y)\Bigg)
\quad \text{s.t.}&\quad \\
\begin{cases}
\sum_{y:(x\to y)\in \gE} q_0(x\!\to\!y) + q_0(x\!\to\!x) = M\,p_0(x), & \forall x,\\[2pt]
\sum_{y:(x\to y)\in \gE} q_t(x\!\to\!y) + q_t(x\!\to\!x)
=\sum_{y:(y\to x)\in \gE} q_{t-1}(y\!\to\!x) + q_{t-1}(x\!\to\!x), & \forall x,\ t=1,\dots,T-1,\\[2pt]
\sum_{y:(y\to x)\in \gE} q_{T-1}(y\!\to\!x) + q_{T-1}(x\!\to\!x) = M\,p_1(x), & \forall x,\\[4pt]
0 \le q_t(x\!\to\!y) \le \mathrm{cap}(x\!\to\!y), & \forall (x\!\to\!y)\in \gE,\ \forall t,\\[2pt]
0 \le q_t(x\!\to\!x) \le M, & \forall x,\ \forall t.
\end{cases}
\end{align}
This yields a time-indexed feasible flow $\{q_t\}_{t=0}^{T-1}$ whose induced node marginals match $p_0$ at $t=0$ and $p_1$ at $t=T$.
To obtain an actionable policy, we again use a \emph{Markovian embedding} $P_t(y\mid x)\propto \rho_t(x\!\to\!y)$ and roll out trajectories with $x_{t+1}\sim P_t(\cdot\mid x_t)$,
tuning any embedding smoothing/fallback mass to minimize terminal mismatch (TV) under the same rollout budget as GSBoG.

\emph{Remark (capacity metrics and robustness).}
The min-cost flow enforces capacities on the \emph{fractional} plan $\rho_t$, but our reported capacity statistics are computed from \emph{sampled rollouts} of the embedded policy.
Notably, although the underlying flux can be capacity-feasible in the time-expanded formulation, while its stochastic execution concentrates mass on a small set of bottleneck edges
In particular, the flow solution saturates many edges ($\rho_t(x\!\to\!y)\approx \mathrm{cap}(x\!\to\!y)$), stochastic sampling yields transient edge loads exceeding capacity, so rollout-based $\max(\text{flow}/\text{cap})$ may be $>1$ despite a feasible underlying flow.
In contrast, GSBoG's congestion-aware objective typically avoids widespread saturation and leaves more slack, leading to policies that are empirically more \emph{robust} to rollout variability while still matching the endpoints.

\paragraph{Metrics.}
Let $\{X_t^{(b)}\}_{t=0}^T$ for $b=1,\dots,B$ denote $B$ rollout trajectories over a common horizon of $T$ discrete steps.
Let $n_t(x) := \sum_{b=1}^B \mathbf{1}\{X_t^{(b)}=x\}$ be the number of particles at node $x$ at step $t$ (so $\sum_x n_t(x)=B$),
and define the empirical marginal
\[
\hat p_t(x) := \frac{1}{B}\sum_{b=1}^B \mathbf{1}\{X_t^{(b)}=x\} 
\]
\begin{itemize}
\item\textbf{Terminal mismatch.}
We form the empirical terminal distribution $\hat p_T$ and report
\[
\mathrm{TV}(\hat p_T,p_1) \;=\; \frac12 \sum_{x\in\gX}\big|\hat p_T(x)-p_1(x)\big|.
\]

\item\textbf{Occupancy and congestion.}
We use the occupancy $\mathrm{occ}_t(x)$, and report peak occupancy $\max_{t,x}\mathrm{occ}_t(x)$ and mean congestion over the top-100 most occupied nodes,
\[
\frac{1}{T\,|I_{100}|}\sum_{t=1}^T \sum_{x\in I_{100}} \mathrm{occ}_t(x),\quad
I_{100} := \mathrm{Top100}\Big(\{S(x)\}_{x\in\gX}\Big),\;\; S(x):=\sum_{t=1}^T \mathrm{occ}_t(x),
\]
excluding the endpoint nodes (source and target).
Note, that since all methods use the same $B$, these quantities are directly comparable among all methods.

\item\textbf{Capacity feasibility.}
    Given per-step edge capacities $\mathrm{cap}(e)>0$, we estimate the per-step edge flow from rollouts as
    \[
    \widehat{\mathrm{flow}}_t(e) \;:=\; \sum_{b=1}^B \mathbf{1}\{(X_t^{(b)},X_{t+1}^{(b)})=e\},
    \]
    We report violations using
    \[
    V_{\max} := \max_{t,e}\frac{\widehat{\mathrm{flow}}_t(e)}{\mathrm{cap}(e)},
    \]
    A method is \emph{capacity-feasible} if $V_{\max}\le 1$
    \end{itemize}

\emph{Remark} We do not explicitly optimize edge-capacity constraints; however, congestion-aware
policies often reduce violations implicitly by spreading flow over alternative routes rather than
collapsing onto bottlenecks.

\paragraph{GSBoG implementation and hyperparameters}
We implement GSBoG in PyTorch using sparse graph operations over the directed edge list.
At each iteration, we simulate batches of $B$ particles for $T$ discrete time steps (uniform step size $\Delta t$) under the current forward/backward rate parametrizations induced by the learned potentials, and estimate intermediate occupancies $\hat p_{t}$ by empirical counts.
Potentials $Y^\theta(t,x)$ and $\Yh^\phi(t,x)$ are modeled with a shared node embedding and a small MLP conditioned on (normalized) time $t$, producing scalar outputs per node; controlled rates are formed locally on edges as $u_t(y,x)=r_t(y,x)\exp(Y^\theta(t,y)-Y^\theta(t,x))$ (and analogously for $\widehat u$), ensuring graph-feasible transitions without dense solvers.
We train by alternating forward/backward updates with the gIPF endpoint likelihood objective and the TD regularizer with $\lambda_{\mathrm{TD}}\in\{0, 0.05, 0.1, 0.2, 0.3, 0.5\}$, using AdamW with learning rate $\eta=5\cdot10^{-5}$.
The results in Table \ref{tab:supply_chain_results} were obtained with $\lambda_{\mathrm{TD}}=0.2$.

\paragraph{Ablation study on $\lambda_{\text{TD}}$}
Figure~\ref{fig:lambda_td_ablation} studies the effect of the tradeoff parameter $\lambda_{\mathrm{TD}}$ on terminal accuracy and intermediate crowding. 
As $\lambda_{\mathrm{TD}}$ increases, the peak occupancy decreases monotonically (from $\sim320$ to $\sim240$), indicating that the induced dynamics spread mass more evenly and avoid bottlenecks. 
This reduction in crowding comes at the cost of worse terminal matching: the total variation (TV) between the final distribution and the target increases (from $\sim2.3\times10^{-2}$ to $\sim3.7\times10^{-2}$). 
The curves exhibit a clear knee around $\lambda_{\mathrm{TD}}\approx 0.2$, where congestion drops sharply for only a moderate increase in TV; beyond this point, further congestion decrease is marginal while terminal mismatch continues to grow. 
Overall, $\lambda_{\mathrm{TD}}$ controls an accuracy--feasibility tradeoff, with $\lambda_{\mathrm{TD}}\approx 0.2$ offering a favorable balance in this experiment.
\begin{figure}[H]
    \centering
    \includegraphics[width=0.5\linewidth]{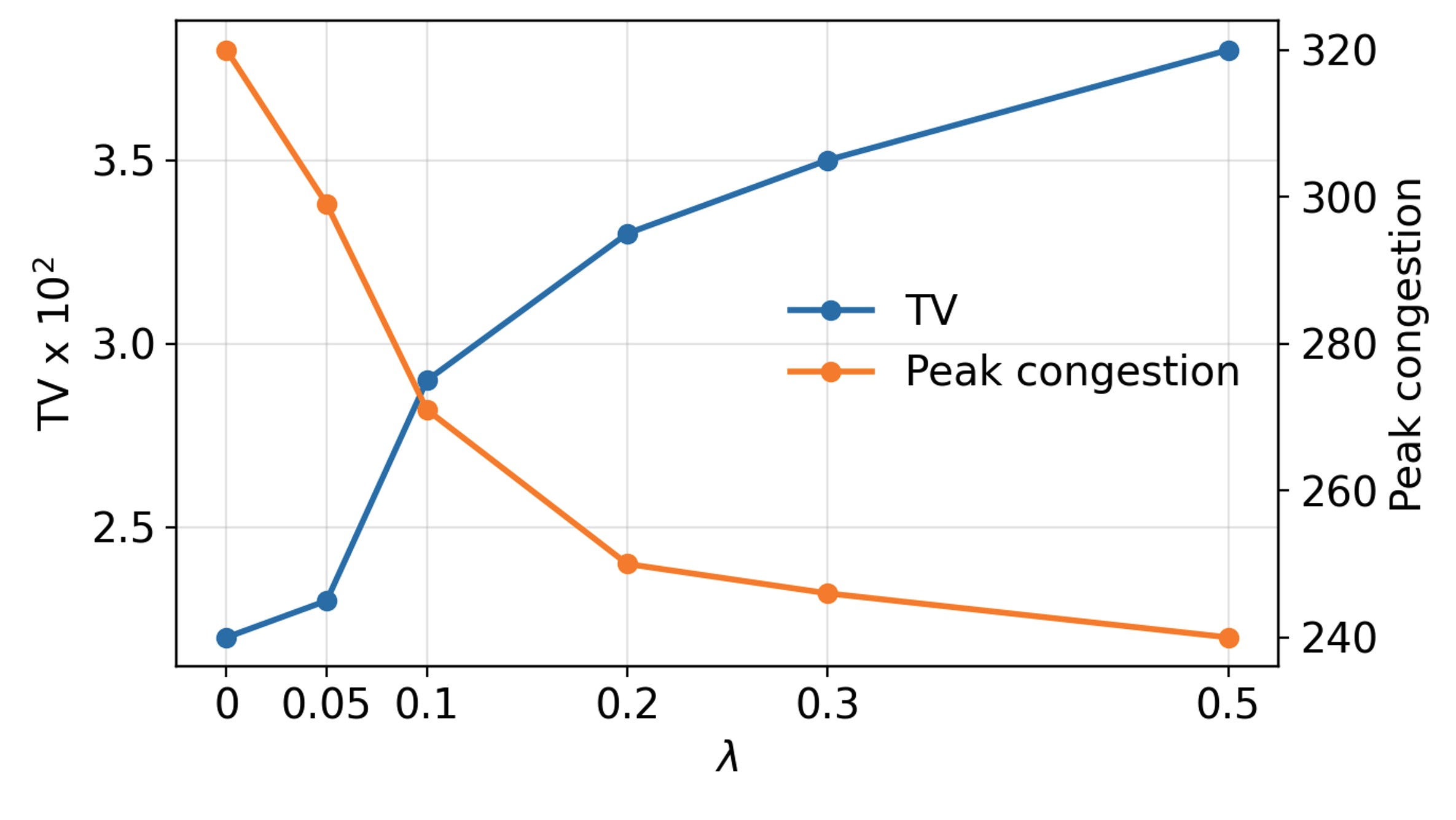}
    \caption{Ablation on $\lambda_{\mathrm{TD}}$ showning decrease of the peak congestion, and a slight increase in total variation.}
    \label{fig:lambda_td_ablation}
\end{figure}

\subsection{Discretized Molecular Dynamics}\label{sec:md-app}
\paragraph{Chignolin}
Chignolin (CLN025 variant) is a widely used benchmark for molecular simulation because it is a minimal, 10-residue peptide (166 atoms) that nonetheless preserves the essential features of protein folding \citep{lindorff2011fast}.
Specifically, it exhibits well-separated unfolded and folded metastable configurations connected by rare, activated transitions, making it a useful testbed for modeling methods that aim to resolve thermodynamics and kinetics of protein folding while remaining computationally tractable.

\paragraph{Reference simulation and MSM construction}
The reference molecular dynamics (MD) simulation trajectories at $T_\text{sim} = 300$ K for chignolin were obtained from \citet{iida2022zenodo}, and conformational states were discretized into MSM using PyEMMA \citep{scherer2015pyemma}. As input features, we used the backbone torsion angles ($\phi$ and $\psi$) encoded as their sine and cosine components. We used k-means clustering to partition the torsion features into $|\mathcal{X}| = 500$ microstate clusters. For computational efficiency, clustering was performed on a subsampled set of frames with a stride of 10. Then, an MSM was estimated from discretized trajectories at a fixed lag time of 10 steps (in units of the sampling interval, corresponding to 1 ns simulation time). Transition counts between microstates separated by this lag time were accumulated across trajectories and normalized to obtain the microstate-to-microstate transition probability matrix. From this, the stationary distribution $\pi_i$ over microstates $i \in \mathcal{X}$ was computed, and microstate free energies (in units of $k_\text{B} T$) were obtained as $F_i/k_\text{B}T_\text{sim} = - \log \pi_i$.

To define folded/unfolded basins of microstates, we start from computing the aligned root-mean-squared deviation (RMSD) of the ten C$_\alpha$ atoms between each frame $\tilde{X}_i \in \mathbb{R}^{10 \times 3}$ and the native folded structure (PDB 5AWL) $\tilde{Y} \in \mathbb{R}^{10 \times 3}$. The aligned RMSD is defined as
\begin{equation}
\mathrm{RMSD}(X, Y) := \sqrt{\frac{1}{N} \min_{R \in \mathrm{SE}(3)} \Vert X \circ R - Y \Vert_F^2}
\label{eq:rmsd}
\end{equation}
for 3-D point clouds $X, Y \in \mathbb{R}^{N \times 3}$, where $R = (Q, t) \in \mathrm{SE}(3)$ with $Q \in \mathrm{SO}(3)$, $t \in \mathbb{R}^3$ acts on $X$ as $X \circ R := XQ + \bm{1}t^\top$.
With an initial RMSD threshold of $d_\text{folded} = 0.09$ nm and $d_\text{unfolded} = 0.5$ nm, for each microstate, we compute folded/unfolded probabilities as a fraction of frames in that microstate having RMSD smaller than $d_\text{folded}$ and larger than $d_\text{unfolded}$, respectively.
Finally, we assign microstates with folded probability higher than 0.9 to the folded basin $\mathcal{F}$, and unfolded probability higher than 0.8 to the unfolded basin $\mathcal{U}$.
Endpoint distributions $\mu$ and $\nu$ are defined by microstate probabilities restricted to unfolded and folded basins: $\mu(i) = \pi_i \bm{1}[i \in \mathcal{U}] / \sum_{k \in \mathcal{U}} \pi_k$ and $\nu(i) = \pi_i \bm{1}[i \in \mathcal{F}] / \sum_{k \in \mathcal{F}} \pi_k$.
Additionally, we compute a mean C$_\alpha$ RMSD per microstate $r_i$ by averaging the C$_\alpha$ RMSD of the frames in the microstate.

Prior to defining the CTMC reference dynamics, we prune long-range edges in the reference MSM graph that make large geometric jumps.
Given the MSM transition matrix $T^\text{MSM}$ at the chosen lag time, we remove transitions whose RMSD jump exceeds a threshold $\delta = 0.1$ nm, i.e., we set $T_{ij}^\text{MSM} \gets 0$ when $|d_i - d_j| > \delta$.
To avoid disconnecting the graph, for any state $i$ whose outgoing (or incoming) edges would all be removed by the previous criterion, we retain the single edge with the smallest RMSD difference.
The resulting sparse transition matrix $T_{ij}^\text{MSM,pruned}$ is then converted into a sparse CTMC generator by assigning off-diagonal rates proportional to the surviving transition probabilities and enforcing the column sum to be zero: $(Q_\text{ref})_{ij} = \frac{1}{\tau} T_{ij}^\text{MSM,pruned}$ for $i \neq j$ and $(Q_\text{ref})_{ii} = -\sum_{j \neq i} (Q_\text{ref})_{ij}$, where $\tau = 0.1$ is a hyperparameter that relates CTMC timescale to the MSM lag time. 
This construction is equivalent to a first-order mapping $T^\text{MSM} = e^{\tau Q} \approx I + \tau Q$, while preserving the pruned MSM topology and preventing discrete trajectories from making large structural transitions.

\paragraph{Baselines}
We compare GSBoG against two graph-native transport baselines and a kinetics-inspired biasing baseline.
First, we include the dynamical Schr\"odinger bridge formulation on finite graphs of \citet{chow2022dynamical} (GraphSB), which solves a simplex-valued boundary-value system to produce a time-inhomogeneous Markov process matching the prescribed endpoints.
Second, we include an \emph{attraction-flow} baseline that induces a $T$-step Markov process by turning a static edge flow / potential-driven routing rule into a row-stochastic transition kernel and rolling it out for $T$ steps.

In addition to the SB-based baselines, we include a committor-guided biased dynamics baseline.
The forward committor for microstate $i$ is defined as $q(i) := \mathbb{P}_i(\tau_\mathcal{F} < \tau_\mathcal{U})$, i.e., the probability that a trajectory initiated at $i$ reaches $\mathcal{F}$ before $\mathcal{U}$, with boundary conditions $q(i) = 0$ for $i \in \mathcal{U}$ and $q(i) = 1$ for $i \in \mathcal{F}$ \citep{vanden2010transition}.
For interior states $\mathcal{I} = \mathcal{X} \setminus (\mathcal{U} \cup \mathcal{F})$, $q$ satisfies the Dirichlet problem $q(i) = \sum_j T_{ij}^\text{MSM} q(j)$, $i \in \mathcal{I}$.
To bias sampling toward folding, we apply a Doob-$h$ transform \citep{doob1957conditional} that reweights transitions toward larger committor values: $T_{ij}^{(h)} = T_{ij}^\text{MSM} h(j) / \sum_k T_{ik}^\text{MSM} h(k)$, and we use trajectories sampled from this biased Markov chain.

\paragraph{Metrics}
We report the maximum state free energy attained during rollout (free energy barrier) and the fraction of trajectories that correctly terminate in the folded basin $\mathcal{F}$ (fold rate) in \cref{tab:chignolin_metrics}.
More specifically, let $F(x)/k_BT_{\mathrm{sim}}=-\log \pi_x$ be the reference MSM microstate free energy.
For a rollout trajectory $(X_t)_{t=0}^T$, we define its barrier as
\[
\Delta F(X_{0:T}) := \max_{0\le t\le T} F(X_t) - \min_{x\in \mathrm{supp}(\mu)\cup \mathrm{supp}(\nu)} F(x),
\]
Thus, lower $\Delta F_{\mathrm{eff}}$ indicates the induced dynamics reaches the folded basin while avoiding high-$F$ intermediates under the \emph{reference} MSM energies.
We also report the C$_\alpha$ RMSD averaged across independent trajectories at each rollout time in \cref{fig:chignolin}.

\paragraph{GSBoG implementation and hyperparameters}
We run GSBoG directly on the MSM microstate graph, with nodes given by MSM microstates and edges given by the nonzero support of the MSM transition matrix.
Training follows our alternating forward/backward IPF scheme with an additional TD-style penalty to incorporate nontrivial $f$ when used: we repeatedly (i) roll out batches of trajectories under the current forward (resp.\ backward) controlled dynamics, (ii) form the corresponding IPF losses from the discrete-time generator/Dynkin identities, and (iii) update $(Y,\widehat Y)$.
We use a fixed horizon of $T=200$ steps, $B=1024$ rollouts, $100$ IPF iterations, the value for the $\lambda_{\text{TD}}$ was set to $0.2$, AdamW optimization with learning rate $\eta=10^{-4}$, and an MLP parameterization.

%% file: RelatedWorks.tex
\section{Extended Related Works}
\paragraph{Schrödinger Bridge}
Recently, generative modeling has seen a wave of principled methods rooted in Optimal Transport \citep{villani2009optimal}. A particularly influential formulation is the Schrödinger Bridge (SB; \citet{schrodinger1931umkehrung}). SB became especially prominent in generative modeling after work introduced an Iterative Proportional Fitting (IPF) training procedure—essentially a continuous-state analogue of Sinkhorn—for solving the dynamic SB problem \citep{de2021diffusion,vargas2021solving}. Conceptually, SB extends standard diffusion modeling beyond fixed endpoint choices by enabling transport between arbitrary distributions $\pi_0$ and $\pi_1$ via fully nonlinear stochastic dynamics, and it identifies the (unique) path measure that minimizes a kinetic-energy–type objective.
The Schrödinger Bridge problem \citep{schrodinger1931umkehrung} in the path measure sense seeks the optimal measure $\sP^\star$ that minimizes the following minimization 
\begin{equation}
    \min_\sP \mathrm{KL}(\sP|\sQ),      \quad \sP_0=\pi_0,  \ \sP_1 = \pi_1
\end{equation}
where $\sQ$ is a Markovian reference measure. Hence the solution of the dynamic SB $\sP^\star$ is considered to be the closest path measure to $\sQ$.
Another formulation of the dynamic SB crucially emerges by applying the Girsanov theorem framing the problem as a Stochastic Optimal Control (SOC) Problem \citep{chen2016relation,chen2021likelihood}.  
\begin{align}
    \begin{split}
    \min_{u_t, p_t} \int_0^1 \mathbb{E}_{p_t}&[\|u_t\|^2] dt \quad
        \text{ s.t. }  \ \frac{\partial p_{t}}{\partial t} = -\nabla\cdot (u_t p_{t}) + \frac{\sigma^2}{2}\Delta p_{t}, \quad \text{ and } \quad p_0 = \pi_0,\quad p_1 = \pi_1
    \end{split}
\end{align}
Recall, the connection between our GSB objective and the corresponding SOC problem presented in Sec. \ref{sec:soc-app}.
Finally, note that the static SB is equivalent to the entropy regularized OT formulation \citep{pavon2021data, nutz2021introduction, cuturi2013sinkhorn}.
\begin{equation}\label{eq:EOT}
    \min_{\pi\in\Pi(\pi_0, \pi_1)}\int_{\sR^d\times\sR^d} \|x_0-x_1\|^2 d\pi(x_0, x_1) + \epsilon \mathrm{KL}(\pi|\pi_0\otimes\pi_1)
\end{equation}
This regularization term enabled efficient solution through the Sinkhorn algorithm and has presented numerous benefits, such as smoothness, and other statistical properties \citep{ghosal2022stability, leger2021gradient, peyre2019computational}.

\paragraph{Iterative Proportional Fitting (IPF) for continuous Schr\"odinger Bridges.}
A standard way to view the Schr\"odinger Bridge (SB) is as a \emph{KL projection on path space}:
among all path measures whose endpoints match $(\pi_0,\pi_1)$, SB selects the one closest (in relative entropy) to a reference Markov process.
This variational structure naturally yields \emph{Iterative Proportional Fitting} (IPF), i.e., alternating KL projections onto the sets of path measures that satisfy one endpoint constraint at a time.
In finite spaces, this specialization recovers the classical Sinkhorn/iterative scaling procedure for the Schr\"odinger problem and entropic OT, equivalently iterating the Schr\"odinger system via multiplicative reweighting of forward/backward potentials.
In continuous-state diffusions, IPF can be interpreted as alternately refining the forward and backward dynamics (or their associated potentials) so that the induced time-marginals progressively enforce the endpoint constraints, connecting SB computation to stochastic control and PDE/FBSDE representations.
Recent generative-modeling work focuses on making these IPF updates practical in high dimension by replacing exact potential/drift updates with learned approximations.
Diffusion Schr\"odinger Bridge (DSB; \citep{de2021diffusion}) explicitly frames training as an approximation to IPF, where each iteration fits forward/backward score/drift models so that terminal marginals better match the prescribed endpoints.
Complementary approaches reinterpret the alternating updates through likelihood or divergence-based objectives:
\citet{vargas2021machine} propose a maximum-likelihood route to SB learning that can be cast as an alternating procedure enforcing endpoint consistency, while SB-FBSDE \citep{chen2021likelihood} leverages a nonlinear Feynman--Kac/FBSDE formulation to obtain tractable training losses without gridding the state space.
Deep Generalized Schr\"odinger Bridge (DeepGSB; \citep{liu2022deepgeneralizedschrodingerbridge}) further emphasizes that alternating minimization between parametrized forward/backward path measures corresponds to iterative KL projection---hence IPF---and extends this template to mean-field interactions, yielding a TD-learning-like computational structure.
Beyond generative modeling, related sample-based and data-driven variants adapt Fortet/Sinkhorn-style iterations when only samples from the endpoint distributions are available \citep{pavon2021data}.

\paragraph{Iterative Markovian Fitting (IMF) for continuous Schr\"odinger Bridges.}
More recently, in continuous spaces substantial progress has been made on scalable Schrödinger bridge (SB) solvers. 
\citet{peluchetti2023diffusion} propose a \emph{Markovian projection} viewpoint for SBs, providing a principled route to Markovian dynamics that reproduce target marginals. 
Building on this interpretation, Diffusion Schrödinger Bridge Matching (DSBM; \citep{shi2023diffusion}) uses Iterative Markovian Fitting (IMF) to approximate the SB solution. 
Relatedly, \citet{de2023augmented} study flow- and bridge-matching processes and introduce modifications aimed at preserving coupling information, demonstrating efficient learning on mixtures of image-translation tasks. 
$SF^2$-$M$ \citep{tong2023simulation} provides a simulation-free objective for inferring stochastic dynamics and is effective in practice for SB problems, while GSBM \citep{liu2024gsbm} extends distribution matching with mechanisms to incorporate task-specific state-dependent costs. 
Beyond solvers that primarily target optimal couplings from unpaired marginals, \citet{somnath2023aligned} and \citet{liu20232} consider \emph{bridge matching} in settings with \emph{a priori} paired data (e.g., clean/corrupted images or paired biological modalities), where the goal is to exploit the given alignment rather than infer it from scratch via a static SB. 
Finally, several works extend to multi-marginal settings \cite{theodoropoulos2026momentum} or  improve computational efficiency of matching-based SB pipelines, for instance via lightweight solvers using Gaussian-mixture parameterizations \citep{gushchin2024light, rapakoulias2024go}, and via feedback-style formulations \citep{theodoropoulos2024feedback}.

\textbf{Discrete SB with CTMC dynamics.}
Recent discrete-space SB solvers also adopt continuous-time Markov chain (CTMC) dynamics, broadly aligned with our control-theoretic perspective.
However, unlike our setting, these approaches typically do not cast transport as particle motion constrained by a {fixed sparse routing topology}, and they do not naturally support application-specific {state-dependent running costs} for shaping intermediate behavior.
Algorithmically, the two works below solve the SB problem via \emph{Iterative Markovian Fitting} (IMF), i.e., the Markov-chain analogue of iterative fitting for SB.
For example, Discrete Diffusion Schrödinger Bridge Matching (DDSBM; \citep{kim2024discrete}) targets graph-to-graph translation by generating {edit sequences} in graph space (transitions correspond to edit operations), rather than edge-feasible routes on a prescribed logistics network.
Relatedly, categorical SB methods for generic discrete domains (e.g., token/codebook spaces) similarly rely on discrete-time IMF-style iterative fitting for translation and matching, but are not posed as topology-constrained routing with state-cost shaping \citep{ksenofontov2025categorical}.
Concurrently, \citet{guo2026discrete} proposed a discrete Schr\"odinger Bridge formulation based on adjoint matching for CTMC dynamics.
While this is closely related to our control-theoretic viewpoint, their formulation focuses on particular choices of the reference transition structure, such as uniform base transition rates, and does not directly address sparse topology-constrained routing with application-specific state-dependent running costs.

\textbf{SB for topological signals on graphs.}
Topology-aware SB (TSBM; \citep{yang2025topological}) instead models the evolution of {continuous topological signals} over graphs and simplicial complexes using Laplacian/Hodge-driven diffusions (admitting closed-form Gaussian bridges in special cases and neural parameterizations more generally).
While it follows an SDE-based SB formulation and thus shares the continuous-state SB toolbox, including IPF-style training akin to \citep{de2021diffusion}, its focus is distinct from edge-by-edge routing semantics and state-cost-controlled particle flows on a fixed sparse transport network.
Consequently, its emphasis is on distribution matching for signals {defined on} a topology, rather than topology-constrained routing of mass {through} a sparse logistics network.